\documentclass[conference]{IEEEtran}
\usepackage{times}
\usepackage[numbers]{natbib}
\usepackage[bookmarks=true]{hyperref}
\usepackage{amsmath,amssymb, amsthm}
\usepackage{algorithm}
\usepackage{algorithmicx}
\usepackage{caption}
\usepackage{algpseudocode}
\usepackage{graphicx}
\usepackage{bibunits}
\defaultbibliography{references}
\defaultbibliographystyle{plainnat}
\usepackage{booktabs}
\usepackage{multirow}
\usepackage{multicol}
\usepackage{xcolor}
\usepackage{subcaption}
\usepackage{siunitx}
\usepackage{tabularew}
\newtheorem{theorem}{Theorem}[section]
\newtheorem{lemma}[theorem]{Lemma}

\newcommand{\MethodName}{\emph{Q2RL}}

\newcommand{\bcpi}{\pi_{\rm BC}}
\newcommand{\rlpi}{\pi_{\rm RL}}
\newcommand{\bcQ}{Q_{\rm BC}}

\definecolor{myblue}{RGB}{0, 114, 178}
\definecolor{myyellow}{RGB}{230, 159, 0}

\begin{document} 

\title{When Life Gives You BC, Make Q-functions: \\ Extracting Q-values from Behavior Cloning for On-Robot Reinforcement Learning}
\makeatletter
\renewcommand{\authorrefmark}[1]{\textsuperscript{#1}}
\makeatother
\author{\authorblockN{Lakshita Dodeja\authorrefmark{1}\textsuperscript{,}\authorrefmark{2},
Ondrej Biza\authorrefmark{1},
Shivam Vats\authorrefmark{2}, 
Stephen Hart\authorrefmark{1}, 
Stefanie Tellex\authorrefmark{1}\textsuperscript{,}\authorrefmark{2}, \\ 
Robin Walters\authorrefmark{3},
Karl Schmeckpeper\authorrefmark{1}, and
Thomas Weng\authorrefmark{1}}

\vspace{0.8em}
\authorblockA{
\begin{tabular}{ccc}
\authorrefmark{1}Robotics and AI Institute & 
\authorrefmark{2}Brown University & 
\authorrefmark{3}Northeastern University \\
Cambridge, MA, USA & 
Providence, RI, USA & 
Boston, MA, USA
\end{tabular}
}
}

\makeatletter
\patchcmd{\@maketitle}
  {\end{center}}  
  {%
    \end{center}  
    \vskip 0.4em
    {\centering
      \includegraphics[width=1.0\linewidth]{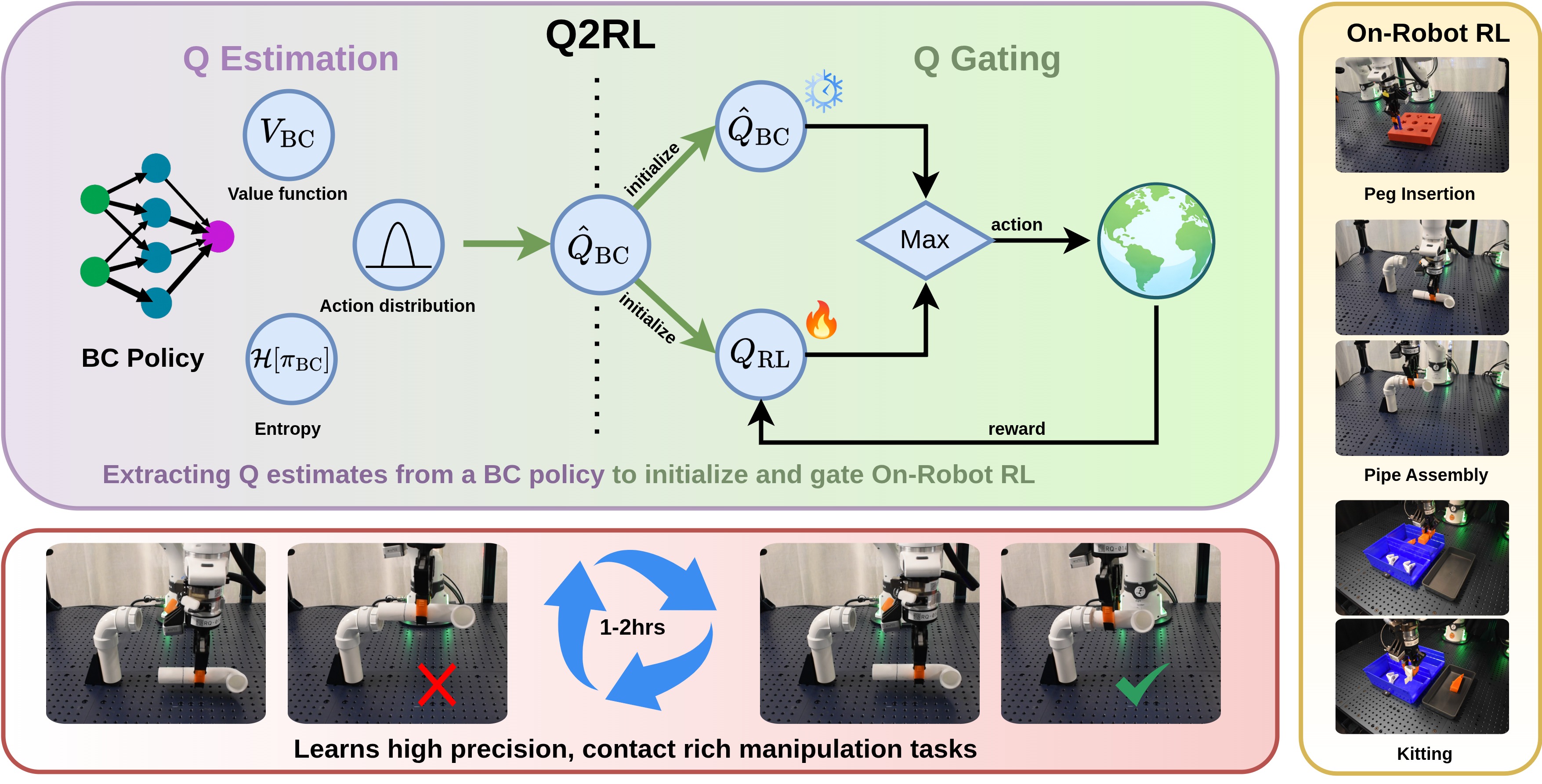}\par
      \captionof{figure}{\MethodName{} consists of Q-Estimation and Q-Gating. Q-Estimation extracts a Q-function from a BC Policy. 
      Then, during RL policy training, Q-Gating selects and executes the BC or RL action with highest respective Q-value, updating the RL policy on the collected interactions. \MethodName{} 
      learns contact-rich manipulation tasks in 1-2 hours of online interaction.
      \label{fig:teaser}}\par
    }
    \vskip -0.6em
  }
  {}{}
\makeatother

\maketitle
\setcounter{figure}{1}  

\begin{abstract}
Behavior Cloning (BC) has emerged as a highly effective paradigm for robot learning.
However, 
BC lacks a self-guided mechanism for online improvement after demonstrations have been collected.
Existing offline-to-online learning methods often cause policies to replace previously learned good actions due to a distribution mismatch between offline data and online learning.
In this work, we propose \textcolor{blue}{\MethodName{}}, \textcolor{blue}{Q}-Estimation and \textcolor{blue}{Q}-Gating from BC for \textcolor{blue}{R}einforcement \textcolor{blue}{L}earning, an algorithm for efficient offline-to-online learning.
Our method consists of two parts: (1) \emph{Q-Estimation} extracts a Q-function from a BC policy using a few interaction steps with the environment,
followed by online RL with (2) \emph{Q-Gating}, which switches between BC and RL policy actions based on their respective Q-values to collect samples for RL policy training. 
Across manipulation tasks from D4RL and robomimic benchmarks, \MethodName{} outperforms SOTA offline-to-online learning baselines on success rate and time to convergence.
\MethodName{} is efficient enough to be applied in an on-robot RL setting, learning robust policies for contact-rich and high precision manipulation tasks such as pipe assembly and kitting, in 1-2 hours of online interaction, achieving success rates of up to 100\% and up to 3.75x improvement against the original BC policy. Code and video are available at \url{https://q2rl.rai-inst.com/}.
\end{abstract}

\IEEEpeerreviewmaketitle

\begin{bibunit}

\vspace{-0.8em}

\section{Introduction}
\label{sec:intro}

Behavior Cloning (BC) has been highly successful in robot learning due to its simple supervised training objective, which directly models an action distribution conditioned on observed states. 
However, BC does not have a mechanism for self-guided online improvement.
When the training data for a BC policy is insufficient for the scope of the task, and/or changes in the environment induce even subtle distribution shifts, BC policy performance can collapse due to compounding errors from covariate shift~\cite{ross2010efficient}.

Several research directions have been proposed to address the shortcomings of behavior cloning.
Interactive imitation learning approaches finetune BC policies online, but require expert annotators~\cite{kelly19hgdagger}.
Offline reinforcement learning (RL) learns a value function offline~\cite{agarwal2020optimistic, kumar2020conservative} to guide online learning, but require pretraining on large datasets of positive and negative offline samples~\cite{levine2020offline} for good performance.
Offline-to-online learning approaches have also been proposed, but they often result in unlearning good BC actions~\cite{nakamoto2023cal, zhou2025efficient}.
Another challenge faced by offline-to-online methods is the tendency to produce unsafe behaviors due to training an RL policy from scratch. 
In the on-robot RL setting in which online RL is trained directly on the robot, executing random exploratory actions is unsafe, causing significant wear and tear on the robot, workspace, and manipulated objects. 

In this paper, our objective is to combine the benefits of pre-training a BC policy with the iterative improvement capability of online RL.
We aim to keep good BC policy behavior, and when BC performs poorly, post-train with RL to find better actions. 
We propose Q-Estimation and Q-Gating from BC for Reinforcement Learning (\MethodName{}), an algorithm for efficient offline-to-online learning. 
In the \textit{Q-Estimation} phase, we estimate the Q-function of the pretrained BC policy from initial online rollouts, then use it to initialize off-policy reinforcement learning.
We are able to derive these estimated Q-values with just the action selection probability and entropy of the BC policy, \textit{without any additional information of the underlying policy class or training distribution}. 
To ensure a stable transition from offline pretraining to online RL, we propose \textit{Q-Gating}, which takes the Q-function from the Q-Estimation phase and uses it to guide online reinforcement learning.
Two copies of the Q-function are used: the initially extracted Q function is kept frozen to preserve good BC actions, and a learnable RL Q-function is optimized to search for potentially better actions. 
At each state, \MethodName{} compares these Q-estimates to selectively execute BC or RL actions, ensuring that the RL policy is able to initialize learning from good BC actions and explore online to improve beyond the BC policy.

We provide extensive experiments in simulation and real-world on-robot RL to validate the method.
\MethodName{} outperforms SOTA offline RL and offline-to-online methods across D4RL and robomimic benchmark tasks, improving the success rates from 50-60\% for the original BC policy up to 80-100\% for most environments.
Ablation experiments show that both Q-Estimation and Q-Gating are valuable components of the algorithm. 
Our real-world experiments show that \MethodName{} learns contact-rich, high-precision manipulation tasks in 1-2 hours of online interaction, outperforming baselines especially as tasks increase in difficulty, and 
improving the success rate compared to the original BC policies by up to 100\% and up to 3.75x improvement. 
Furthermore, our real-world experiments demonstrate that \MethodName{} can recover from distribution shifts in the environment.

Our contributions are as follows:
\begin{itemize}
    \item ``Q-Estimation'', an algorithm for estimating a Q-function from a behavior cloning policy and a few online rollouts. 
    Q-Estimation can be applied to any policy class that provides action likelihoods and entropy. 
    \item ``Q-Gating'', an algorithm that guides online reinforcement learning via a gating mechanism to select between BC and RL actions.
    \item Experiments demonstrating that \MethodName{}, which combines Q-Estimation and Q-Gating, outperforms baselines on D4RL and robomimic benchmarks.
    \item Real-world, on-robot reinforcement learning experiments with \MethodName{} that surpass the original BC policy's performance on contact-rich manipulation tasks and adapt to task distribution shifts unseen by the BC policy, in 1-2 hours of environment interaction. 
\end{itemize}

\section{Related Work}
\label{sec:related}

We review existing approaches to offline learning such as Behavior Cloning and offline RL, as well as offline-to-online approaches that combine offline learning with online RL. 
Then, we review on-robot reinforcement learning, which require sample-efficient techniques for real-world learning. 

\subsection{Offline Learning}
Offline learning in robotics is broadly divided into Imitation Learning based approaches~\cite{zhang2018deep,mandlekar2020learning} and Offline Reinforcement Learning approaches~\cite{kumar2020conservative, levine2020offline}.
Imitation learning approaches that train policies via supervised learning (i.e., Behavior Cloning) learn a direct mapping from states to an action distribution using demonstration data.
Offline RL methods learn value functions that estimate expected returns from data generated by a behavior policy.  
Offline RL techniques like CQL~\cite{kumar2020conservative} penalize out-of-distribution actions to address dataset shift, but this can introduce pessimistic bias in value estimates.
IQ-Learn~\cite{garg2021iq} learns soft Q-functions directly from demonstrations, enabling policy extraction through RL objectives. 
In contrast, our approach does not learn a new Q-function from demonstrations, but instead estimates the Q-values of an existing black-box BC policy to guide online improvement.
Furthermore, offline RL methods are more suited to learn from large and potentially suboptimal datasets~\cite{fujimoto2019benchmarking}, and not necessarily from robotics datasets where successful demonstrations are easier to curate~\cite{nasiriany2022learning, BerkeleyUR5Website}.
BC approaches can further be divided into policy classes that explicitly parameterize the action distribution, such as Gaussian or Gaussian mixture heads paired with any network backbone~\cite{mandlekar2021matters, sheebaelhamd2025quantization}, and policy classes that implicitly represent the action distribution using generative denoising processes such as diffusion \cite{chi2025diffusion,team2024octo} or flow-based models \cite{lipman2022flow, jiang2025streaming}. 
\MethodName{} only requires access to the action selection probabilities and the entropy, which is easier to obtain for policy classes that explicitly parameterize the action distribution.

\subsection{Offline-to-Online Learning}
Offline learning can provide strong initial performance from readily available datasets, but policies trained purely offline cannot improve further without additional supervision or online interaction~\cite{kelly19hgdagger,ross2011reduction}.
Calibrated Q-Learning (CalQL)~\cite{nakamoto2023cal} aims to enable a smooth transition from offline to online RL by calibrating offline Q-values against a reference policy, but can struggle with limited offline data.
WSRL~\cite{zhou2025efficient} mitigates this by adding a warm-up phase that seeds the online replay buffer with rollouts from the offline policy.
In contrast, other methods bypass offline RL by first pretraining an IL policy, then updating the behavior of the agent by adding corrective residual terms to the output action \cite{silver2018residual,johannink2019residual,yuan2024policy,dodeja2025accelerating}. 
These approaches require task-specific tuning of the residual action scale to ensure that the residual policy does not deviate from the BC policy, and can fail to optimize large action parameterizations. 
\MethodName{} does not use a residual formulation and allows the RL policy to learn completely different actions from the original BC policy, while still relying on the BC policy for guidance.
Imitation Bootstrapped RL (IBRL)~\cite{hu2023imitation}, most closely related to our approach, also combines a pretrained IL/BC policy with an RL policy and selects between their actions using a randomly initialized critic.
A key benefit of \MethodName{} is that it \textit{estimates the Q-function of the BC policy} from early online interaction, and uses it in conjunction with an RL Q-function to evaluate actions during online, off-policy reinforcement learning.
In our evaluations, we show that \MethodName{} outperforms IBRL in simulated and real-world experiments, and that \MethodName{} does not require seeding the online replay buffer with high quality demonstrations, unlike IBRL.

\subsection{On-Robot Reinforcement Learning} 
Online reinforcement learning on a real robot is a challenging but promising post-training strategy for achieving robust robotic policies~\cite{lei2026rl100performantroboticmanipulation}. 
Since random exploration is costly on hardware, practical methods must begin with reasonable initial performance and adapt sample-efficiently.
SERL~\cite{luo2024serl} introduces a system designed specifically for on-robot RL.
It provides a software suite to run online RL algorithms~\cite{haarnoja2018soft, yarats2021image, ball2023efficient}
using async actor and learner processes~\cite{agentlace2024}.
Extensions like HIL-SERL~\cite{luo2025precise} incorporate human interventions to further improve efficiency, while other approaches like GCR~\cite{11128466} use reward shaping to guide learning. 
\MethodName{} uses the async framework for on-robot RL from SERL, but leverages readily available BC policies to provide strong initial behavior, without additional reward engineering or human intervention.  


\section{Problem Formulation}

Assume we are given a behavior cloning policy $\bcpi$, pre-trained on a dataset $\mathcal{D}$ of successful human demonstrations for a task $\mathcal{M}_{\rm train} \in \mathfrak{M}$.
Here $\mathfrak{M}$ denotes a class of tasks that share the same success condition; in the case of task distribution shift, $\mathcal{M}_{\rm test}$ and $\mathcal{M}_{\rm train}$ are drawn from task class $\mathfrak{M}$.
The dataset consists of $M$ trajectories of observation-action tuples:
\begin{equation}
  \mathcal{D} = \left\{ \tau_j \right\}^M_{j=1}, \qquad \tau_j = \left\{ (o_{j,t}, a_{j,t}) \right\}^{T_j}_{t=1},
\end{equation}
where $T_j$ is the length of trajectory $\tau_j$, and $o_{j,t}, a_{j,t}$ are the observation and action, respectively.

Assume the BC policy $\bcpi$ performs sub-optimally at test time (i.e. failing to achieve the success condition), because the amount of training data or number of training epochs were insufficient, or due to distribution shifts induced by changes in environment conditions such as lighting variations, the presence of visual distractors, modified robot hardware or controllers, or other changes to environment dynamics.

\begin{algorithm}[b]
\small
\captionsetup{font=small}{
\caption{Q2RL}
\label{alg:one}
\begin{algorithmic}[1]
\State \textbf{Given:} behavior cloning policy $\bcpi$
\Statex
\State Roll out $\bcpi$ for $N$ environment steps
\State Estimate $\hat{Q}_{\rm BC}$ using Eq.~\ref{eq:Q_estimate}
\Comment{Q-Estimation}
\State Initialize $Q_{\rm RL}$ with $\hat{Q}_{\rm BC}$, Initialize $\rlpi$
\Statex
\For{$t=0 \dots T$ environment steps}
    \State $a_{\rm BC} \sim \bcpi(s_t)$
    \State $a_{\rm RL} \sim \rlpi(s_t)$

    \State Select $a_t$ using Eq.~\ref{eq:Q_gate}
    \Comment{Q Gating}
        
    \Statex
    \State Observe next state $s_{t+1}$ and reward $r_t$
    \State Store $(s_t, a_t, r_t, s_{t+1})$ in replay buffer $\mathcal{B}$
    \State Sample a minibatch $\{(s, a, r, s')\} \in \mathcal{B}$
    \State Update $Q_{RL}$, $\rlpi$ using off-policy RL
\EndFor
\end{algorithmic}
}
\end{algorithm}

Our objective is to improve performance on task $\mathcal{M}_{\rm test}$ beyond the original BC policy's performance.
To achieve this, we assume access to the test environment for online interactions and a sparse reward signal.
For this online phase, we formulate the task as a Markov Decision Process (MDP) $\mathcal{M} = \left \{ \mathcal{S}, \mathcal{A}, \mathcal{R}, \mathcal{T}, \rho_0, \gamma \right \}$, where $\mathcal{S} \in \mathbb{R}^n$ and $\mathcal{A} \in \mathbb{R}^m$ represent states and actions, $\mathcal{R} : \mathcal{S} \times \mathcal{A} \rightarrow \mathbb{R}$ is the reward function, $\rho_0$ is the probability distribution over initial states, and $\gamma \in [0, 1)$ is a discount factor.

\subsection{Background} Our proposed approach will involve estimating a Q-function, or the expected return from taking an action $a\in \mathcal{A}$ from state $s \in \mathcal{S}$, then following a policy $\pi$ thereafter: 
\begin{equation}
\label{eq:q_fn}
Q_\pi(s, a) = \mathbb{E}_{\tau \sim \pi} \left [ \sum^{\infty}_{t=0} \gamma^t r(s_t, a_t) \mid s_0 = s, a_0 = a \right ].
\end{equation}

The value function for policies is defined as the expectation over Q-values:
\begin{equation} 
\label{eq:value-Qfn}
V(s) = \mathbb{E}_{a \sim \pi}
\left[ Q(s,a) \right].
\end{equation}

Although $Q_\pi(s, a)$ is a standard action-value function, it can also be interpreted as defining an energy-based model over actions. In particular, treating $E(s, a) = -Q(s, a)$ as an energy yields a Boltzmann policy

\begin{equation}
\label{eq:qboltzmann}
\pi(a \mid s) \propto \exp\left(\frac{1}{\alpha}Q(s, a)\right ),
\end{equation}
where $\alpha$ is a temperature parameter.

\section{Q2RL: Q-Estimation and Q-Gating from Behavior Cloning for Reinforcement Learning}
\label{sec:method}

In this section, we present \MethodName{}, an algorithm for estimating Q-values from a BC policy to guide online reinforcement learning.
\MethodName{} consists of two phases: Q-Estimation (Sec.~\ref{sec:Q_estimation}) and Q-Gating (Sec.~\ref{sec:Q_gating}).  
See Alg.~\ref{alg:one} for a high-level description and Fig.~\ref{fig:teaser} for a visual overview.

\subsection{Q-Estimation from a Behavior Cloning Policy}
\label{sec:Q_estimation}

In the first phase of our method, we estimate the Q-function of a pretrained BC policy $\bcpi$.  We first derive an analytic formula for the Q-function in terms of the value function, policy likelihoods, and entropy. 
We assume the action distribution of the human demonstrations $\mathcal{D}$ used to train $\bcpi$ is approximately a Boltzmann distribution, which has been widely used to model human actions~\cite{laidlaw2022boltzmann, luce1959individual, 7139282, bobu2018learning}. 
Assuming the BC policy's action distribution $\bcpi(a \mid s)$ can be expressed as a Boltzmann distribution with respect to $Q_{BC}$ gives:
\begin{equation}
\label{eq:boltzman}
    \bcpi(a \mid s) \approx \frac{\exp\left( \bcQ{}(s,a) / \alpha \right)}{\int\exp\left( \bcQ{}(s,a') / \alpha \right) da'}.
\end{equation}

Using Eq.~\ref{eq:value-Qfn} and Eq.~\ref{eq:boltzman}, we derive the equation for the Q-function $\bcQ$ using the value function, log probability of the action, and entropy:
\begin{equation}
\label{eq:Q_estimate}
    \hat{Q}_{\rm BC} = V_{\rm BC}(s) + \alpha\log\bcpi(a \mid s) + \alpha\mathcal{H}[\bcpi(\cdot \mid s)].
\end{equation}
Appendix \ref{app:derivations} provides a complete derivation for Eq.~\ref{eq:Q_estimate}. 

Next, we detail how each term in Eq.~\ref{eq:Q_estimate} is computed or approximated. 
First, we must estimate the value function $V_{\rm BC}$. 
We collect rollouts of the BC policy for a few initial online interaction episodes $\{\tau_i\}_{i=1}^{N}$ and calculate the Monte Carlo returns under the online reward function:
\begin{equation}
\label{eq:vhat_bc}
\hat{V}_{BC}(s_t^{(i)}) = \frac{1}{N} \sum_{i=1}^{N}
\sum_{k=t}^{\tau_i}
\gamma^{k} r_{t+k}^{(i)},\ \text{for } t \in \{1, \dots, \tau_i\}.
\end{equation}
These Monte Carlo returns are used to train a value estimator.

The remaining terms of Eq.~\ref{eq:Q_estimate} are derived from the BC policy $\bcpi$.
Our method is agnostic to the BC policy class, as long as it provides the log probabilities of actions and entropy.
We provide the remaining components of Eq.~\ref{eq:Q_estimate} using Gaussian policies as an illustrative case.
The log probability of the actions $\log\pi(a \mid s)$ for Gaussian policies has a closed-form expression:
\begin{equation}
\label{eq:gaussian_logprob}
\log \pi (a \mid s) = \frac{1}{2} \sum^d_{i=1} \left[ \frac{(a_i - \mu_i(s))^2}{\sigma^2_i (s)} + \log(2\pi\sigma^2_i(s)) \right],
\end{equation}
where $(\mu, \sigma)$ is Gaussian distribution output by the policy.
Finally, the entropy $\mathcal{H}[\pi(\cdot \mid s)]$ for Gaussian policies also has a closed-form expression:
\begin{equation}
\label{eq:gaussian_entropy}
\mathcal{H}\!\left[ \pi(\cdot \mid s) \right]
= \frac{1}{2}\,\log\!\left( 2\pi e\, \sigma^2(s) \right).
\end{equation}
See Appendix~\ref{app:derivations} for a similar treatment for Gaussian Mixture Models (GMMs). 

\subsection{Q-Gating for Online Reinforcement Learning}
\label{sec:Q_gating}


        

We now describe Q-Gating, our approach to using $\hat{Q}_{\rm BC}$ estimated from Eq.~\ref{eq:Q_estimate} to guide online reinforcement learning. 
We first initialize two separate Q-functions, both from $\hat{Q}_{\rm BC}$: $\hat{Q}_{\rm BC}$ and $Q_{\rm RL}$.  
$\hat{Q}_{\rm BC}$ is kept frozen to serve as a stable reference for the BC policy.
$Q_{\rm RL}$ initialization involves an initial supervised training period to match $\hat{Q}_{\rm BC}$ on the same BC rollouts used for Q-Estimation.
$Q_{\rm RL}$ then serves as the critic for the RL policy, and is updated through online interaction.
During online training, we sample both the BC policy and RL policy for actions $a_{\rm BC}$ and $a_{\rm RL}$, respectively. 
We evaluate $a_{\rm BC}$ using $\hat{Q}_{\rm BC}$ and $a_{\rm RL}$ using $Q_{\rm RL}$ to get their estimated Q-values. 
The action corresponding to the greater of these two values is executed: 
\begin{equation}
\label{eq:Q_gate}
a =
\begin{cases}
a_{\rm BC} \sim \bcpi(s),&  \hat{Q}_{BC}(s, a_{\rm BC}) > Q_{RL}(s, a_{\rm RL}) \\
a_{\rm RL} \sim \rlpi(s_t),& \rm otherwise.
\end{cases}
\end{equation}

\begin{figure*}[htb]
    \centering
    \includegraphics[width=0.9\linewidth]{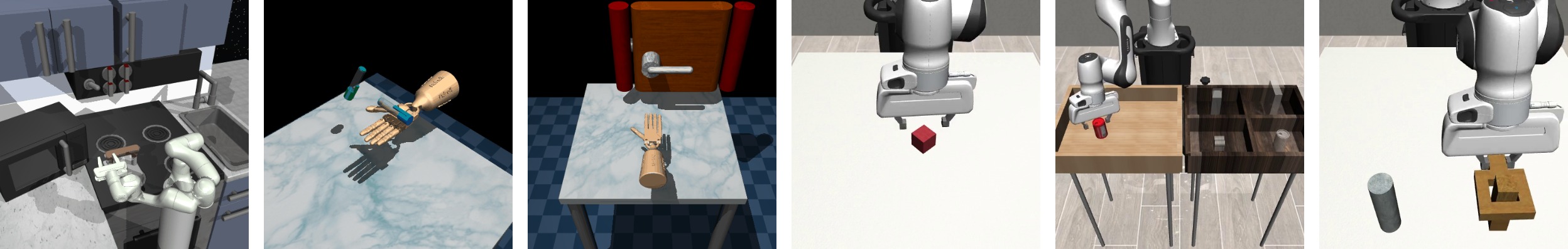}
    \caption{Tasks from D4RL ({\small \texttt{Kitchen-Complete}, \texttt{Adroit-Pen}, \texttt{Adroit-Door}}) and robomimic ({\small \texttt{Lift}, \texttt{Can}}, \texttt{Square}).}
    \label{fig:task-visualization}
\end{figure*}
\begin{figure*}[ht]
    \centering
    \includegraphics[width=0.8\linewidth]{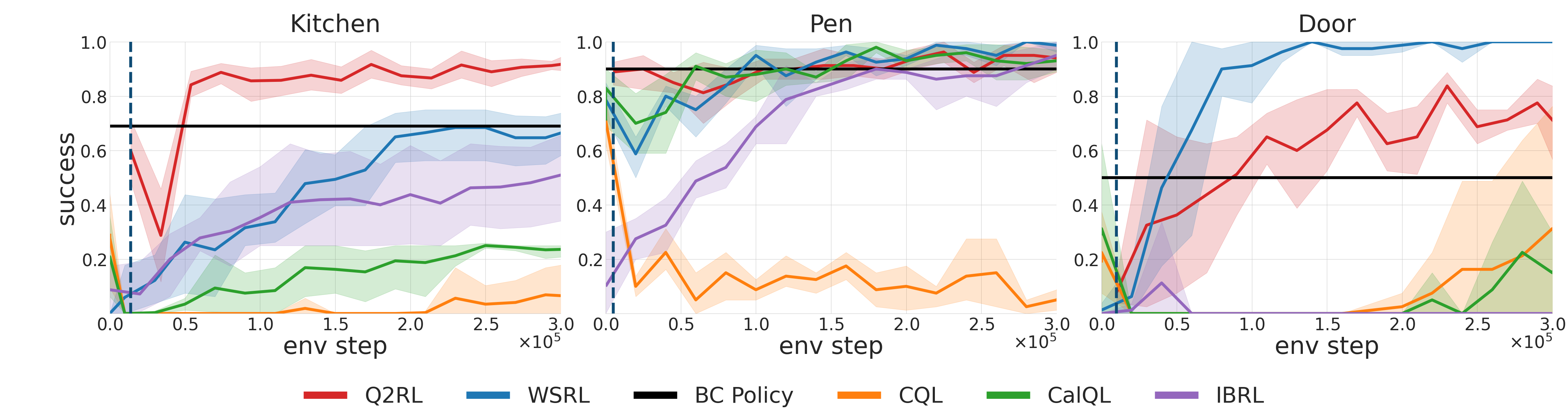}
    \caption{Results on D4RL. 
    Plots begin when a method starts online interaction.
The dotted blue line signifies the end of the Q-Estimation phase and the start of online RL with Q-Gating for our method.
Results are reported over 20 trajectories and 5 seeds, with 95\% confidence intervals.}
    \label{fig:d4rl}
\end{figure*}

We propose this Q-Gating mechanism to preserve good BC action proposals while still enabling policy improvement through RL.
$\hat{Q}_{\rm BC}$ represents the (estimated) value of taking an action in the current state, then following the BC policy thereafter.
We evaluate BC actions through a frozen $\hat{Q}_{\rm BC}$, so that BC actions with high value under $\hat{Q}_{\rm BC}$ are likely to be executed. 
On the other hand, $Q_{\rm RL}$ is only initialized as $\hat{Q}_{\rm BC}$, and trains on samples with either BC or RL actions. 
$Q_{\rm RL}$ therefore improves its estimate of both BC and RL actions, enabling the reinforcement learning policy to surpass BC performance through online exploration and interaction.
Note that our approach does not require access to the training data for $\bcpi$; we rely on our Q-estimate $\hat{Q}_{\rm BC}$ to bootstrap online reinforcement learning.

\subsection{Implementation Details}

We assume a temperature of $\alpha=1$ for the BC policy's Boltzmann parameterization in Eq.~\ref{eq:Q_estimate}.
\MethodName{} can be applied to any behavior cloning policy that provides log probabilities of actions and 
entropy; in this work, we demonstrate the approach using Gaussian policies and GMMs.
Similarly, though \MethodName{} is agnostic to the choice of off-policy RL algorithm, we use SAC optimized for real-world learning~\cite{haarnoja2018soft, zhou2025efficient}.


To prevent the RL policy from deviating too far from the BC state-action distribution, we also weight the RL actor loss with an auxiliary BC loss~\cite{Rajeswaran-RSS-18}.
While this regularization term can occasionally prevent the policy from reaching perfect success rates, our experiments in the following section show that training with BC regularization still outperforms baselines and plays a crucial role in producing safe, smooth behaviors on real robotic systems. 

\section{Experiments}

Our experiments are designed to answer two key questions: (i) does \MethodName{} improve performance beyond the BC policy faster and better than baseline approaches? (ii) Does \MethodName{} use high-value actions from the BC policy during online RL?
To provide a comprehensive evaluation, we conduct experiments across multiple simulation environments using BC policies parameterized by Gaussian and GMM-RNN architectures, and compare against offline-to-online RL and BC-to-RL baselines. 
Our evaluation spans both state-based and image-based observation settings. 
Finally, we validate the practicality of our approach through real-world RL experiments, assessing whether \MethodName{} can support online learning on physical hardware.
\label{sec:experiments}

\subsection{Simulation Environments}

We test \MethodName{} on tasks from the D4RL~\cite{fu2020d4rl} and robomimic~\cite{mandlekar2021matters} benchmarks. All tasks are visualized in Fig~\ref{fig:task-visualization}.
\subsubsection{\textbf{D4RL}} We run experiments on two Adroit manipulation tasks: \textit{Pen} and \textit{Door}. 
The Pen task involves dexterous in-hand manipulation, requiring the robotic hand to reorient a pen to a target configuration, while Door consists of a standard door-opening task. 
In addition, we evaluate performance on the \textit{Kitchen} environment, which features a Franka robot interacting with multiple objects to achieve a multi-stage goal configuration. 
For these experiments, we use the offline Kitchen-Complete dataset, which contains 19 successful demonstrations which are representative of standard robot learning setups where we only have access to a few successful trajectories.
We conduct all D4RL experiments without access to offline training data during the online RL phase.
\subsubsection{\textbf{robomimic}} We run experiments on three widely used robomimic tasks, \textit{Lift}, \textit{Can}, and \textit{Square}, all featuring a Franka robotic arm. In the Lift task, the robot must lift a cube from a randomly sampled position on the table. The Can task requires the robot to grasp a can from one side of the table and place it into the correct bin among four target bins located on the opposite side. The Square task involves picking up a square nut from the table and inserting it onto a square peg. 
For robomimic tasks, we test both with and without access to offline training data during the online RL phase.

\begin{figure*}[ht!]
    \centering
    \subfloat[Results with data seeded in the online replay buffer.\label{fig:rm_with_data_all}]{
        \includegraphics[width=\linewidth]{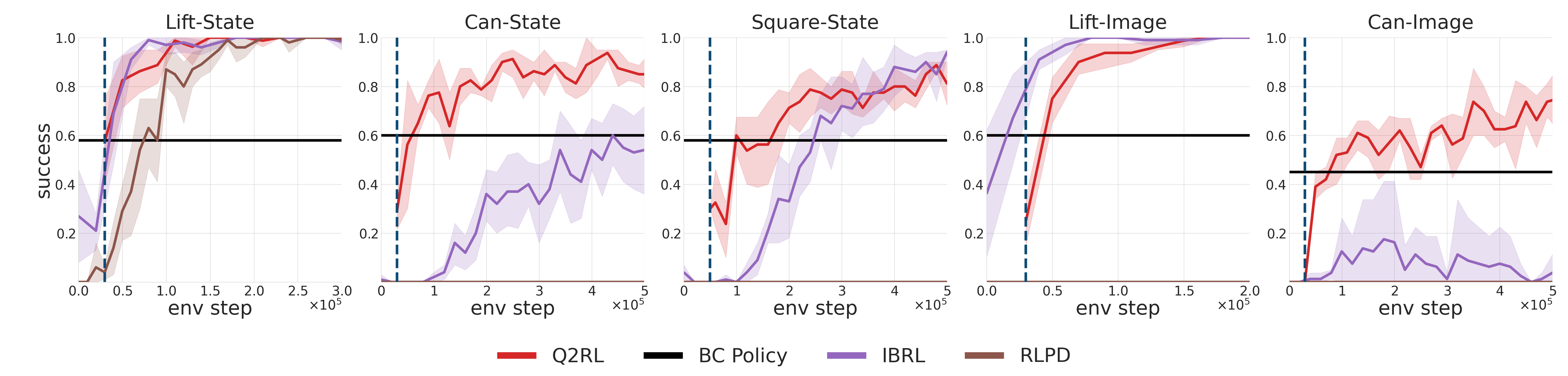}
    }
    \hfill
    \subfloat[Results without data seeded in the online replay buffer. \label{fig:rm_without_data_all}]{
        \includegraphics[width=\linewidth]{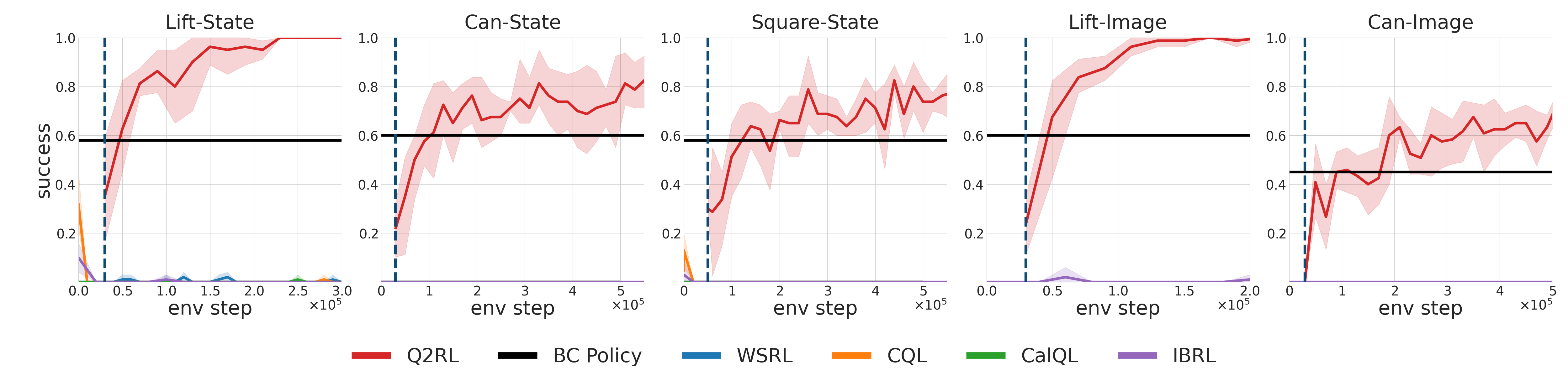}
    }
    \caption{Robomimic Results. 
    The plot for each method begins at the start of online interaction.
    The dotted blue line signifies the end of the Q-Estimation phase and the start of online RL with Q-Gating for our method.
    Results are reported over 20 trajectories and 5 seeds, with 95\% confidence intervals.
    }
    \label{fig:rm}
\end{figure*}

\subsection{Baselines} \label{sec:baselines}
We compare \MethodName{} with the following baselines: 
\subsubsection{\textbf{CQL}} Conservative Q-Learning~\cite{kumar2020conservative} is an offline RL method that trains exclusively on an offline dataset by prioritizing in-distribution actions while penalizing out-of-distribution actions. This induces a pessimistic bias in the learned Q-function, which often requires a recalibration phase during online interaction.
\subsubsection{\textbf{CalQL}} Calibrated Q-Learning~\cite{nakamoto2023cal} addresses the pessimism of Q-value estimates in CQL by calibrating the critic loss with respect to the value of a reference policy. However, despite this calibration, transitioning to online training can still lead to unlearning of its offline pretraining.
\subsubsection{\textbf{WSRL}} Warm-Start RL~\cite{zhou2025efficient} performs for offline-to-online RL without explicit data retention. 
To mitigate unlearning during the transition from offline to online training, 
WSRL introduces a warm-up phase in which the online replay buffer is seeded with rollouts generated by the offline pretrained policy. 
However, a key limitation of WSRL is its dependence on the quality of the offline pretraining data.
\subsubsection{\textbf{RLPD}} RL with Prior Demonstrations~\cite{ball2023efficient} incorporates offline data directly into online learning by uniformly sampling from both offline and online replay buffers during actor and critic updates. 
\subsubsection{\textbf{IBRL}} Imitation Bootstrapped RL~\cite{hu2023imitation} uses a pre-trained BC policy to bootstrap Q-value estimates and action proposals for RL. 
IBRL randomly initializes the Q-function and uses demonstration data to seed the online replay buffer.

Additional details on simulation experiments can be found in Appendix~\ref{sec:app_sim_details}.

\subsection{Results on D4RL}
We present the results for Kitchen, Door and Pen tasks from D4RL in Fig~\ref{fig:d4rl}. 
Offline RL methods perform poorly on the \textbf{Kitchen} task because the offline Kitchen-Complete dataset only contains successful demonstrations. 
Even with online RL after offline pre-training, these methods fail to improve over BC for the Kitchen task. 
\MethodName{}, on the other hand, starts with better initial performance than the baselines and improves beyond the BC policy, which we attribute to Q-Estimation and online RL with Q-Gating.

In the \textbf{Pen} task, the offline data is able to provide sufficient pretraining for both BC and offline RL methods. 
We observe that \MethodName{} starts with near-optimal performance and maintains it throughout online learning.
WSRL and CalQL are also able to reach high performance with online interaction on Pen, whereas CQL experiences a drop in performance, likely due to its conservative Q-value estimates.
IBRL is also eventually able to reach high performance. 

For the \textbf{Door} task, WSRL converges to a higher success rate than the baselines and \MethodName{}.
However, from the rollout videos we observe that the behavior of the WSRL policy diverged from the original human demonstrations, and that it has learned to exploit the simulator through online RL, resulting in actions that would not be feasible in the real world.
(Fig.~\ref{fig:qualitative_door}, Appendix~\ref{sec:app_ablations}). 
\MethodName{} uses Q-Estimation and Q-Gating to remain close to the BC policy, resulting in actions that can be realistically executed in the real world.

\subsection{Robomimic Results}
Next, we evaluate \MethodName{} on both state-based and image-based robomimic tasks. We use two cameras for image-based tasks, the agentview of the workspace and wrist camera. We implement \MethodName{} and all baselines with a learnable CNN encoder for the images to keep the implementation standard and comparisons fair. As these tasks are more challenging than the D4RL benchmarks, we consider two experimental settings: one in which training data is available in the online replay buffer, and another in which the policy does not have access to any offline training data during online learning. 

\subsubsection{Access to BC Training Data}
Results for the robomimic tasks with access to training data during online learning are shown in Fig.~\ref{fig:rm_with_data_all}. When training data is available in the online replay buffer, IBRL performs competitively. 
IBRL randomly initializes the critic for off-policy RL and has no mechanism to preserve the initial performance of the BC policy, causing online interaction to begin with limited or no meaningful performance. 
In contrast, \MethodName{} begins online interaction with non-trivial initial performance rather than starting from near-zero success, which is particularly important for real-world deployment where unsafe or unproductive exploration is costly. Moreover, by restricting learning to targeted policy improvements, \MethodName{} converges to high-performing policies more rapidly.
We also observe that for Can-Image, a harder high-dimensional task, IBRL is unable to reach base policy performance. 
We note that in the original IBRL paper, IBRL is able to converge for the Can-Image task, but we hypothesize that this is due to other optimizations used, such as specialized ViT encoders.  
\MethodName{} on the other hand is able to outperform the base policy with only the standard set of encoders. 
RLPD succeeds only on simpler tasks such as Lift and struggles to converge on the more challenging Can and Square tasks.

\subsubsection{No Access to BC Training Data}
BC policies are often provided by third parties without the data or infrastructure used to train them.
Due to such occurrences, we evaluated our approach in cases where pre-training data was not accessible. 
Results for the robomimic tasks without access to training data during online learning are shown in Fig.~\ref{fig:rm_without_data_all}.
Offline RL and offline-to-online RL methods are unable to learn for harder robotic tasks with limited offline datasets.
Without access to data, IBRL is also unable to learn any meaningful performance.
Early in training, the randomly initialized RL policy can propose arbitrary actions that may receive spuriously high Q-values from an untrained critic. 
To mitigate this issue, IBRL relies on seeding the replay buffer with training data, so that the critic learns to assign higher value to BC action proposals than to random RL actions. 
In the absence of a seeded replay buffer, IBRL loses this mechanism to stabilize initial Q-values, leading to unreliable action selection. 
\MethodName{} is able to continually improve performance even in the absence of training data.
We attribute \MethodName{}'s strong performance without seeding to the effectiveness of the proposed Q-Estimation step.
Q-Estimation computes $Q_{BC}$ using BC policy rollouts \emph{only}; it does \emph{not} require expert demonstrations, nor a diverse offline dataset.
\MethodName{} therefore has broad applicability in both limited access, memory-constrained settings where expert data is unavailable, as well as in unconstrained settings where expert data can seed online RL. 
We also conduct a thorough evaluation for seeding the replay buffer with different amounts of data in 
Appendix~\ref{sec:app_ablations}.

\subsection{Analysis of BC Actions Used During Online Training}
Next, we investigate the relationship between success rate and the ratio of BC vs. RL actions used during online training.
Fig.~\ref{fig:base_action_captured} shows the success rate and the fraction of BC actions selected by \MethodName{} vs. IBRL on Can-State and Square-State tasks. 
The vertical dotted line serves as a visual reference indicating when \MethodName{}'s online performance approximately matches the original BC policy performance. 

\begin{figure}[th]
  \centering
  \begin{subfigure}[t]{\linewidth}
    \centering
    \includegraphics[width=\linewidth]{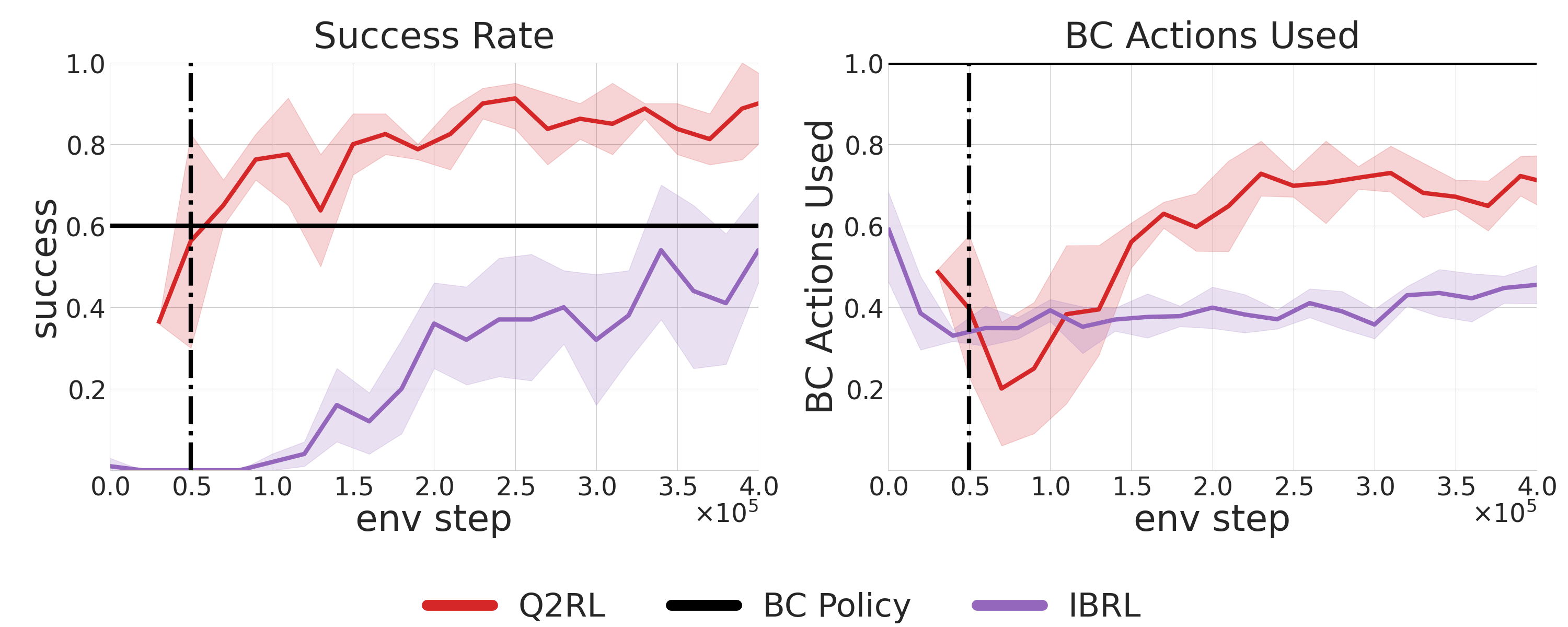}
    \caption{Can-State}
  \end{subfigure}

  \begin{subfigure}[t]{\linewidth}
    \centering
    \includegraphics[width=\linewidth]{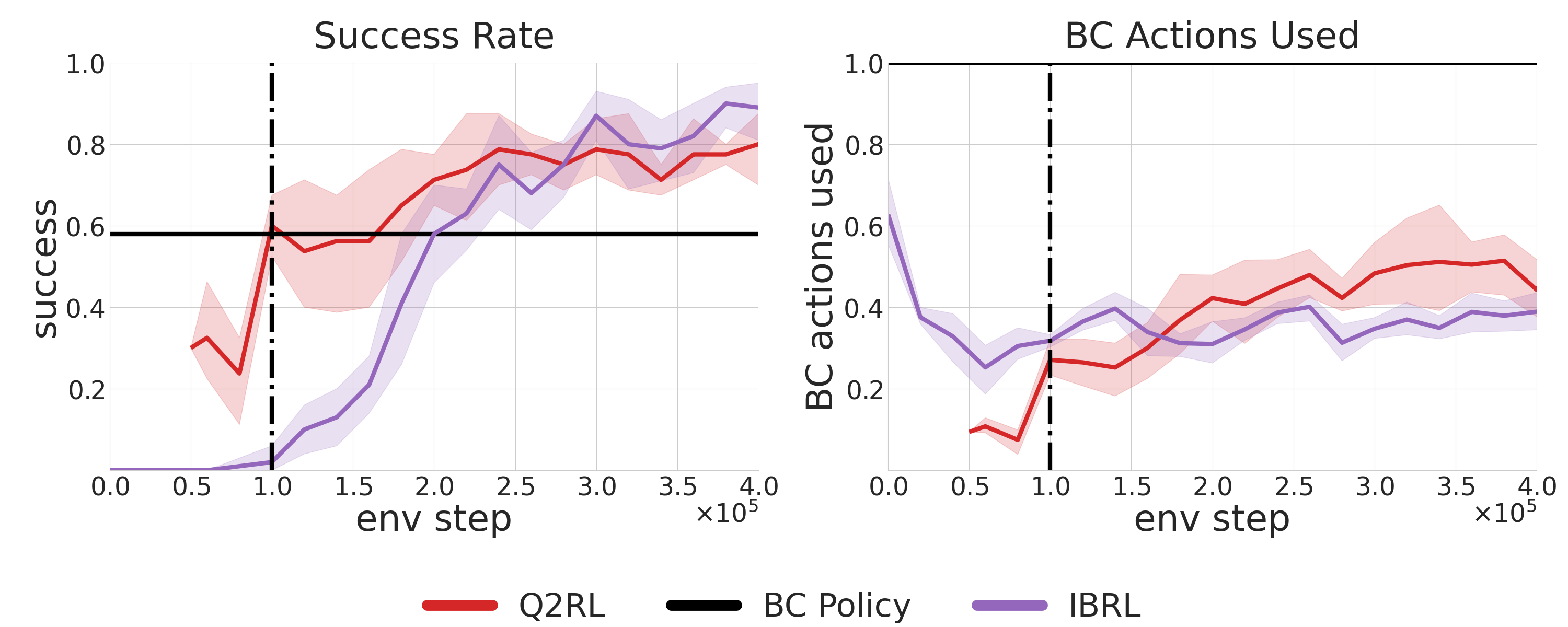}
    \caption{Square-State}
  \end{subfigure}

  \caption{BC vs. RL Actions Used During Online Training. \MethodName{} optimizes BC vs. RL action selection via Q-Gating, and achieves higher success rate earlier than IBRL.}
  \label{fig:base_action_captured}
  \vspace{-1em}
\end{figure}

We observe that \MethodName{} rapidly recovers the original BC performance within only a few online interaction steps, demonstrating that it identified and exploited high-value actions of the BC policy.
This performance is achieved while using BC actions less than half of the time, indicating that the RL policy also learned to produce effective actions.
With continued training, 
as $Q_{\rm RL}$ estimates improve, Q-Gating more reliably identifies states where each policy has strong performance, so that Q2RL relies on BC for simple motions and reserves RL for difficult, contact-rich segments.

In contrast, IBRL selects a similar fraction of BC actions, but requires substantially more interaction to reach the same performance.
Further, the ratio of BC actions used by IBRL during training stays relatively flat, indicating that it trades off between BC and RL actions less efficiently than \MethodName{}.

\subsection{Ablations}

\label{sec:ablation_gating_init}
We ablate Q-Initialization and Q-Gating in Fig.~\ref{fig:ablation_our_method}. 
Note that Q-Initialization, or initializing $Q_{RL}$ with $\hat{Q}_{BC}$, is distinct from Q-Estimation, which is the algorithm for estimating $\hat{Q}_{BC}$ (Sec.~\ref{sec:Q_estimation}).
In the No Q-Gating condition, BC and RL action proposals are evaluated using the same Q-function during online RL.  
We see that Q-Gating is critical for achieving high performance. 
When Q-Gating is enabled, Q-Initialization
yields performance comparable to a randomly initialized critic; however, we retain this initialization because it provides a safer starting distribution that remains closer to the BC policy, which is preferable for on-robot RL.
See Appendix~\ref{sec:app_ablations} for additional ablation studies.

\begin{figure}[t]
    \centering
    \includegraphics[width=\linewidth]{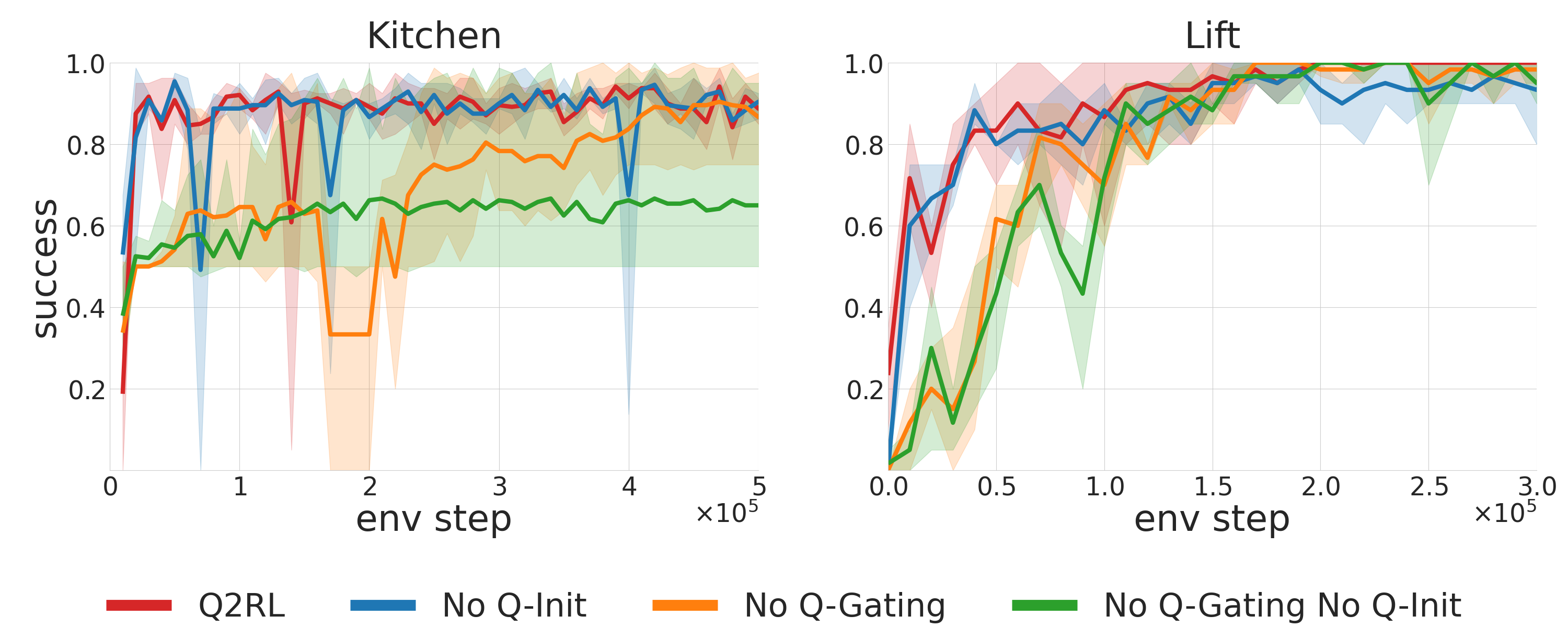}
    \caption{Q-Initialization and Q-Gating Ablations.}
    \label{fig:ablation_our_method}
    \vspace{-1em}
\end{figure}

\begin{figure*}[ht]
    \centering
    \subfloat[On-Robot RL Setup.\label{fig:real_setup}]{
        \includegraphics[width=0.25\linewidth, 
        trim=300pt 50pt 0 0, clip
        ]{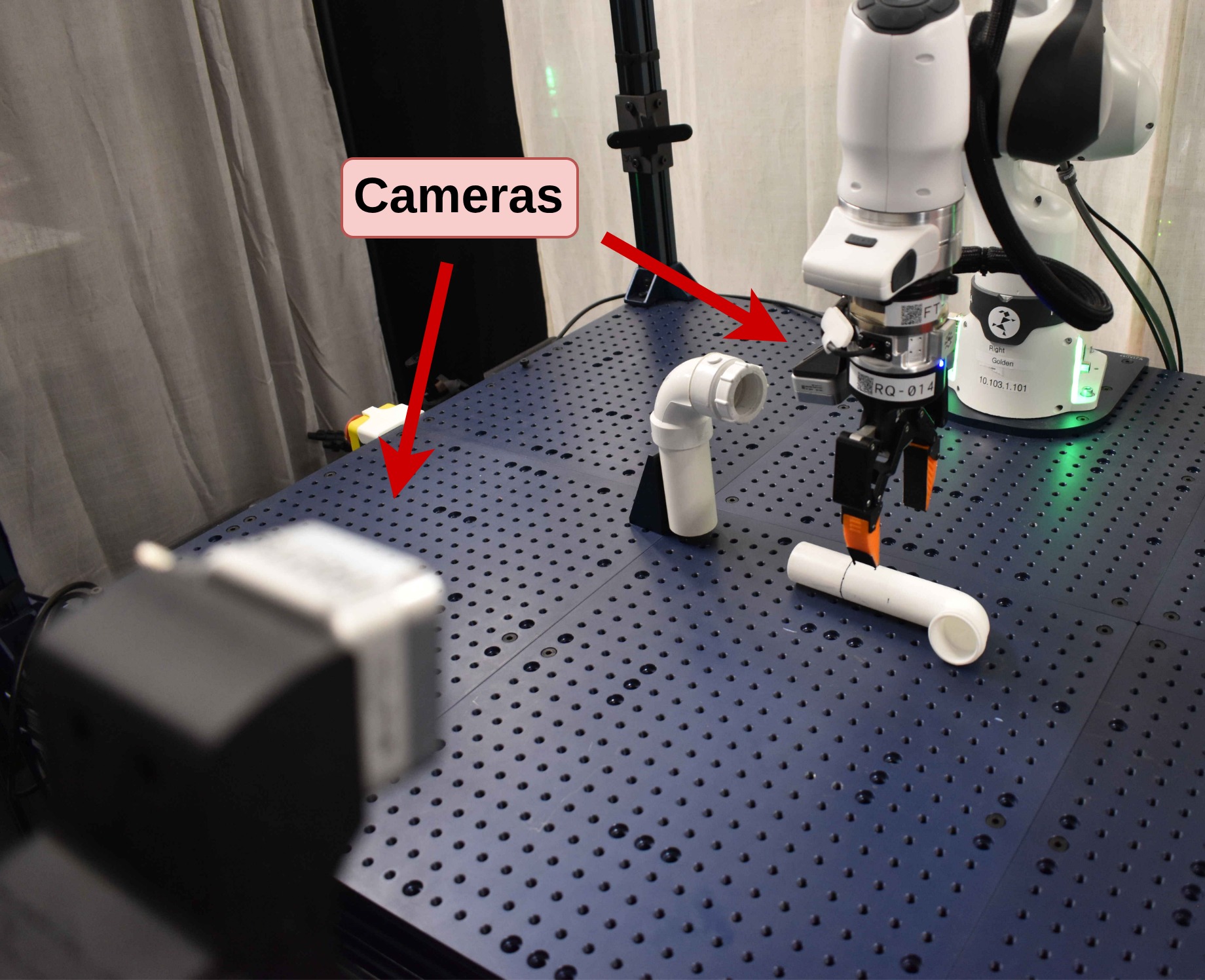}
    }
    \hfill
    \subfloat[Trained \MethodName{} policies for pipe assembly and kitting tasks. 
    \label{fig:timelapse}]{
        \includegraphics[width=0.6\linewidth]{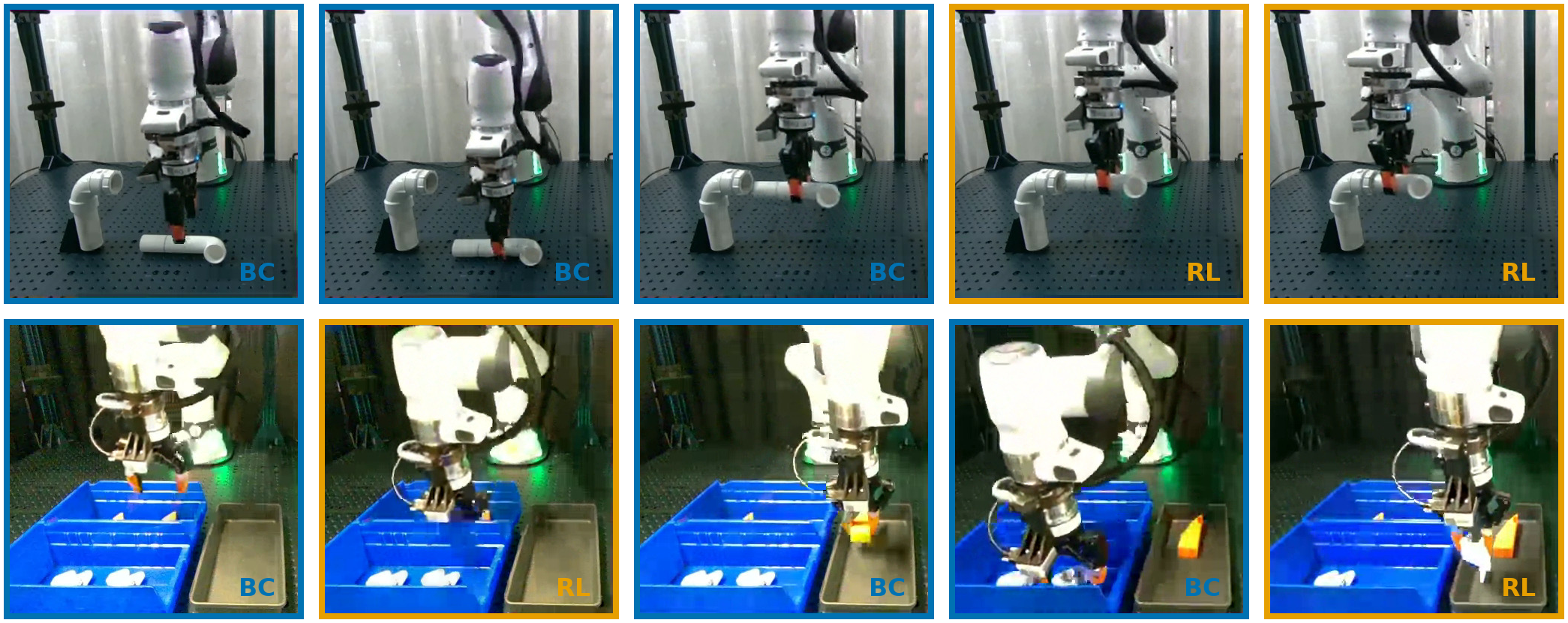}
    }
    \caption{Real World Experiments. (a) A Franka Panda arm with a Robotiq 2F-85 gripper, with RGB images from wrist and workspace Intel RealSense D405s. (b) We analyze usage of BC (\textcolor{myblue}{\textbf{blue}}) vs. RL (\textcolor{myyellow}{\textbf{yellow}}) actions in representative trajectories executed by trained \MethodName{} policies. For pipe assembly, \MethodName{} relies on BC for grasping and initial alignment, then switches to RL for high-precision, contact-rich insertion. For kitting, the BC policy was trained in a task setting with only one object present in each bin; \MethodName{} learns to complete the task with multiple objects in each bin. In the modified setting, \MethodName{} uses BC actions for behaviors common to both task settings (e.g. moving between bins), and uses RL for part grasping and placement. 
    }
    \label{fig:real_images}
    \vspace{-0.5em}
\end{figure*}

\begin{figure}[htb]
    \centering
    \includegraphics[width=\linewidth]{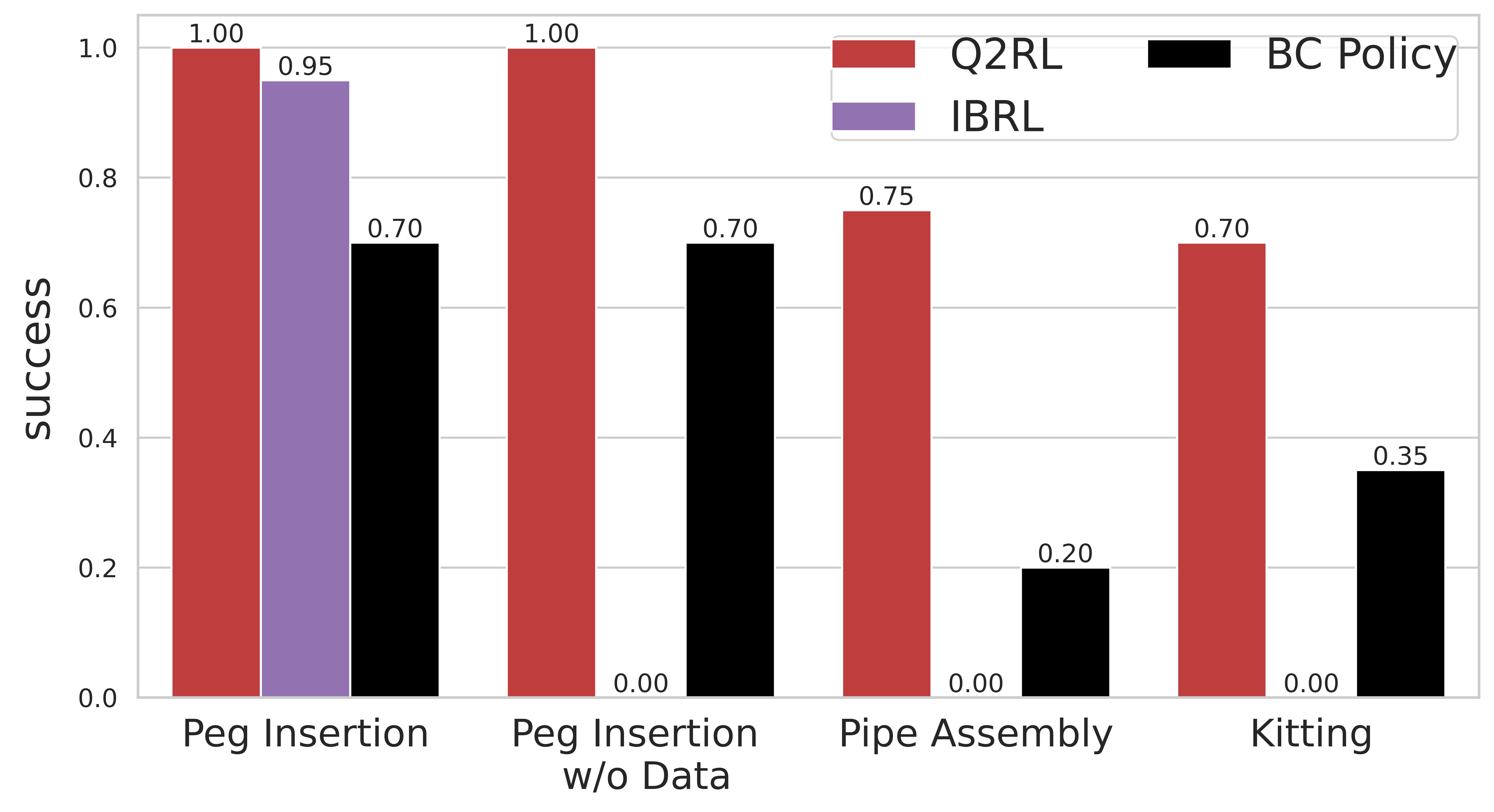}
    \caption{On-Robot RL Results. Task success is reported over 20 trials per method. \MethodName{} significantly improves BC policy performance on every task. IBRL fails when (1) demonstrations are not seeded in the replay buffer (w/o Data), and (2) when tasks involve large action spaces and long horizon manipulation (even with seeding).}
    \label{fig:real_exps}
    \vspace{-0.5em}
\end{figure}

\subsection{Real World RL Experiments}
\subsubsection{Experiment Setup}
We run all experiments on a 7-DOF Franka Panda Robot with a Robotiq 2F-85 gripper. 
For all tasks, the robot receives the end effector pose, gripper width, and $84 \times 84$ RGB images from a workspace camera and wrist camera (see Fig.~\ref{fig:real_setup}).
The robot outputs delta pose actions executed by a Cartesian impedance controller.
We evaluate on the following tasks (see Fig.~\ref{fig:teaser}):

\textbf{Peg Insertion.} This insertion task from FMB~\cite{luo2025fmb} has been used in recent on-robot RL methods~\cite{luo2024serl, zhou2025efficient}.
Success occurs when the peg is fully inserted into the board.

\textbf{Plumbing Pipe Assembly.} This task involves both grasping and low tolerance insertion.
Insertion requires rotating the gripper to align the pipe with the fixture.
Success occurs when the pipe is fully inserted into the fixture.

\textbf{Kitting with Task Distribution Shift.}
In this task, two parts must be retrieved from their respective bins and placed in a kitting tray.
As the longest horizon task, this task requires multiple picks and places.
Success occurs when the robot has placed both parts in the kitting tray.
The BC policy for this task was only trained on demonstrations with a single object per bin. 
To evaluate robustness to task distribution shift, the test task setting has two objects per bin, placed in new locations.

GMM-RNN BC policies were trained for each task on 50 to 100 demos collected by a human operator using a 3D Connexion Spacemouse.
We compare \MethodName{} with IBRL for offline-to-online improvement with the pre-trained BC policies.
During online learning, a termination signal and sparse reward signal were provided by a human operator when the task's success conditions were met. 
Environment resets were also handled by a human operator.
Additional details on the real experiment setup can be found in Appendix~\ref{app:real_exps}.

\subsubsection{Real World Results}
For the Kitting task, all results are for the modified setting (two parts in each bin instead of one). 
The best checkpoint after up to 2.5 hours of online training is used for each method, representing no more than 170k gradient steps and 80k environment steps (much fewer than in simulation).
From Fig.~\ref{fig:real_exps}, \textbf{\MethodName{} significantly improves performance compared to the original BC policy, achieving a 1.4x improvement and 100\% success on the Peg Insertion task, 3.75x improvement on Pipe Assembly, and 2x improvement on Modified Kitting.}
While IBRL performed on par with \MethodName{} for the Peg Insertion task, it achieved zero success when demonstration samples were not seeded in the online replay buffer. 
Even with data seeding, when tasks involved large action spaces or were long horizon, IBRL had zero success, failing to recover BC performance.
We attribute IBRL's poor performance to its use of a single, randomly initialized Q-function to select BC vs. RL actions, which hinders it from accurately rating BC actions, especially for long horizon, sparse reward tasks.
In contrast, we initialize $Q_{\rm RL}$ from $\hat{Q}_{\rm BC}$, and our proposed Q-Gating procedure uses both a frozen $\hat{Q}_{\rm BC}$ and trainable $Q_{\rm RL}$ to score respective actions more accurately.

\textbf{Safety.} During the course of training IBRL for Peg Insertion, we noted two safety violations, in which the Franka exerted excessive force against the board and caused the robot to fault. 
These safety violations occurred despite careful tuning of controller gains during preliminary testing that was subsequently kept fixed for all reported runs.
\MethodName{} experienced no such safety violations, and we qualitatively observed that it produced relatively smoother and safer actions early in training: we attribute this behavior to a combination of the auxiliary BC loss, Q-Gating, and $Q_{\rm RL}$ initialization with $\hat{Q}_{\rm BC}$.

\textbf{Qualitative Results.}
Fig.~\ref{fig:timelapse} shows frames from representative rollouts of trained \MethodName{} policies, annotated to indicate BC vs. RL action selection.  
We observe that \MethodName{} selects BC actions to perform behaviors supported by the BC policy's training data, e.g. initial pipe grasping and alignment for the Pipe Assembly task, before switching to RL actions for high-precision, contact-rich insertion.
With Modified Kitting, BC actions were selected for moving between bins, with RL actions handling parts of the task that had shifted from their original setting, e.g. grasping parts from new locations. 
Other cases of switching between BC and RL actions corresponded to recovery behaviors, such as insertion re-alignment and re-grasping. 
Not all rollouts or trained \MethodName{} policies exhibit these behaviors exactly as described here, but we generally find meaningful switching between BC and RL actions.
Please refer to the website for videos of annotated rollouts, and Appendix~\ref{app:real_exps} for additional details on results.

\section{Conclusion} 
\label{sec:conclusion}

We presented \MethodName{}, an algorithm that improves the performance of a behavior cloning policy through online reinforcement learning. 
\MethodName{} estimates a Q-function from the BC policy, then uses it to initialize and guide online RL. 
Our approach does not require access to the BC policy's original training data, and is compatible with any BC policy class that provides action likelihoods and entropy.
In on-robot reinforcement learning experiments, \MethodName{} successfully improves performance on high precision, contact-rich manipulation tasks within a few hours of online interaction.

One limitation of our work is that \MethodName{} requires the BC policy to provide action likelihoods and entropy; as future work, we aim to extend \MethodName{} to other policy classes that do not provide these by default, such as diffusion and flow matching policies. 

\section*{Acknowledgments}
This work was supported in part by ONR grant N00014-22-1-2592 and ONR grant ``Long-Term Autonomy for Ground and Aquatic Robotics" N00014-24-1-2784.
The authors thank Justin Rojas, Will Heitman, and Steve Proulx for assistance with real robot experiments and hardware, Victoria Coleman and Colin Kohler for real robot infrastructure support, and Eric Rosen for helpful paper feedback.


\clearpage
\appendices
\section{Derivations}
\label{app:derivations}

\subsection{Derivation for Estimating Q-values for BC: $Q_{\rm BC}$}
\label{app:q_derivation}

Assuming $\pi_{BC}$ is a Boltzmann policy that is defined by the value function $Q$ as follows,
\begin{align}
\label{eq:boltz_policy}
    \pi_{BC}(a \mid s) = &\exp \frac{1}{\alpha} Q(s, a) / Z(s), \\
    {\rm where}\ Z(s) = \int_{\mathcal{A}} &\exp \frac{1}{\alpha} Q(s, a') da', \ a \in \mathcal{A}
\end{align}
where $\alpha$ is a temperature parameter, then we can re-arrange the above equations to estimate $Q(s, a)$ using only the state-value and the policy distribution.

\begin{lemma}
    Given a Boltzmann policy $\pi_{BC}$ and assuming $Z(s)$ is finite for all $s \in \mathcal{S}$, we can express the Boltzmann value function $Q(s, a)$ of $\pi_{BC}$ as the following:
    \begin{align}
    Q(s, a) = V(s) + \alpha\log \pi_{BC}(a \mid s) + \alpha\mathcal{H}[\pi_{BC}(\cdot \mid s)],
    \end{align}
    for all $s \in \mathcal{S}$ and $a \in \mathcal{A}$.
\end{lemma}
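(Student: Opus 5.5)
The plan is to take logarithms in the Boltzmann form \eqref{eq:boltz_policy}, isolate $Q(s,a)$ up to the state-dependent log-partition term $\alpha\log Z(s)$, and then eliminate $\log Z(s)$ by averaging over actions drawn from $\pi_{BC}(\cdot\mid s)$. The averaging uses the definition of the value function as an expectation of $Q$ in Eq.~\eqref{eq:value-Qfn}. Here $V$ is the ordinary expectation $\mathbb{E}_{a\sim\pi_{BC}}[Q(s,a)]$, not a soft value, and that is exactly why an entropy term appears.

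\textbf{Step 1: express $Q$ through the log-partition term.} Fix $s\in\mathcal{S}$. Since $Z(s)$ is finite by assumption, and positive because the integrand $\exp(Q/\alpha)$ is strictly positive, $\log Z(s)$ is well defined. Moreover $\pi_{BC}(a\mid s)>0$ for every $a$. Taking logs in \eqref{eq:boltz_policy} gives
\begin{align}
\alpha\log\pi_{BC}(a\mid s) = Q(s,a) - \alpha\log Z(s),
\end{align}
that is, $Q(s,a) = \alpha\log\pi_{BC}(a\mid s) + \alpha\log Z(s)$ for all $a\in\mathcal{A}$.

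\textbf{Step 2: identify $\alpha\log Z(s)$.} Take the expectation of the identity from Step 1 over $a\sim\pi_{BC}(\cdot\mid s)$. The left side becomes $V(s)$ by Eq.~\eqref{eq:value-Qfn}. The term $\alpha\,\mathbb{E}_{a\sim\pi_{BC}}[\log\pi_{BC}(a\mid s)]$ equals $-\alpha\mathcal{H}[\pi_{BC}(\cdot\mid s)]$, using the differential entropy in the continuous case. The constant $\alpha\log Z(s)$ passes through the expectation unchanged. Hence
\begin{align}
\alpha\log Z(s) = V(s) + \alpha\mathcal{H}[\pi_{BC}(\cdot\mid s)].
\end{align}
Substituting this back into Step 1 yields the claimed formula, and it holds for every $s$ and $a$.

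\textbf{Main obstacle.} The only delicate point is justifying the expectation in Step 2. We need $Q(s,\cdot)$, or equivalently $\log\pi_{BC}(\cdot\mid s)$, to be integrable under $\pi_{BC}(\cdot\mid s)$, so that $V(s)$ and $\mathcal{H}$ are finite and the expectation splits linearly. Finiteness of $Z(s)$ alone does not obviously guarantee this. I would handle it by noting that the two quantities are tied together: $V(s)$ is finite if and only if $\mathcal{H}[\pi_{BC}(\cdot\mid s)]$ is finite, since they differ by the finite constant $\alpha\log Z(s)$ after integration. I would therefore state the lemma under the standing, implicit assumption that $V(s)$ is finite, which is needed for Eq.~\eqref{eq:value-Qfn} to make sense anyway. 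For bounded rewards, $Q$ is bounded and this is automatic. Everything else is routine algebra.
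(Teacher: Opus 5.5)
Your proof is correct and follows the paper's argument. The paper also averages $\log\pi_{BC}(a\mid s) = Q(s,a)/\alpha - \log Z(s)$ under $\pi_{BC}(\cdot\mid s)$ to obtain $\log Z(s) = V(s)/\alpha + \mathcal{H}[\pi_{BC}(\cdot\mid s)]$, then substitutes back. Your integrability remark, that finiteness of $V(s)$ is equivalent to finiteness of the entropy and must be assumed, is a point the paper leaves implicit.
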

\begin{proof}
    Firstly, we derive an equation for the partition function $\log Z$ using the policy entropy $\mathcal{H}$ and Eq.~\ref{eq:boltz_policy}:
    \begin{align*}
        \mathcal{H}[\pi_{BC}(\cdot \mid s)] &= - \mathbb{E}_{a \sim \pi_{BC}(\cdot \mid s)}[\log \pi_{BC}(a \mid s)] \\
        &= - \mathbb{E}_{a \sim \pi_{BC}(\cdot \mid s)}\left[\frac{1}{\alpha}Q(s, a) - \log Z(s)\right] \\
        &= - \frac{1}{\alpha}V(s) + \log Z(s).
    \end{align*}
    Rearranging,
    \begin{align*}
        \log Z(s) &= \frac{1}{\alpha}V(s) + \mathcal{H}[\pi_{BC}(\cdot \mid s)]
    \end{align*}
    Secondly, we substitute this equation into the definition of our Boltzmann policy,
    \begin{align*}
        \log \pi_{BC}(a \mid s) &= \frac{1}{\alpha} Q(s, a) - \log Z(s) \\
        &= \frac{1}{\alpha}Q(s, a) - \frac{1}{\alpha}V(s) - \mathcal{H}[\pi_{BC}(\cdot \mid s)].
    \end{align*}
    So
    \begin{align*}
        Q(s, a) &= V(s) + \alpha\log \pi_{BC}(a \mid s) + \alpha\mathcal{H}[\pi_{BC}(\cdot \mid s)]
    \end{align*}
    as desired.
\end{proof}

\subsection{Derivations for Gaussian Mixture Models}
In this section, we provide additional details on using \MethodName{} with Gaussian Mixture Model (GMM) based BC policies. 
\subsubsection{Action Likelihoods}
The probability density function of a GMM with $C$ components is defined as the weighted sum of the probability density of each of the mixture component $\pi_i$~\cite{kolchinsky2017estimating}: 
\begin{equation}
\label{eq:gmm_logprob}
 \pi (a \mid s) = \sum_{i=1}^{N}c_{i} \pi_{i} (a \mid s),
\end{equation}
where $c_i$ is the weight of each component $i$ ($c_{i} \geq 0, \sum_{i}c_{i}=1$).

\subsubsection{Entropy}
For a GMM, $\mathcal{H}[\pi(\cdot \mid s)]$ is upper-bounded by the sum of expected component entropy and discrete entropy of mixture weights~\cite{kolchinsky2017estimating}: 
\begin{equation}
    \mathcal{H}[ \pi (\cdot \mid s)] \leq  \mathcal{H}[ \pi (\cdot \mid s) \mid C] + \mathcal{H}(C)
\end{equation}
where the expected component entropy can be defined as:
\begin{equation}
    \mathcal{H}[\pi (\cdot \mid s) \mid C] = \sum_{i}c_i\mathcal{H}[\pi_{i} (\cdot \mid s)]
\end{equation}

We use this upper bound of entropy for our GMM policy experiments.

\section{Details on Simulation Experiments}
\label{sec:app_sim_details}

We define the BC policy hyperparameters in Table~\ref{tab:bc_hyperparameters} along with the BC loss weights.
Table~\ref{tab:rl_hyperparameters} contains hyperparameters of the RL agent for \MethodName{}. 
We use the same RL hyperparameters for IBRL and WSRL as used in their respective papers. 

\vspace{0.5em}
\begin{table}[htb]
    \centering
    \caption{BC Epochs and Auxiliary Loss Weights for RL}
    \begin{tabular}{ccccc}
        \toprule
         & BC Loss Weight & BC Epoch Used \\
        \midrule
        Kitchen & 0.3 & 200k \\
        Pen & 0.3 &  200k \\
        Door & 0.3 & 100k \\
        \midrule
        Lift-State & 0.3 & 350 \\
        Can-State & 0.3 & 250 \\
        Square-State & 0.2 & 1000\\
        Lift-Image & 0.2 & 20 \\
        Can-Image & 0.2 & 15 \\
        \midrule
        Peg Insertion & 0.1 & 600 \\
        Pipe Assembly  & 0.1 & 600 \\
        Kitting (Modified)  & 0.1 & 600 \\
        \bottomrule
    \end{tabular}
    \label{tab:bc_hyperparameters}
    \vspace{0.5em}
\end{table}

\subsection{Simulation Hyperparameters}

We use the same reward shaping from WSRL, namely a reward bias $b$ at each timestep and scale factor $a$ for nonzero rewards.
The transformed reward is:
\begin{equation}
    \tilde{r} = ar + b
\end{equation}
In our experiments, the only nonzero environment reward $r$ is the sparse reward given upon success ($r=1$).
For D4RL tasks, we use the reward scale and bias values from WSRL. 
For all other tasks (including Kitchen), bias $b=-1$, a negative per-step penalty that encourages the agent to complete the task faster.
Scale factor $a$ helps with TD error propagation and its value in our experiments differs between simulation and the real world (see Table~\ref{tab:rl_hyperparameters}).

We use the same Soft Actor-Critic implementation as WSRL~\cite{zhou2025efficient} as our reinforcement learning algorithm.
We use DrQ image augmentation for image-based environments as implemented in SERL~\cite{yarats2021image, luo2024serl}.

\begin{table*}[tb]
    \centering
    \caption{Reinforcement Learning Agent Hyperparameters}
    \begin{tabular}{c|c|c|c}
        \toprule
         & Simulation (State) & Simulation (Image) & Real-world \\
        \midrule
        Batch Size & 256 (Kitchen: 1024) & 256  & 256 \\
        Hidden Dimensions (Actor and Critic) & [512, 512, 512] & [1024, 1024] & [1024, 1024] \\
        Learning Rate (Actor and Critic) & 3e-4 & 1e-4 & 1e-4 \\
        Reward Scale & 5 (Adroit: 10, Kitchen: 4) & 5 &10 \\
        Num. Rollouts for Q-Estimation Phase & 50 (D4RL), 100 (Robomimic) & 100 & See Table~\ref{tab:task_config} \\
        \midrule
        Reward Bias & \multicolumn{3}{c}{-1 (Adroit: 5)} \\
        Num. Q-Estimation Training Steps & \multicolumn{3}{c}{20k (50k for Square)} \\
        UTD Ratio & \multicolumn{3}{c}{4} \\
        Critic Ensemble Size & \multicolumn{3}{c}{10} \\
        Critic Subsample Size & \multicolumn{3}{c}{2} \\
        Discount Factor & \multicolumn{3}{c}{0.99} \\
        Soft Target Update Rate & \multicolumn{3}{c}{0.005} \\
        
        Layer Normalization & \multicolumn{3}{c}{Yes} \\
        Replay Buffer Size & \multicolumn{3}{c}{2e6} \\
        
        \bottomrule
    \end{tabular}
    \label{tab:rl_hyperparameters}
\end{table*}

\begin{table*}[htb]
    \centering
    \footnotesize
    \caption{Simulation Results - State}
    \label{tab:sim_results_state}
    \setlength{\tabcolsep}{2.5pt} 

    \begin{tabular}{l ccc  ccc  ccc}
    \toprule
    Without Data & \multicolumn{3}{c}{Kitchen}
    & \multicolumn{3}{c}{Pen}
    & \multicolumn{3}{c}{Door} \\
    \cmidrule(lr){2-4}
    \cmidrule(lr){5-7}
    \cmidrule(lr){8-10}
    
    Method
    & 100k & 200k & 300k
    & 0 & 150k & 300k
    & 100k & 200k & 300k \\
    \midrule
    
    BC Policy
    & 0.69 & 0.69  & 0.69
    & \textbf{0.9} & \textbf{0.9} & \textbf{}{0.9}
    & 0.5 &  0.5 &  0.5 \\
    
    WSRL 
    & $0.31\pm0.09$ & $0.65\pm0.09$ & $0.64\pm0.08$
    & $0.78\pm0.06$ & $\mathbf{0.92\pm0.05}$ & $\mathbf{0.98\pm0.01}$
    & $\mathbf{0.91\pm0.11}$ & $\mathbf{0.98\pm0.01}$ & $\mathbf{1.0}$ \\
    
    CQL 
    & 0.0 & 0.0 & $.06 \pm 0.08$
    & $0.7\pm0.08$ & $0.12\pm0.03$ & $0.05\pm0.03$
    & 0.0 & $0.02\pm0.03$ & $0.31\pm0.33$ \\
    
    CalQL
    & $0.07\pm0.07$ & $0.19\pm0.07$ & $0.23\pm0.02$
    & $0.71\pm0.11$ & $\mathbf{0.87\pm0.02}$ & $\mathbf{0.93\pm0.03}$
    & $0.0$ & $0.0$ & $0.15\pm0.15$ \\
    
    IBRL
    & $0.35\pm0.14$ & $0.43\pm0.18$ & $0.50\pm0.15$
    & $0.1\pm0.15$ & $0.82\pm0.02$ & $\mathbf{0.95\pm0.05}$
    & $0.0$ & $0.0$ & $0.0$ \\
    
    \MethodName{} (Ours)
    & $\mathbf{0.85\pm0.06}$ & $\mathbf{0.87\pm0.03}$ & $\mathbf{0.91\pm0.01}$
    & $\mathbf{0.88\pm0.04}$ & $\mathbf{0.91\pm0.05}$ & $\mathbf{0.93\pm0.03}$
    & $0.55\pm0.15$ & $0.73\pm0.12$ & $0.87\pm0.08$ \\
    
    \midrule
    
    With Data & \multicolumn{3}{c}{Lift-State}
    & \multicolumn{3}{c}{Can-State}
    & \multicolumn{3}{c}{Square-State} \\
    \cmidrule(lr){2-4}
    \cmidrule(lr){5-7}
    \cmidrule(lr){8-10}
    
    Method
    & 100k & 200k & 300k
    & 100k & 300k & 500k
    & 100k & 300k & 500k \\
    \midrule
    
    BC Policy
    & $0.58$ & $0.58$ & $0.58$ 
    & $0.6$ &  $0.6$& $0.6$
    & $\mathbf{0.58}$ & $0.58$ & $0.58$  \\
    
    RLPD 
    &$0.87\pm0.07$ & $\mathbf{0.98\pm0.02}$ & $\mathbf{0.99\pm0.01}$
    & $0.0$ & $0.0$ & $0.0$
    & $0.0$ & $0.0$ &  $0.0$\\
    
    IBRL
    & $\mathbf{0.97\pm0.03}$ & $\mathbf{1.0}$  & $\mathbf{0.98\pm0.02}$
    & $0.02\pm0.02$ & $0.32\pm0.16$ & $0.54\pm0.18$ 
    & $0.0$ & $\mathbf{0.72\pm0.11}$ & $\mathbf{0.94\pm0.01}$ \\
    
    \MethodName{} (Ours)
    & $0.88\pm0.06$ & $\mathbf{1.0}$ & $\mathbf{1.0}$
    & $\mathbf{0.76\pm0.05}$ & $\mathbf{0.86\pm0.03}$ & $\mathbf{0.85\pm0.03}$ 
    & $\mathbf{0.60\pm0.07}$ & $\mathbf{0.78\pm0.06}$ & $0.81\pm0.08$\\
    
    \midrule
    
    Without Data & \multicolumn{3}{c}{Lift-State}
    & \multicolumn{3}{c}{Can-State}
    & \multicolumn{3}{c}{Square-State} \\
    \cmidrule(lr){2-4}
    \cmidrule(lr){5-7}
    \cmidrule(lr){8-10}
    
    Method
    & 100k & 200k & 300k
    & 100k & 300k & 550k
    & 100k & 300k & 550k \\
    \midrule
    
    BC Policy
    & $0.58$ & $0.58$ & $0.58$ 
    & $\mathbf{0.6}$ &  $0.6$& $0.6$
    & $\mathbf{0.58}$ & $0.58$ & $0.58$  \\
    
    WSRL 
    & $0.01\pm0.01$ & $0.0$ & $0.0$
    & $0.0$ & $0.0$ & $0.0$
    & $0.0$ & $0.0$ & $0.0$ \\
    
    CQL 
    & $0.0$ & $0.0$ & $0.0$
    & $0.0$ & $0.0$ & $0.0$
    & $0.0$ & $0.0$ & $0.0$ \\
    
    CalQL
    & $0.0$ & $0.0$ & $0.0$
    & $0.0$ & $0.0$ & $0.0$
    & $0.0$ & $0.0$ & $0.0$ \\
    
    IBRL
    & $0.01\pm0.01$ & $0.0$  & $0.0$
    & $0.0$ & $0.0$ & $0.0$
    & $0.0$ & $0.0$ & $0.0$ \\
    
    \MethodName{} (Ours)
    & $\mathbf{0.86\pm0.08}$ & $\mathbf{0.96\pm0.05}$  & $\mathbf{1.0}$ 
    & $\mathbf{0.57\pm0.12}$ & $\mathbf{0.75\pm0.13}$ & $\mathbf{0.82\pm0.10}$
    & $\mathbf{0.51\pm0.15}$ & $\mathbf{0.68\pm0.06}$ & $\mathbf{0.76\pm0.06}$ \\
    
    \bottomrule
    \end{tabular}
\end{table*}

\begin{table*}[htb]
    \centering
    \footnotesize
    \caption{Simulation Results - Image}
    \label{tab:sim_results_image}
    \setlength{\tabcolsep}{2.5pt} 

    \begin{tabular}{l cc  ccc}
    \toprule
    
    With Data & \multicolumn{2}{c}{Lift-Image}
    & \multicolumn{3}{c}{Can-Image} \\
    \cmidrule(lr){2-3}
    \cmidrule(lr){4-6}
    
    Method
    & 100k & 200k
    & 100k & 300k & 500k \\
    \midrule
    
    BC Policy
    & $0.6$ &  $0.6$
    & $0.45$ &  $0.45$& $0.45$  \\
    
    RLPD 
    & $0.0$ & $0.0$
    & $0.0$ & $0.0$ & $0.0$\\
    
    IBRL
    & $\mathbf{1.0}$ & $\mathbf{1.0}$
    & $0.12\pm0.13$ & $0.01\pm0.01$ & $0.03\pm0.05$  \\
    
    \MethodName{} (Ours)
    & $0.93\pm0.05$ & $\mathbf{1.0}$
    & $\mathbf{0.53\pm0.07}$ & $\mathbf{0.63\pm0.02}$ & $\mathbf{0.73\pm0.06}$ \\
    \midrule
    
    Without Data & \multicolumn{2}{c}{Lift-Image}
    & \multicolumn{3}{c}{Can-Image} \\
    \cmidrule(lr){2-3}
    \cmidrule(lr){4-6}
    
    Method
    & 100k & 200k
    & 100k & 300k & 550k \\
    \midrule
    
    BC Policy
    & $0.6$ &  $0.6$
    & $\mathbf{0.45}$ &  $0.45$& $0.45$  \\
    
    IBRL
    & $0$ & $0.01\pm0.01$
    & $0.0$ & $0.0$ & $0.0$ \\
    
    \MethodName{} (Ours)
    & $\mathbf{0.87\pm0.05}$ & $\mathbf{0.98\pm0.01}$ 
    & $\mathbf{0.45\pm0.07}$ & $\mathbf{0.57\pm0.10}$ & $\mathbf{0.63\pm0.05}$ \\
    
    \bottomrule
    \end{tabular}
    \vspace{-1em}
\end{table*}

\newpage

\subsection{Additional Details}
We provide some additional details on our simulation experiments: 
 
\subsubsection{IBRL implementation}
IBRL uses observation history ($t-2, t-1, t$) for their RL policy.
To keep comparisons fair, we implement our RL policy conditioned only on the current observation.
Our results demonstrate that \MethodName{} performs well using a simple RL policy architecture that takes a single observation as input and outputs a single action.
\subsubsection{Sampling Actions}
The Q-gating step in \MethodName{} selects the action with the highest Q-value.
Accordingly, in simulation experiments we evaluate RL actions using the mode of the policy distribution.
In real-world settings, which are inherently more stochastic, we instead sample actions using the policy’s distributional standard deviation.
\MethodName{} demonstrates good performance under both action sampling strategies.

\subsection{Details on Simulation Results}
To support the plots in the main paper, we provide success rate numbers and confidence intervals for state-based simulation tasks in Table~\ref{tab:sim_results_state} and image-based tasks in Table~\ref{tab:sim_results_image}.
All the evaluations are done for 20 rollouts and 5 seeds.
We do not evaluate offline RL-based methods in image-based environments, as they failed to successfully pretrain and already exhibited poor performance on the more challenging state-based tasks.

\section{Additional Experiments and Ablations}
\label{sec:app_ablations}

\subsection{Testing the Soft-Optimality Assumption}
To test \MethodName{}'s sensitivity to the soft-optimality assumption, we provide experiments simulating other forms of sub-optimality.
We use a \emph{deterministic} $\pi_{BC}$ plus $\epsilon$ noise sampled from Gaussian and uniform distributions.
This non soft-optimal $\pi_{BC}$ is used for both Q-Estimation and online RL.
Fig.~\ref{fig:nonsoft_BC} shows that such policies suffer an initial drop in success (performance at dotted line), but they recover during online RL, demonstrating the robustness of \MethodName{}.
\begin{figure}[htb]
  \centering
  \includegraphics[width=0.95\linewidth]{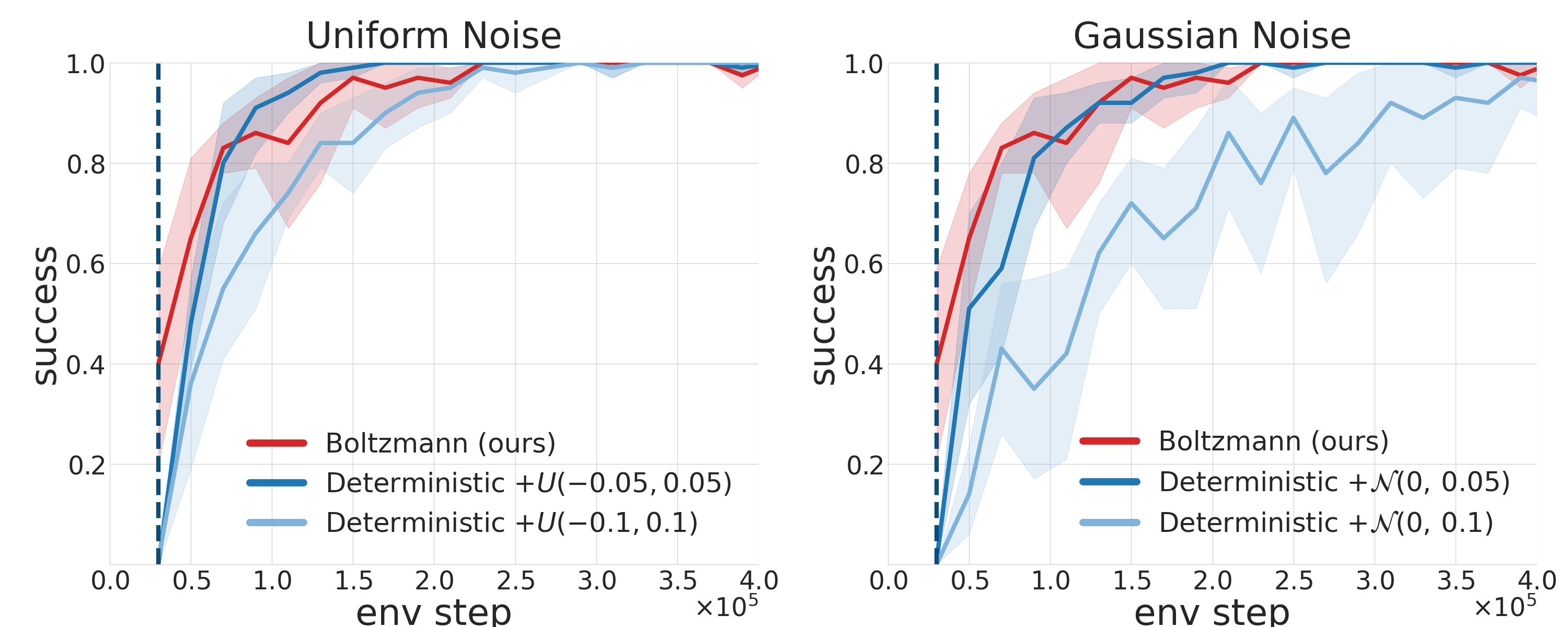}
  \caption{Results with non soft-optimal BC policies for Lift-State on 20 trials. \label{fig:nonsoft_BC}}
\end{figure}

\subsection{Baseline: Residual Reinforcement Learning}

We compare \MethodName{} with Policy Decorator~\cite{yuan2024policy}, a state-of-the-art residual learning method.
Residual policy learning is another approach to improving a pre-trained BC policy by providing corrective adjustments to the BC actions. 
Policy Decorator introduces two key enhancements to the Residual RL framework: (1) scaling the residual action added to the base policy, and (2) employing a uniform progressive exploration schedule for the residual policy.
The results are shown in Fig.~\ref{fig:residual_rl}.
We find that while Policy Decorator is able to recover and modestly improve upon the BC performance for the Can task, it requires significantly more interaction to recover the BC performance on more challenging Square task.

\begin{figure}[htb]
    \centering
    \includegraphics[width=\linewidth]{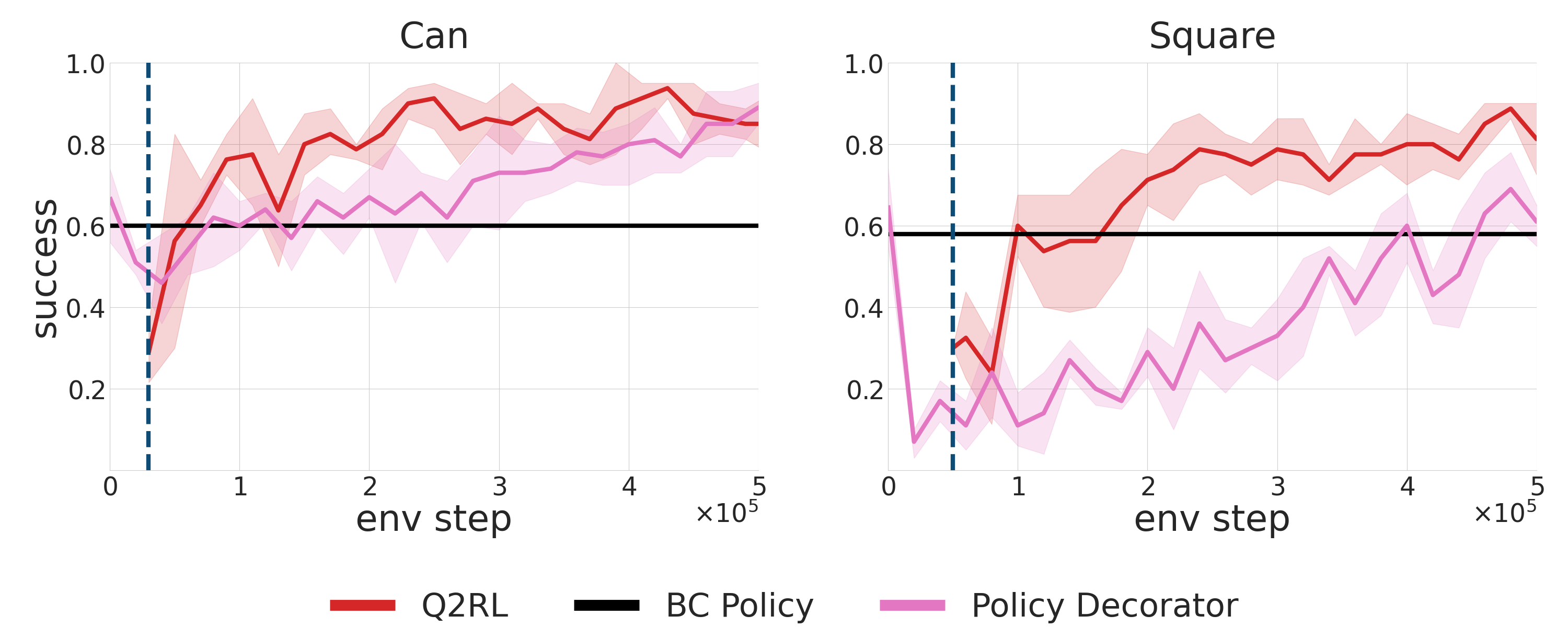}
    \caption{Policy Decorator takes more online interaction to recover BC performance for challenging tasks like Square.}
    \label{fig:residual_rl}
\end{figure}

\subsection{Ablation: Seeding the Online Replay Buffer}
\label{sec:ablation_seeding}

We analyze the effect of seeding the online replay buffer with different fractions of data for both \MethodName{} and IBRL (Fig.~\ref{fig:ablation_dataset}). 
\MethodName{} maintains strong performance regardless of whether offline training data is available in the replay buffer, corroborating our real-world experiments. 
In contrast, IBRL shows a strong dependence on replay buffer seeding.

\begin{figure}[htb]
    \centering
    \includegraphics[width=\linewidth]{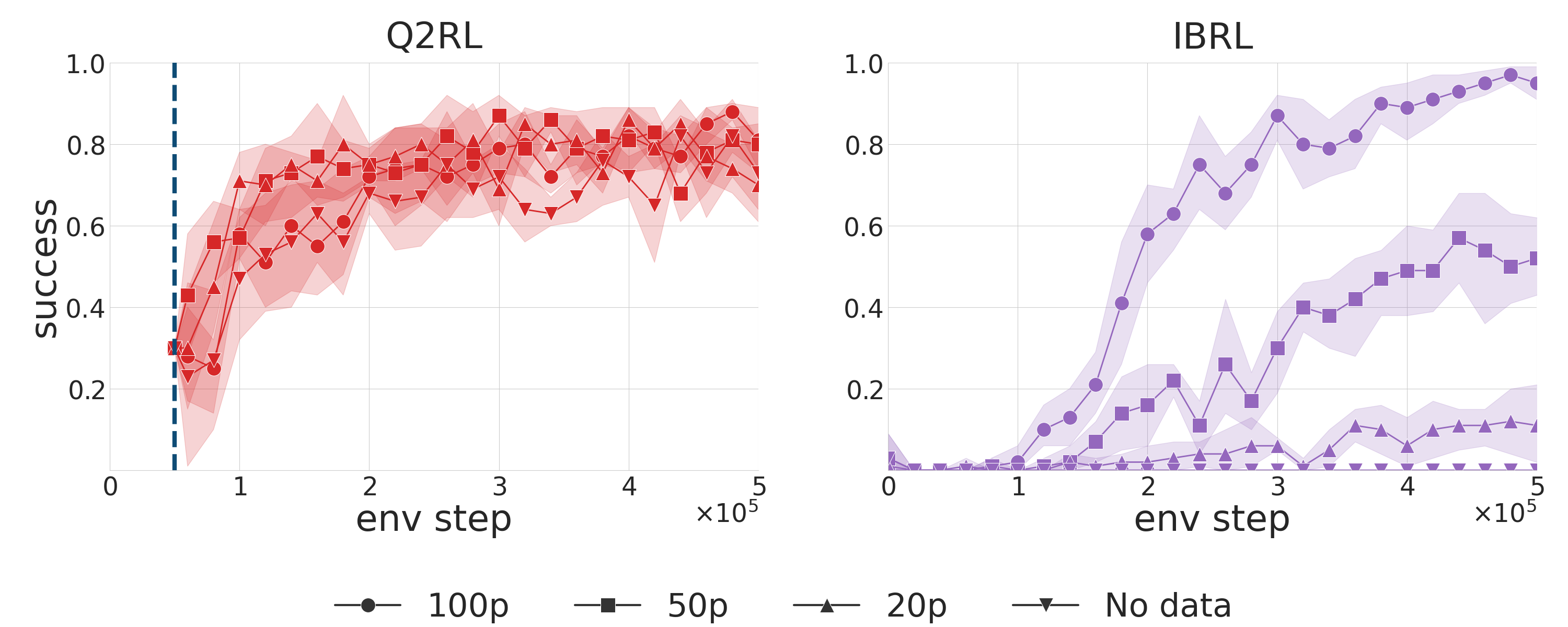}
    \caption{Seeded Online Replay Buffer Ablation. \MethodName{} does not require seeding the online replay buffer with data.}
    \label{fig:ablation_dataset}
\end{figure}

\subsection{Ablation: BC Auxiliary Loss}

\MethodName{} incorporates an auxiliary behavior cloning (BC) loss to stabilize training, whereas IBRL relies on access to demonstration data. To ensure a fair comparison in settings without data access, we also evaluate IBRL with the same BC loss in Fig.~\ref{fig:ablation_ibrl_bc_loss}. While the BC loss improves IBRL relative to the no-BC loss variant, it does not lead to meaningful performance gains compared to \MethodName{}.

\begin{figure}[htb]
    \centering
    \includegraphics[width=0.6\linewidth]{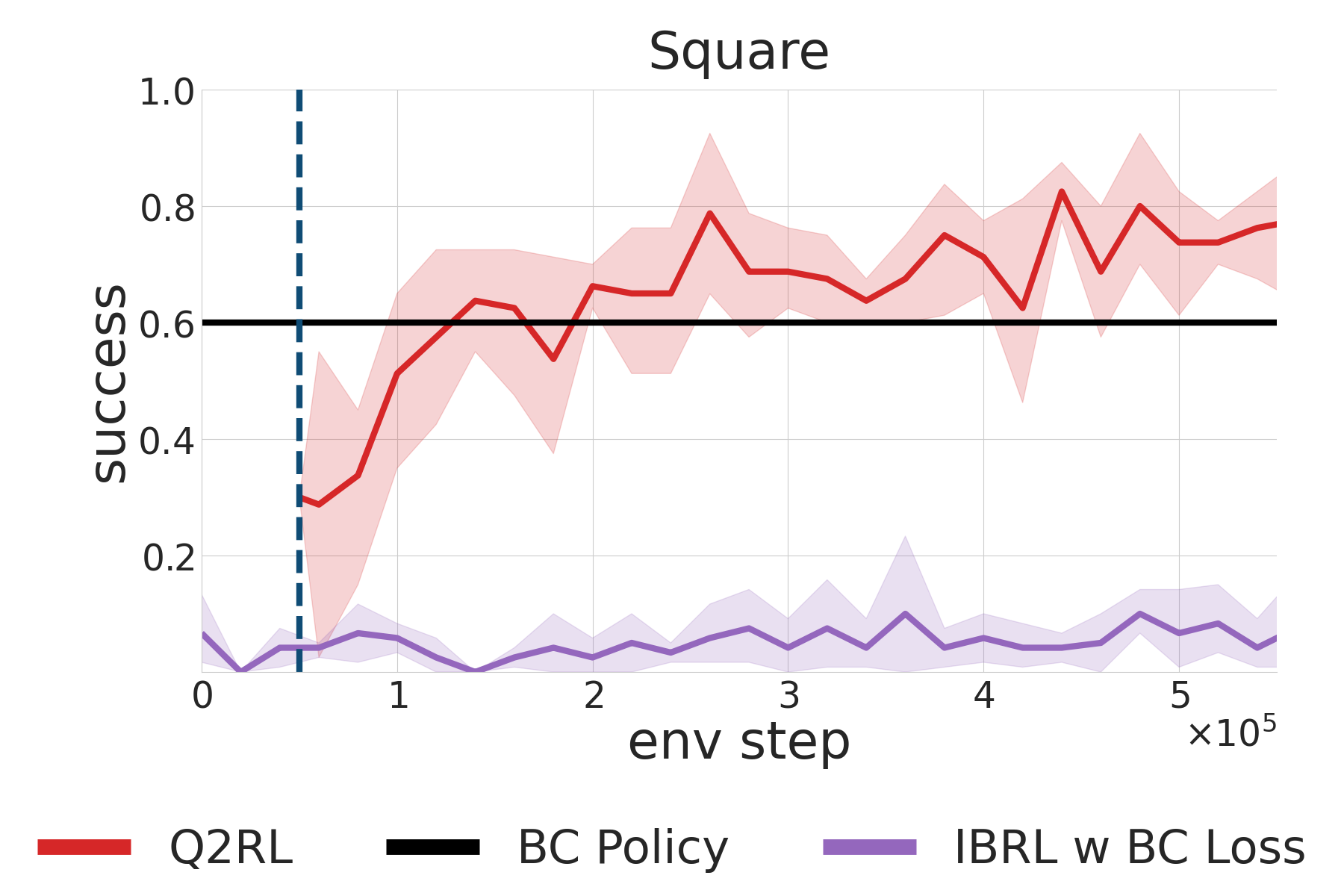}
    \caption{IBRL is not able to match the performance of \MethodName{} even with the BC auxiliary loss in the absence of training data.}
    \label{fig:ablation_ibrl_bc_loss}
\end{figure}

\subsection{Ablation: Number of Rollouts used for Q-Estimation} 
We ablate the number of rollouts used during the Q-estimation phase in Fig.~\ref{fig:ablation_rollout}, using the Can-State task in the with data setting.
We observe that \MethodName{} achieves competitive performance with as few as 25 rollouts.
In our main simulation experiments, we use 100 rollouts to obtain more reliable Q-estimates and ensure stable performance.

\label{sec:ablation_rollout}
\begin{figure}[htb]
    \centering
    \includegraphics[width=0.7\linewidth]{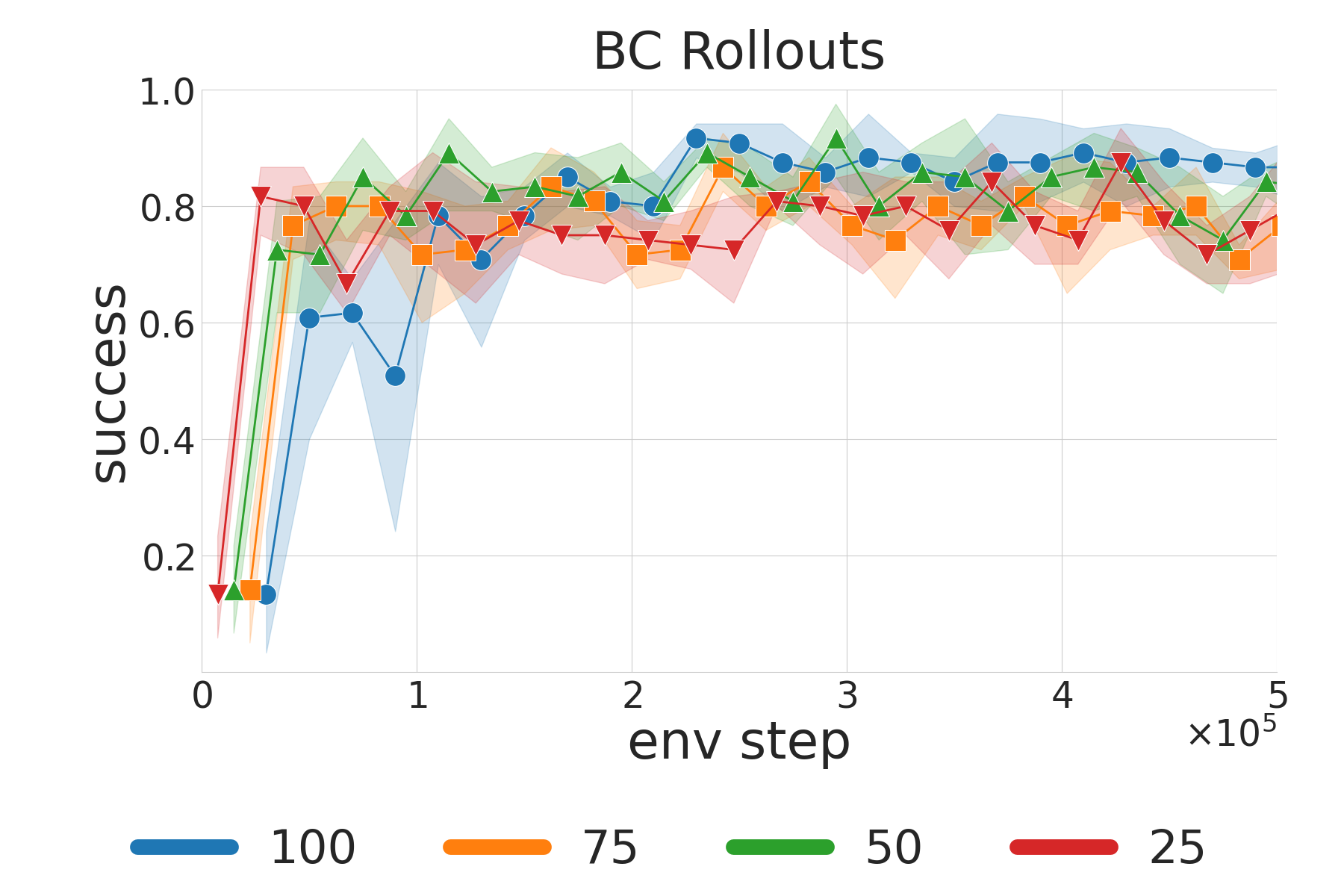}
    \caption{\MethodName{} achieves competitive performance with as few as 25 rollouts.}
    \label{fig:ablation_rollout}
\end{figure}

\subsection{Ablation: Initial BC Performance}

Next, we evaluate whether \MethodName{} can improve BC policies with varying levels of initial performance.
We use the same experimental setup as in Sec.~\ref{sec:ablation_rollout} and test \MethodName{} with different BC checkpoints, with initial success rates ranging from 10\% to 75\%, as shown in Fig.~\ref{fig:ablation_checkpoint}.
We observe that \MethodName{} consistently improves performance across all initializations.

\begin{figure}[htb]
    \centering
    \includegraphics[width=0.7\linewidth]{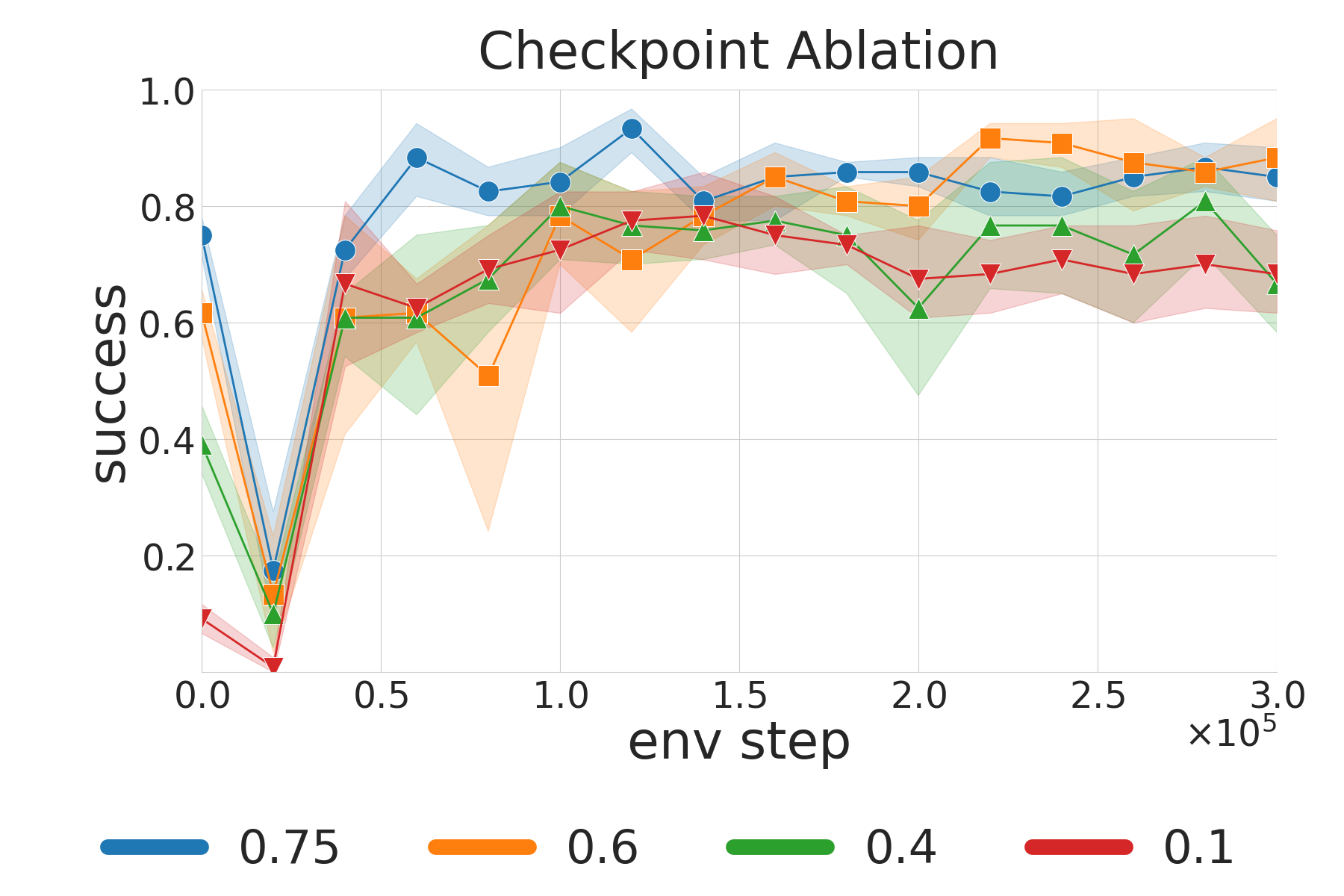}
    \caption{\MethodName{} is able to improve a wide range of initial BC performance.}
    \label{fig:ablation_checkpoint}
\end{figure}

\subsection{Qualitative Result: Door}
\label{app:qualitative_door}

\begin{figure*}[t]
\centering

\begin{subfigure}{0.24\textwidth}
    \centering
    \includegraphics[width=\linewidth]{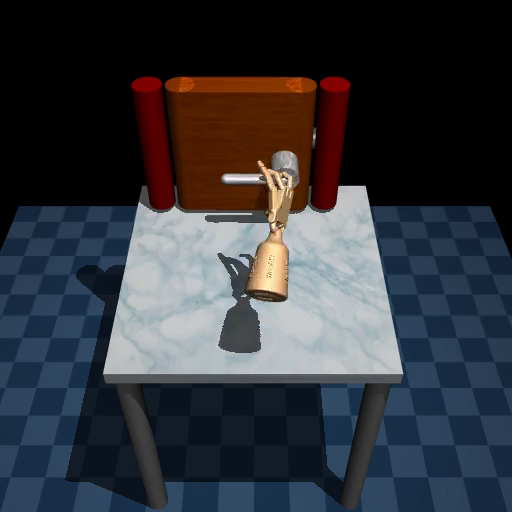}
\end{subfigure}
\hfill
\begin{subfigure}{0.24\textwidth}
    \centering
    \includegraphics[width=\linewidth]{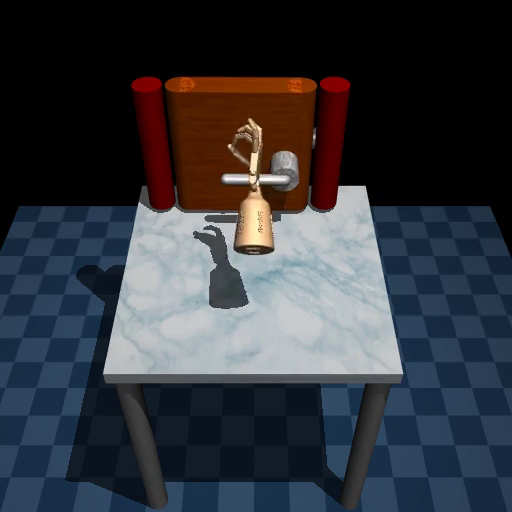}
\end{subfigure}
\hfill
\begin{subfigure}{0.24\textwidth}
    \centering
    \includegraphics[width=\linewidth]{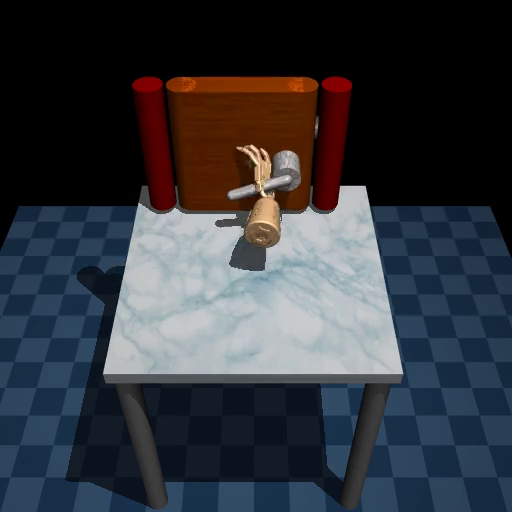}
\end{subfigure}
\hfill
\begin{subfigure}{0.24\textwidth}
    \centering
    \includegraphics[width=\linewidth]{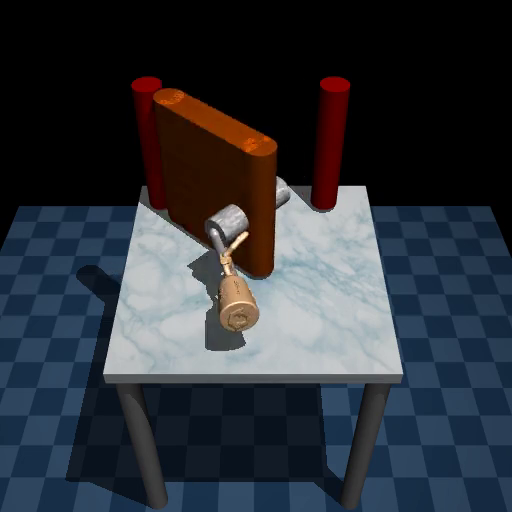}
\end{subfigure}
\vspace{0.6em}
\begin{subfigure}{\textwidth}
\centering
\caption*{WSRL rollout on Adroit-Door.}
\end{subfigure}
\vspace{0.6em}
\begin{subfigure}{0.24\textwidth}
    \centering
    \includegraphics[width=\linewidth]{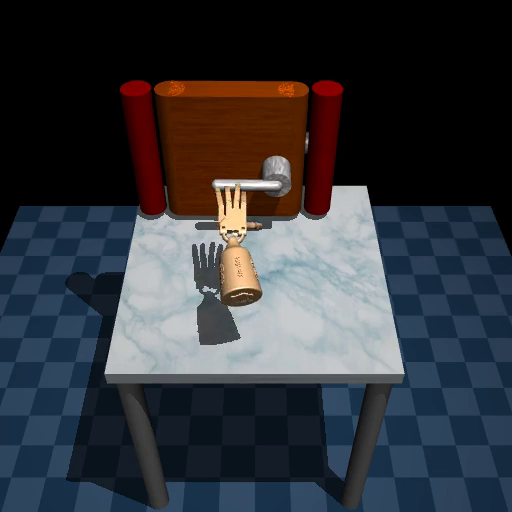}
\end{subfigure}
\hfill
\begin{subfigure}{0.24\textwidth}
    \centering
    \includegraphics[width=\linewidth]{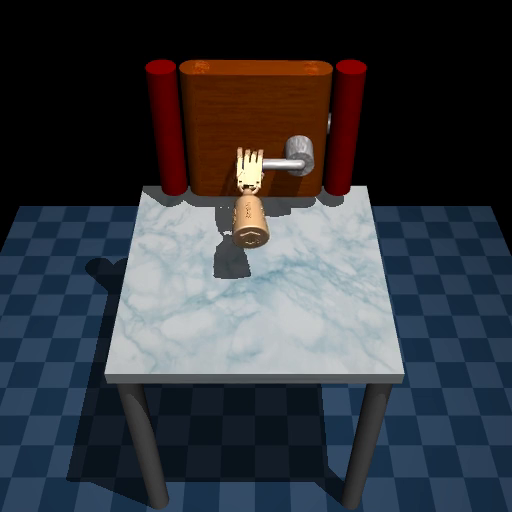}
\end{subfigure}
\hfill
\begin{subfigure}{0.24\textwidth}
    \centering
    \includegraphics[width=\linewidth]{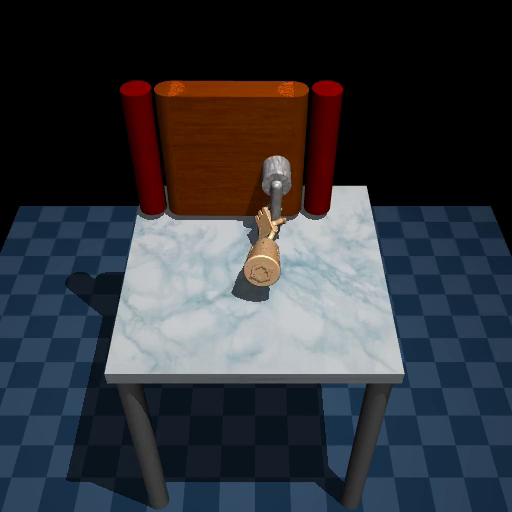}
\end{subfigure}
\hfill
\begin{subfigure}{0.24\textwidth}
    \centering
    \includegraphics[width=\linewidth]{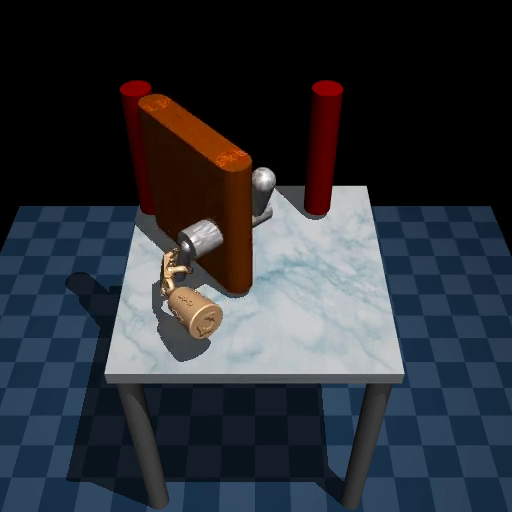}
\end{subfigure}

\begin{subfigure}{\textwidth}
\centering
\caption*{Q2RL rollout on Adroit-Door.}
\end{subfigure}

\caption{
\MethodName{} learns more realistic policies than WSRL, even though WSRL achieves higher success rate for Adroit-Door.
}
\label{fig:qualitative_door}
\end{figure*}
While Fig.~\ref{fig:d4rl} shows that WSRL achieves higher success rates than \MethodName{} for the Adroit-Door task, qualitative rollouts (Fig.~\ref{fig:qualitative_door}) reveal that \MethodName{} learns more realistic policies where the hand first grabs the door handle and then opens the door.
On the other hand, the policies learned by WSRL are less feasible in the real world.

\section{Details on Real World Experiments}
\label{app:real_exps}

\begin{table*}[htb]
    \footnotesize
    \setlength{\tabcolsep}{2pt}
    \centering
    \caption{Real World Experiment Configurations}
    \begin{tabular}{cccc}
        \toprule
         & Peg Insertion & Pipe Assembly & Kitting (Modified) \\
        \midrule
        Initial Pose Distribution (\unit{\meter}, \unit{\radian}) & $\pm$0.045, $\pm$0.07875 & fixed & fixed \\
        Delta Action Range (\unit{\meter}, \unit{\radian}) & $\pm$0.015, $\pm$0.02625 & $\pm$0.015, $\pm$0.02625 & $\pm$0.015, $\pm$0.02625 \\
        Safety Limit XYZ (\unit{\meter}) 
            & [-0.02, 0.05], [-0.1, 0.02], [-0.13, 0.02] 
            & [-0.2, 0.1], [-0.2, 0.2], [-0.4, 0.05] 
            & [-0.2, 0.3], [-0.2, 0.2], [-0.18, 0.05] \\
        Safety Limit RPY (\unit{\radian}) 
            & [-5, 5], [-5, 5], [-20, 20] 
            & [-5, 5], [-5, 5], [-20, 20] 
            & [-5, 5], [-5, 5], [-20, 20] \\
        Num. Demos for BC Training & 50 & 100 & 50 \\
        Num. Rollouts for Q-Estimation Phase & 100 & 100 & 30 \\
        Max Episode Length & 200 & 200 & 500 \\
        \midrule
        Cartesian Stiffness (XYZRPY) & & [1024, 1024, 1024, 100, 100, 100]  &  \\
        Cartesian Damping (XYZRPY) &  &  [64, 64, 64, 10, 10, 10] &  \\
        Policy Hz & & 10 & \\
        \bottomrule
    \end{tabular}
    \label{tab:task_config}
\end{table*}

\subsection{Hardware} 
We conducted real world, on-robot experiments on a tabletop Franka FR3 arm.
The arms is equipped with Robotiq 2F-85 gripper with 3D-printed, compliant ``finray'' fingertips.  
Two Realsense D405s provide RGB input, one providing the workspace view, and the other the wrist view. 
See Fig.~\ref{fig:real_setup} for an image of the robot and workspace.
A force-torque sensor is mounted on the wrist but is not used.
A 3D Connexion Spacemouse is used to collect teleoperated human demonstrations and to assist with environment resets.

\subsection{Software} 
The inputs for all policies are the end effector pose in the robot base frame, and 84 x 84 RGB images (cropped and resized from 480 x 848) from each of the two cameras.
For the FMB~\cite{luo2025fmb} peg insertion task, the gripper is disabled.
The policy outputs delta end effector pose actions and gripper commands (if relevant).
The delta end effector actions are scaled to metric range, clipped to be within safe limits, converted to the robot base frame, and executed by a Cartesian impedance controller.
The stiffness and damping gains for the controller were set prior to starting experiments and the values were kept the same for all tasks.
The policy maintains a 10 Hz action frequency.

Our on-robot RL system consists of an actor process and a learner process.
The actor process collects transition samples by executing actions on the robot.
The learner process performs gradient descent based on collected data. 
The actor and learner run as independent processes and communicate asynchronously via ZeroMQ~\cite{agentlace2024}.
This asynchronous approach allows both processes to run simultaneously, speeding up time to convergence~\cite{luo2024serl, wu2022daydreamer}. 
For Q2RL, the actor process loads the frozen BC policy and maintains a copy of the RL policy's weights.
The learner process also maintains a copy of the RL weights for updating via learning. 
The actor asynchronously updates the learner's replay buffer every 30 actor samples, and the learner asynchronously updates the actor's network every 30 learner training steps. 
On average, our system executes around 13k RL actions and 44k RL learner steps per hour.
\subsection{Task Descriptions}

This section provides detailed descriptions for real world tasks. 
For all tasks, we reset the environment via motion planning to move to a fixed reset pose or a randomized reset pose. 
During reset, spacemouse teleoperation is sometimes used to move the arm into free-space before executing motion planning (e.g., when a pipe is partially inserted at the end of an episode). 
A human assists with resets and also provides the success signal based on the success conditions below.
The success signal terminates the episode; otherwise, the episode is truncated upon reaching the max episode length. 
See Fig.~\ref{fig:real_kitting} for key-frame images for each task and Tab.~\ref{tab:task_config} for task configuration parameters.

\begin{figure*}[ht]
    \centering
    \subfloat[Peg Insert (start)
    ]{\includegraphics[width=0.2\linewidth]{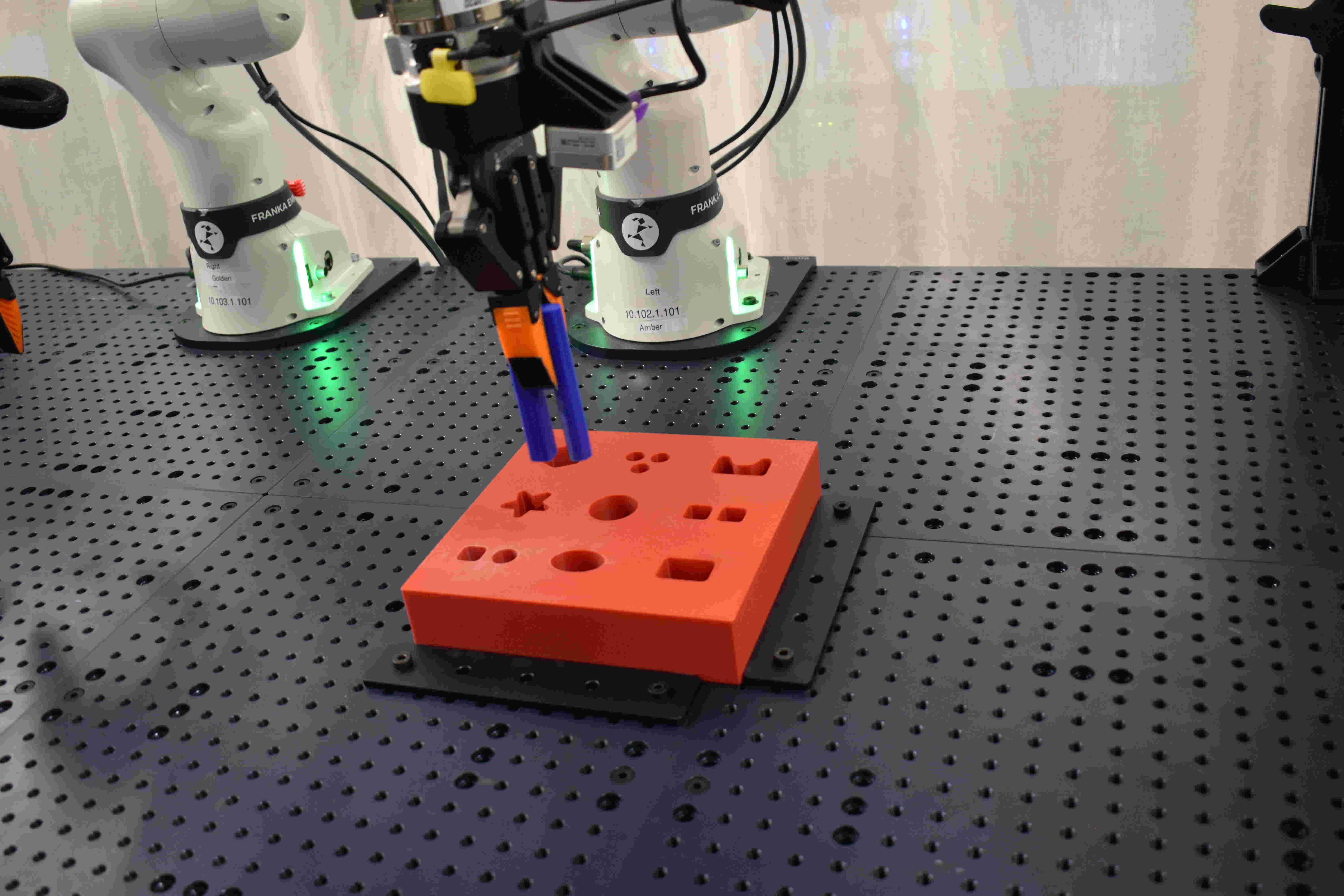}}
    \subfloat[Align
    ]{\includegraphics[width=0.2\linewidth]{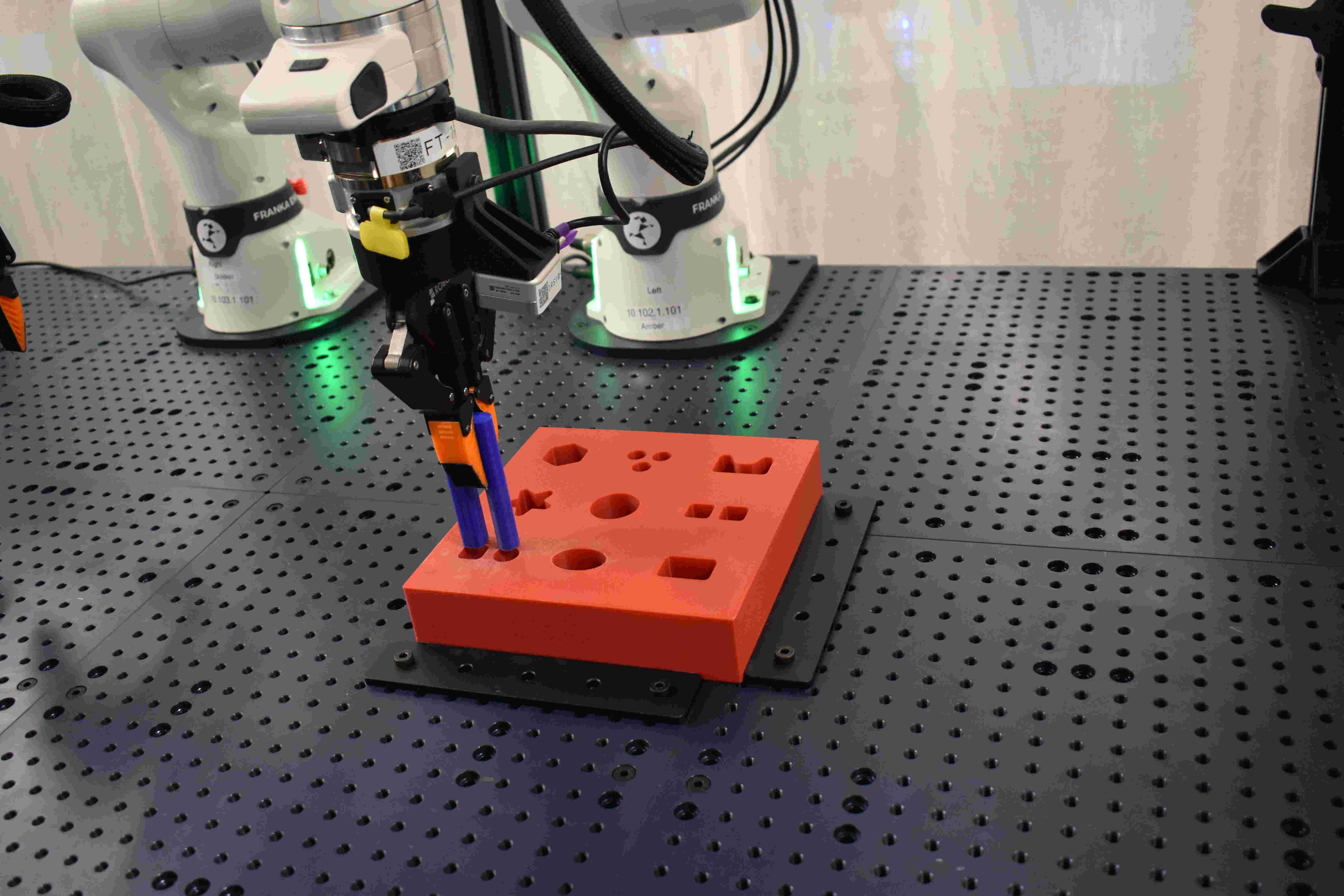}}
    \subfloat[Insert (done)
    ]{\includegraphics[width=0.2\linewidth]{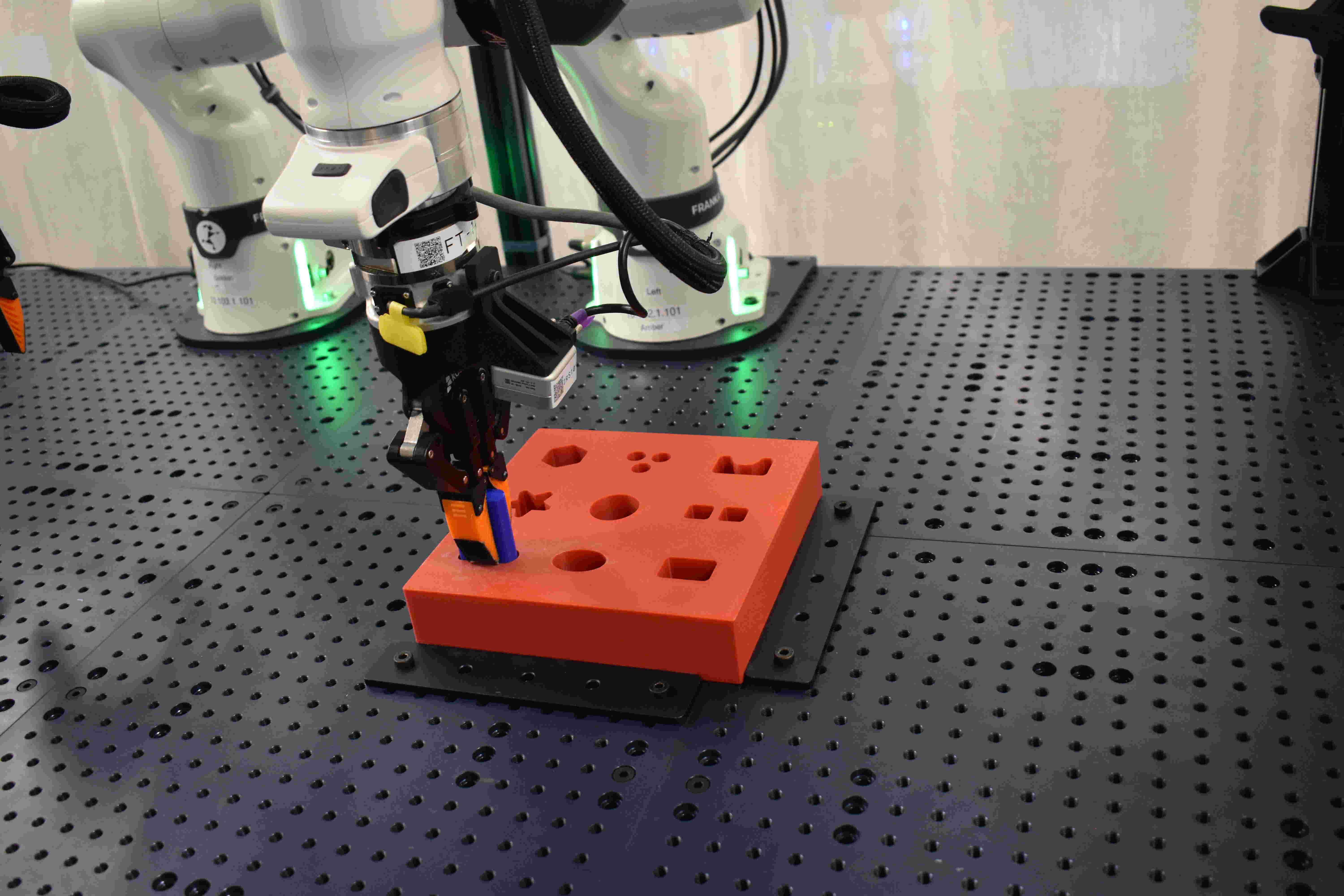}}
    \hfill
    \vspace{1em}
    \subfloat[Pipe Assembly (start)
    ]{\includegraphics[width=0.2\linewidth]{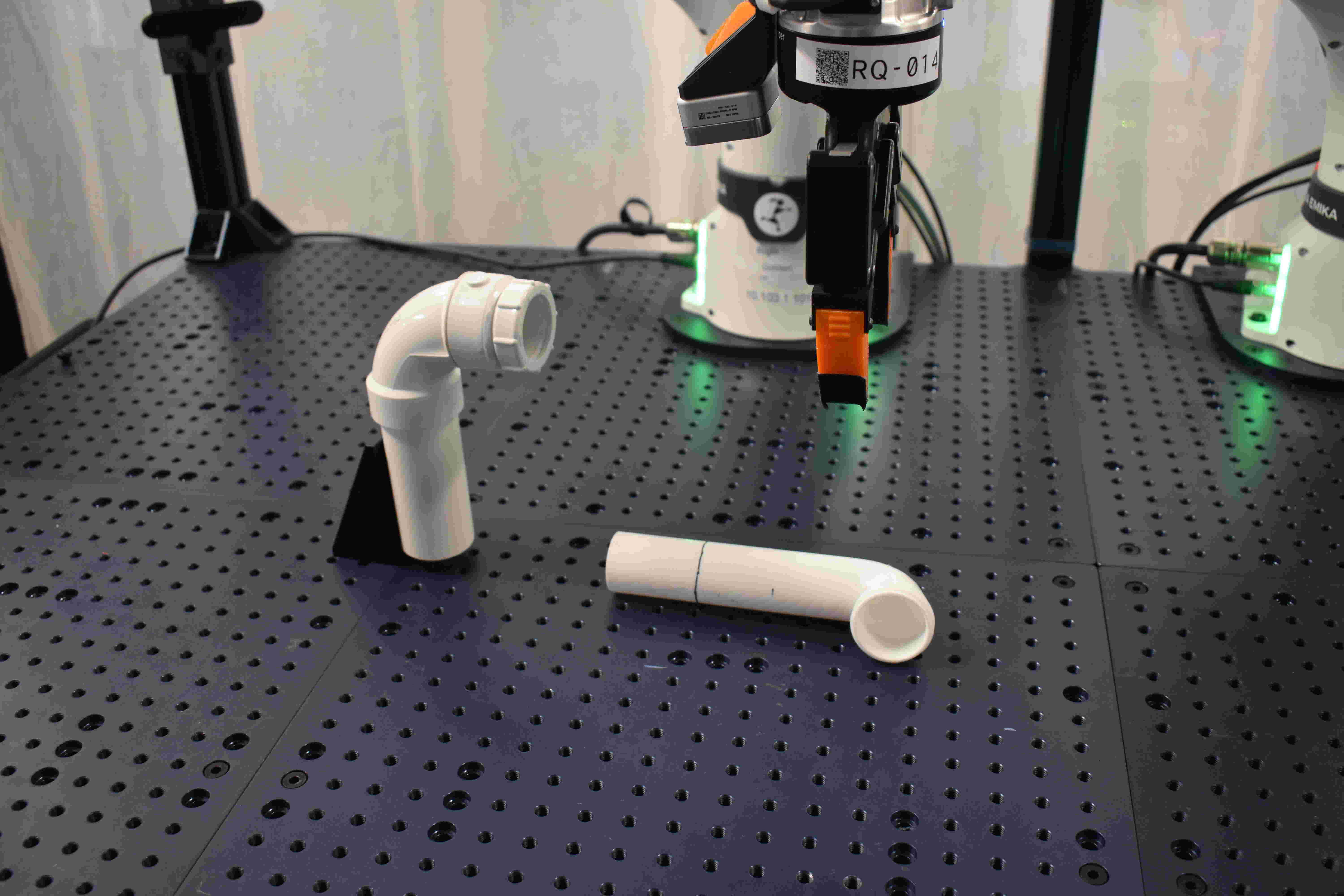}}
    \subfloat[Grasp
    ]{\includegraphics[width=0.2\linewidth]{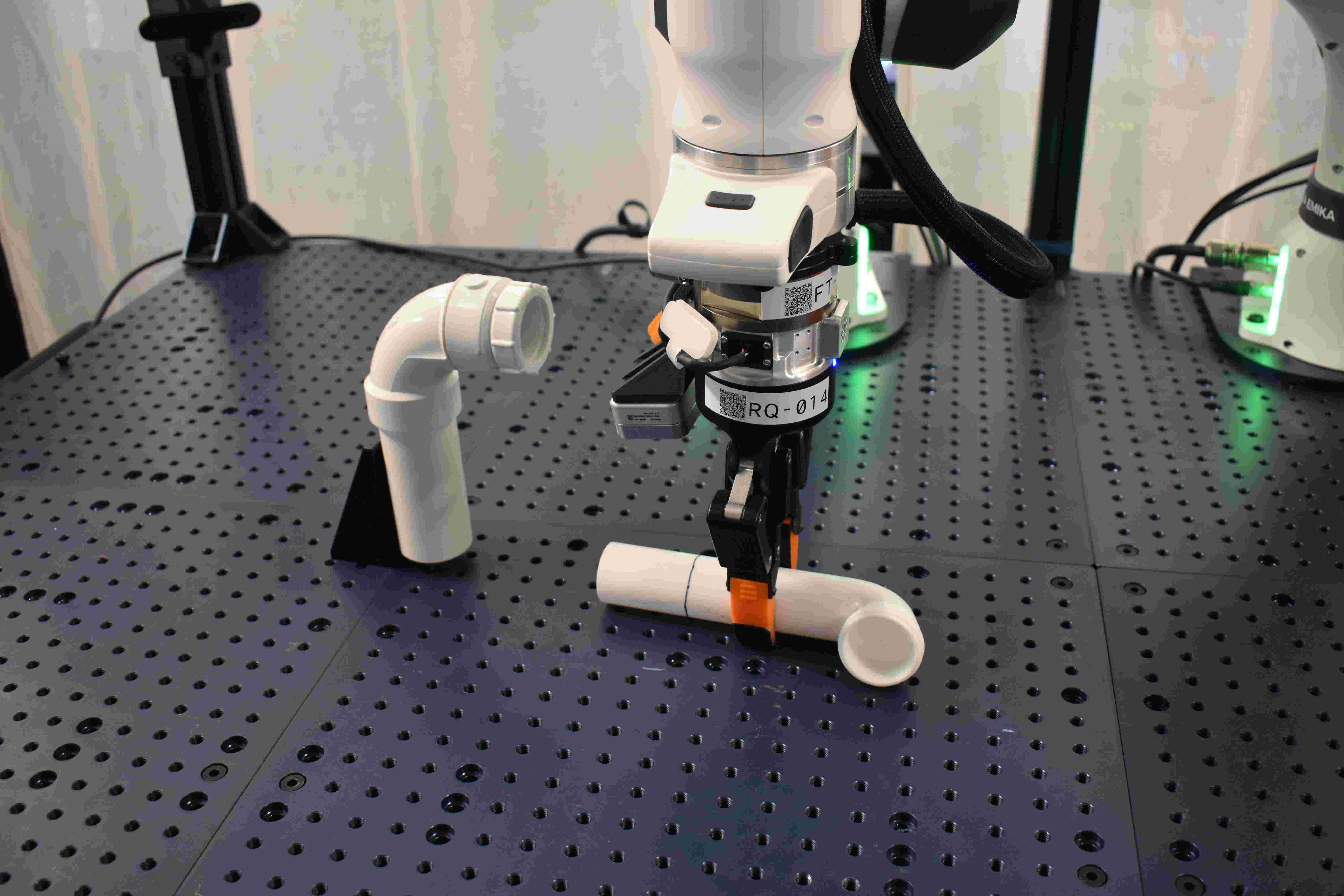}}
    \subfloat[Align
    ]{\includegraphics[width=0.2\linewidth]{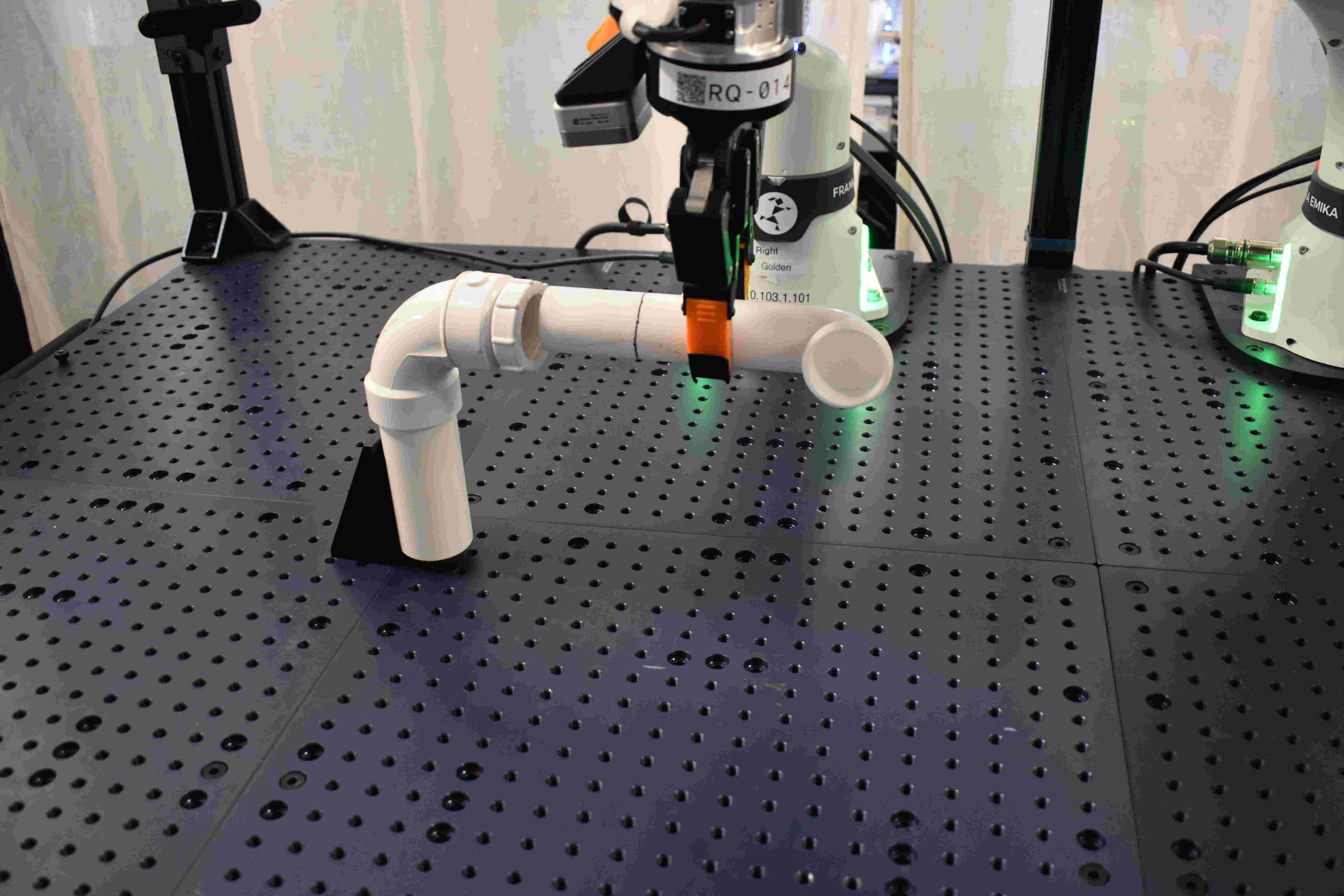}}
    \subfloat[Insert (done)
    ]{\includegraphics[width=0.2\linewidth]{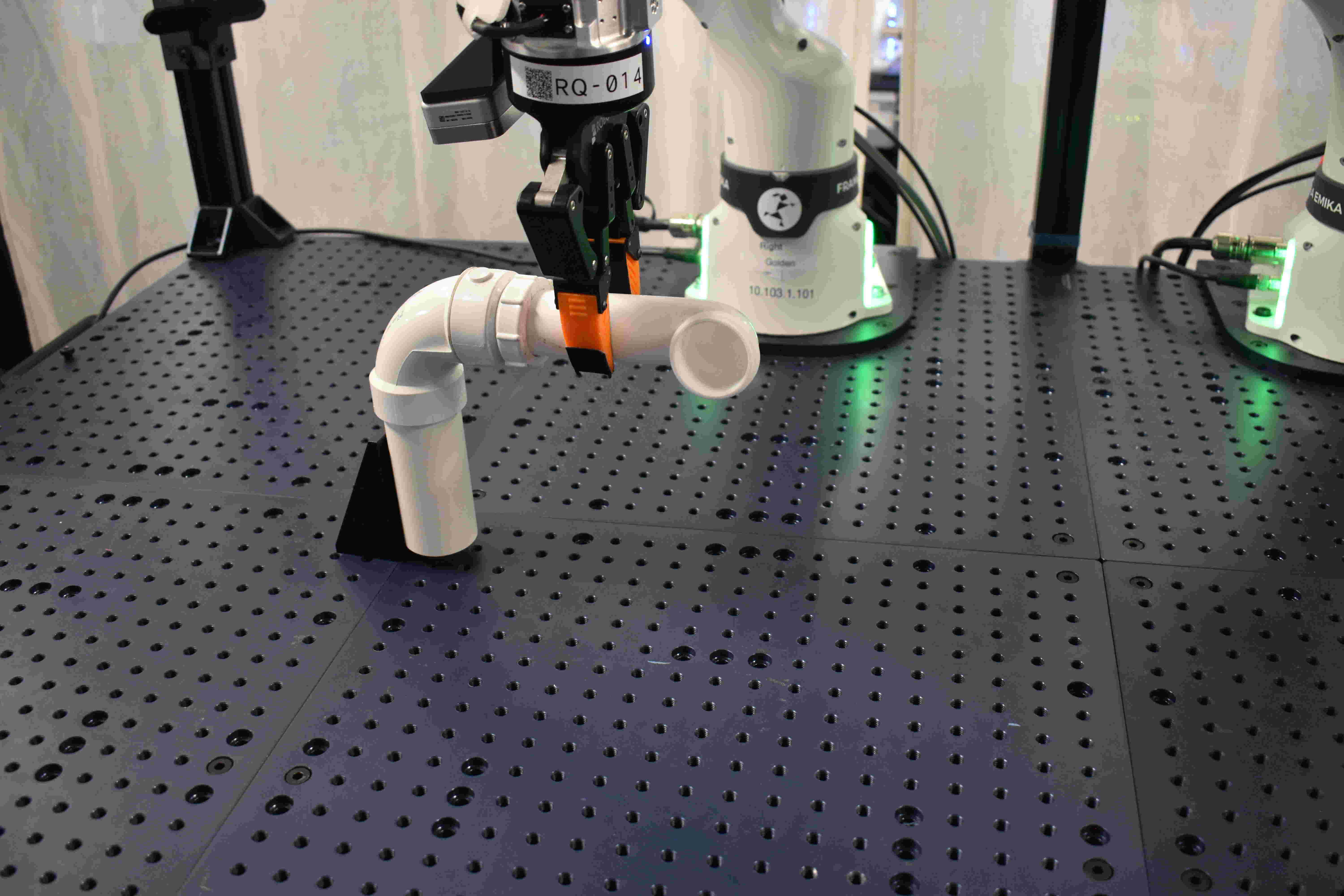}}
    \hfill
    \vspace{1em}
    \subfloat[Kitting-Original (start)
    ]{\includegraphics[width=0.2\linewidth]{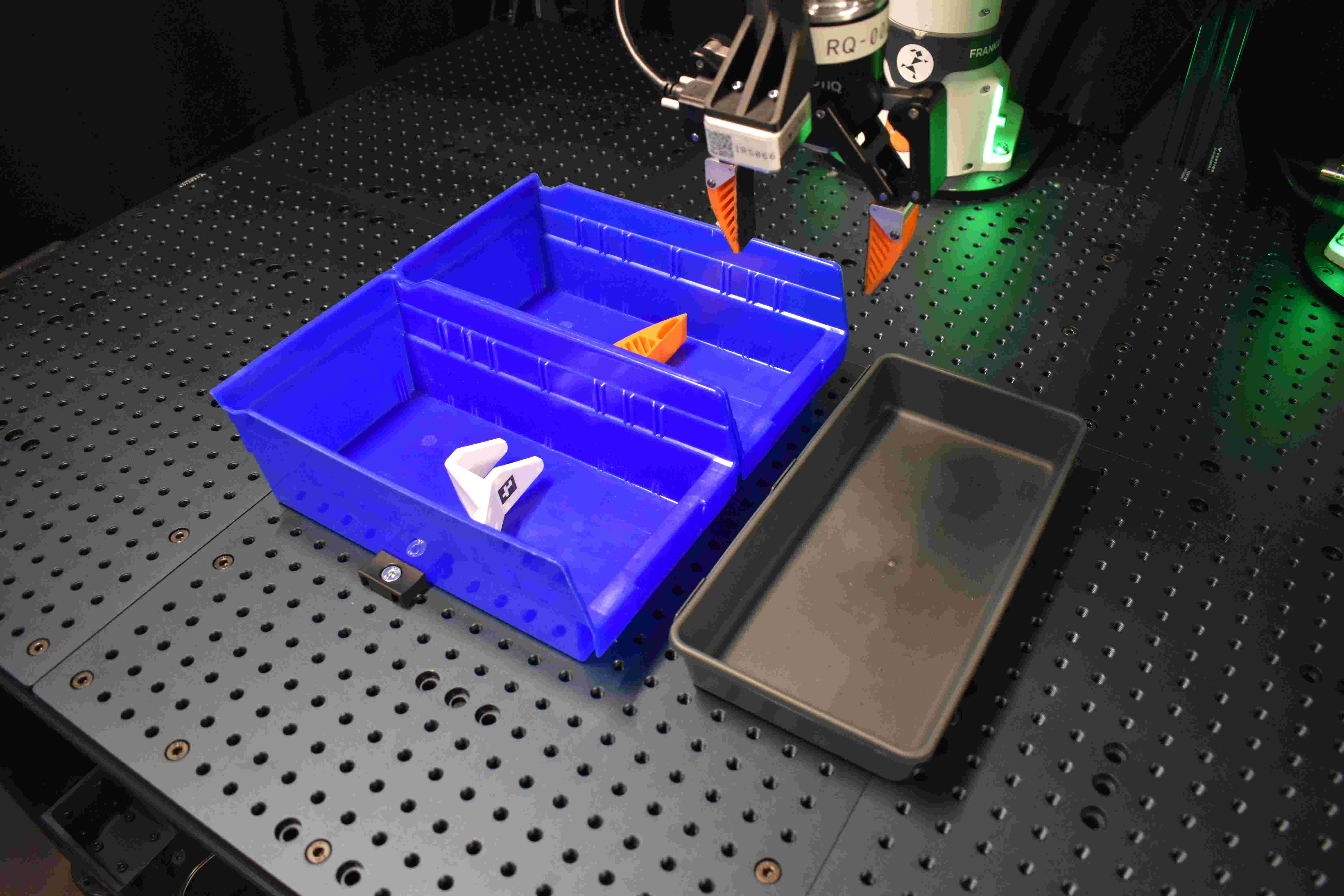}}
    \subfloat[Pick \#1
    ]{\includegraphics[width=0.2\linewidth]{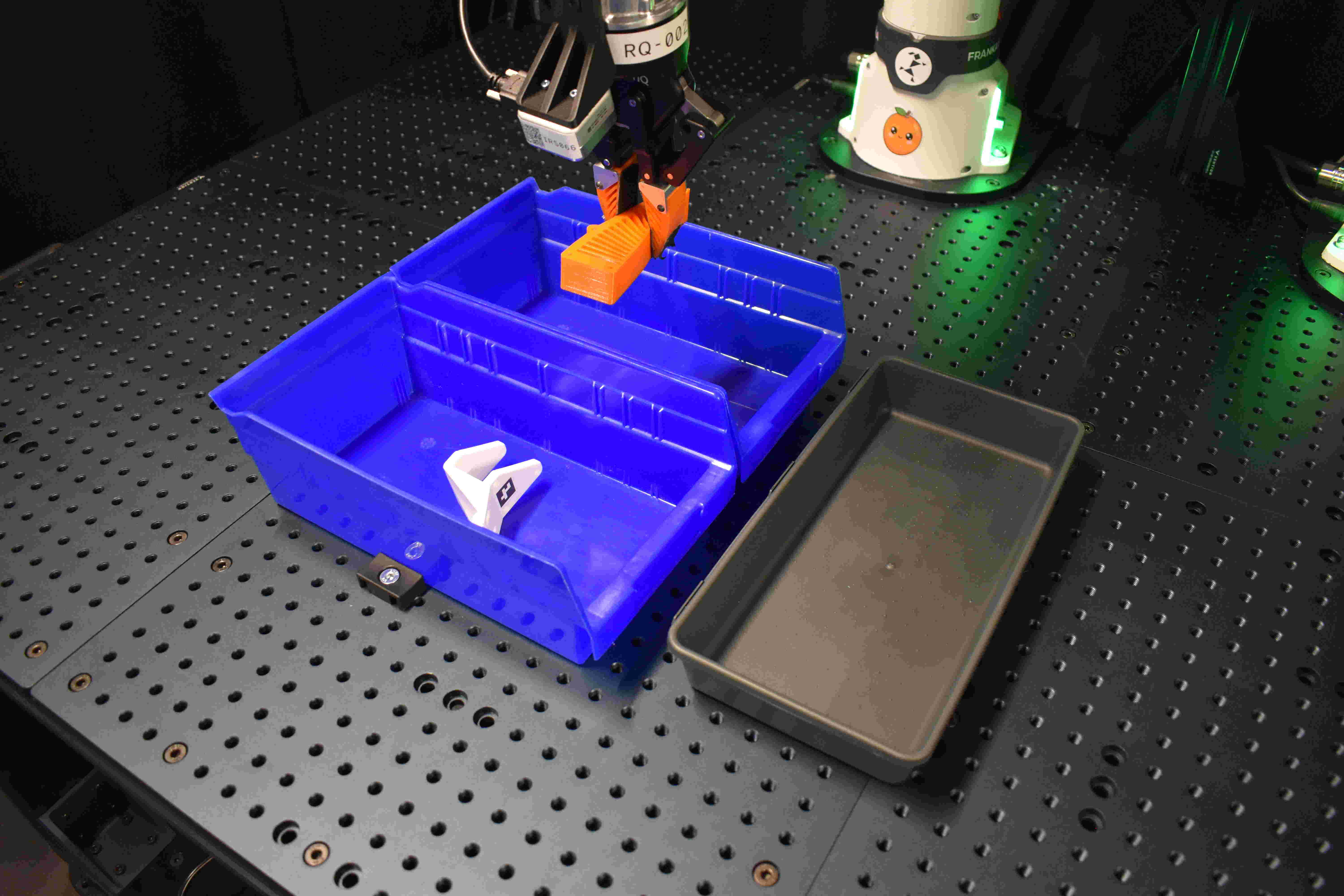}}
    \subfloat[Place \#1
    ]{\includegraphics[width=0.2\linewidth]{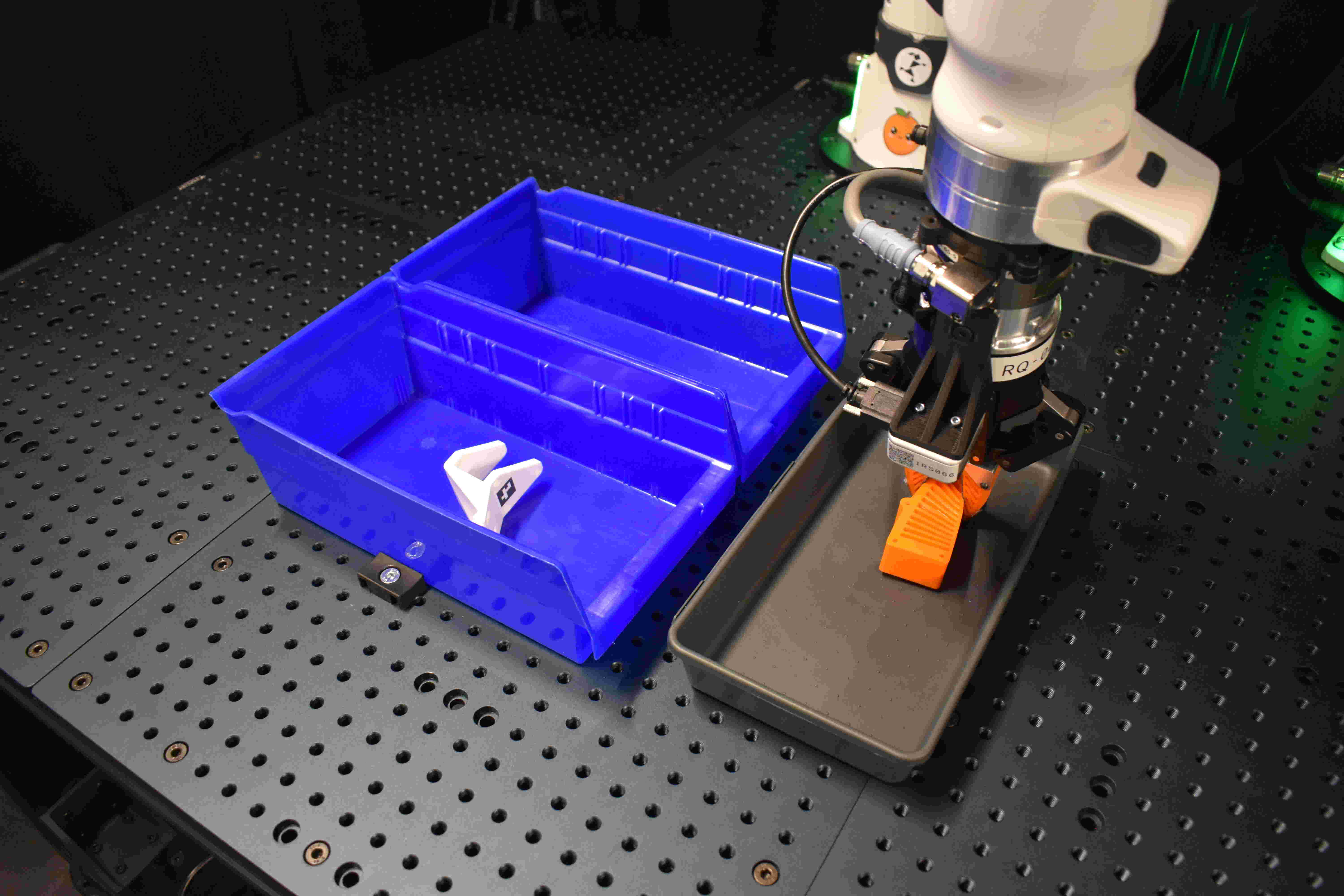}}
    \subfloat[Pick \#2
    ]{\includegraphics[width=0.2\linewidth]{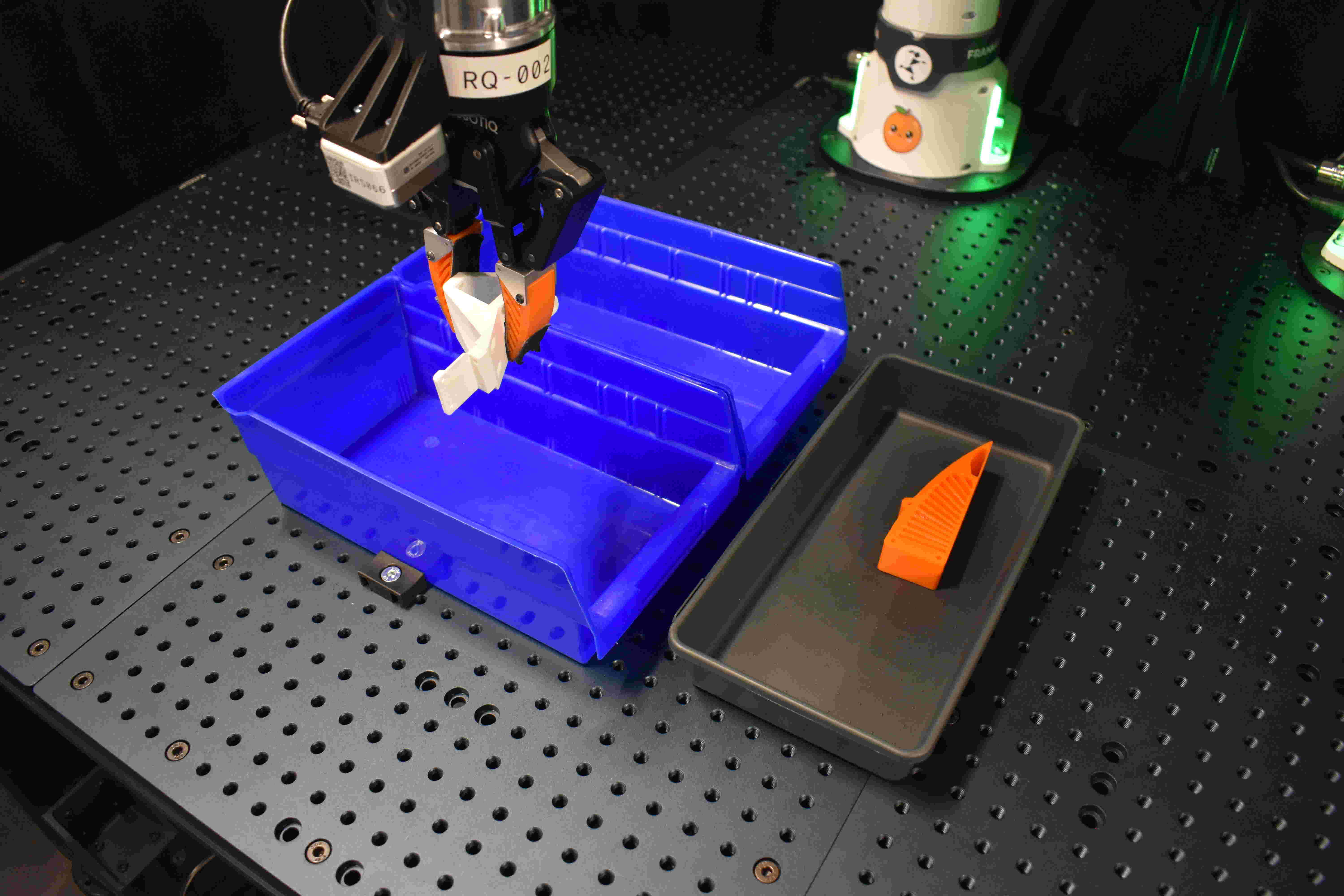}}
    \subfloat[Place \#2 (done)
    ]{\includegraphics[width=0.2\linewidth]{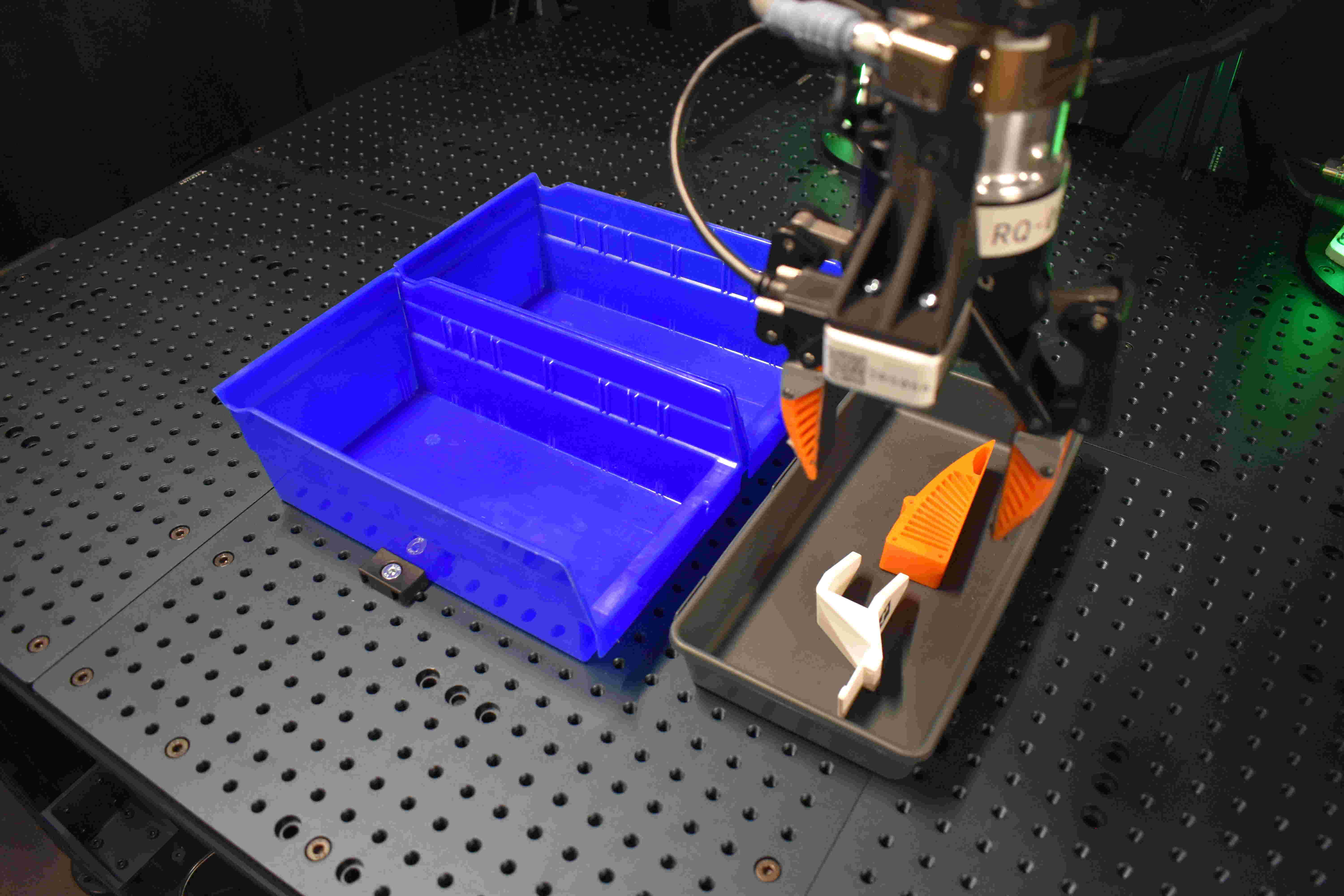}}
    \hfill
    \vspace{1em}
    \subfloat[Kitting-Modified (start)
    ]{\includegraphics[width=0.2\linewidth]{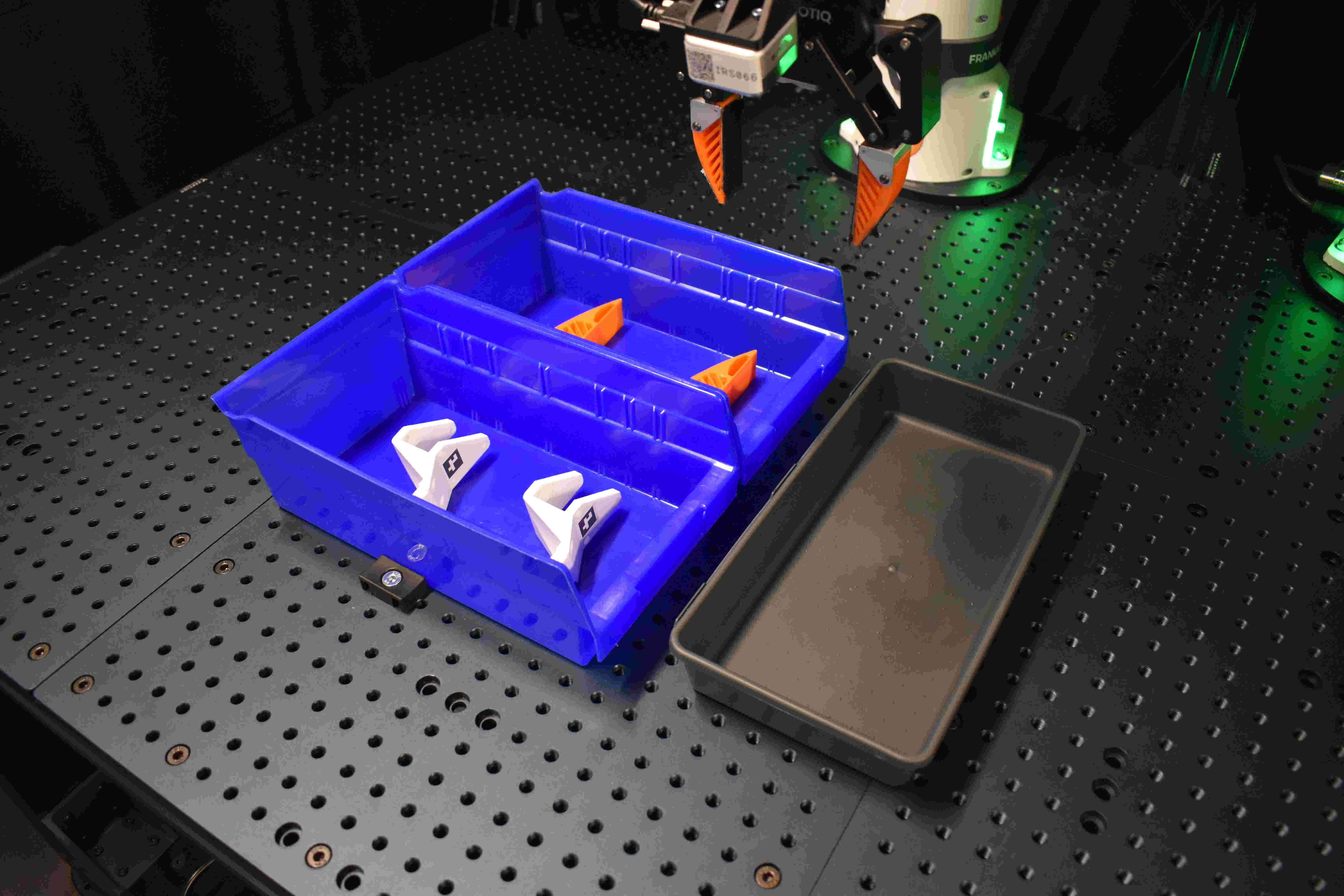}}
    \subfloat[Pick \#1
    ]{\includegraphics[width=0.2\linewidth]{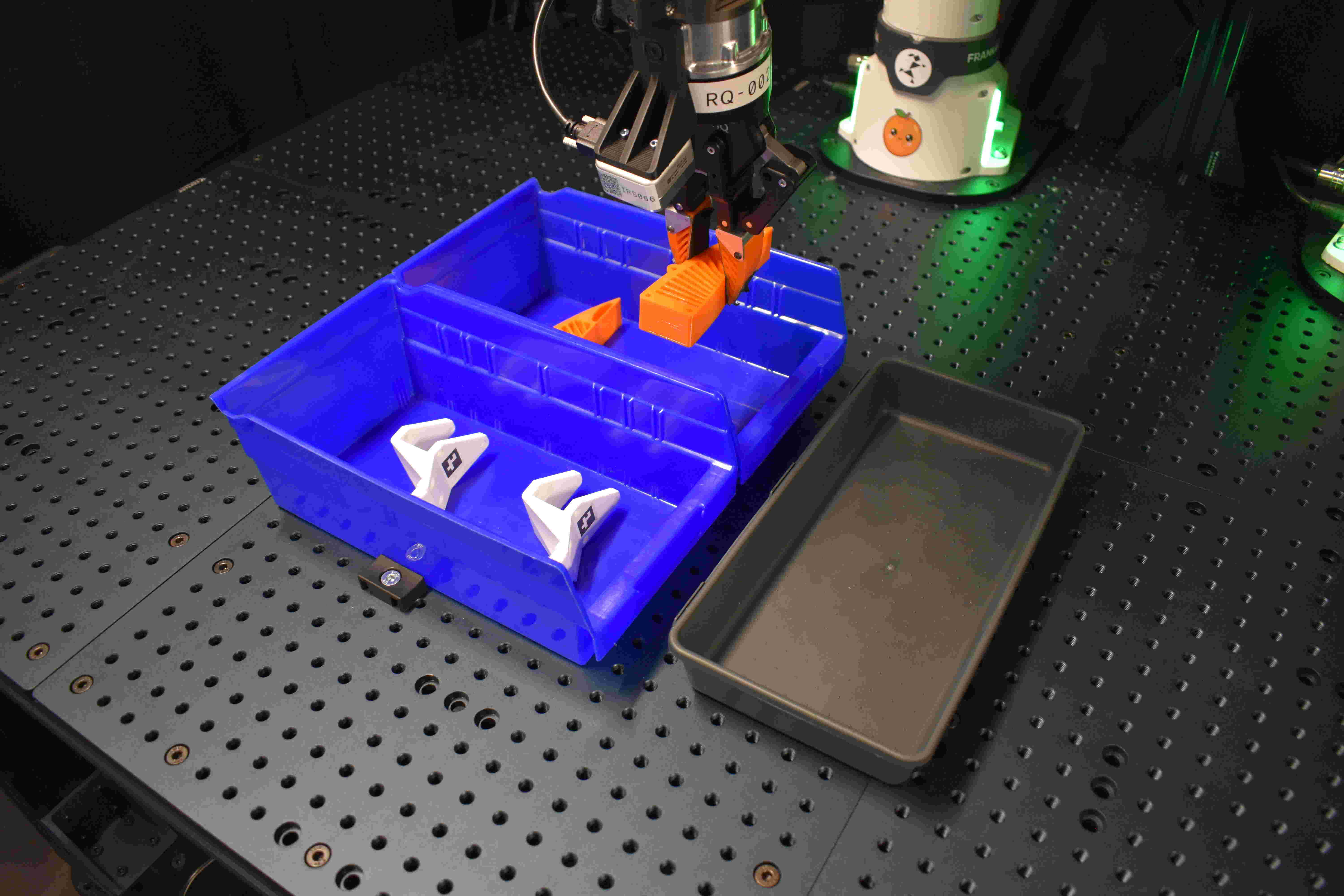}}
    \subfloat[Place \#1
    ]{\includegraphics[width=0.2\linewidth]{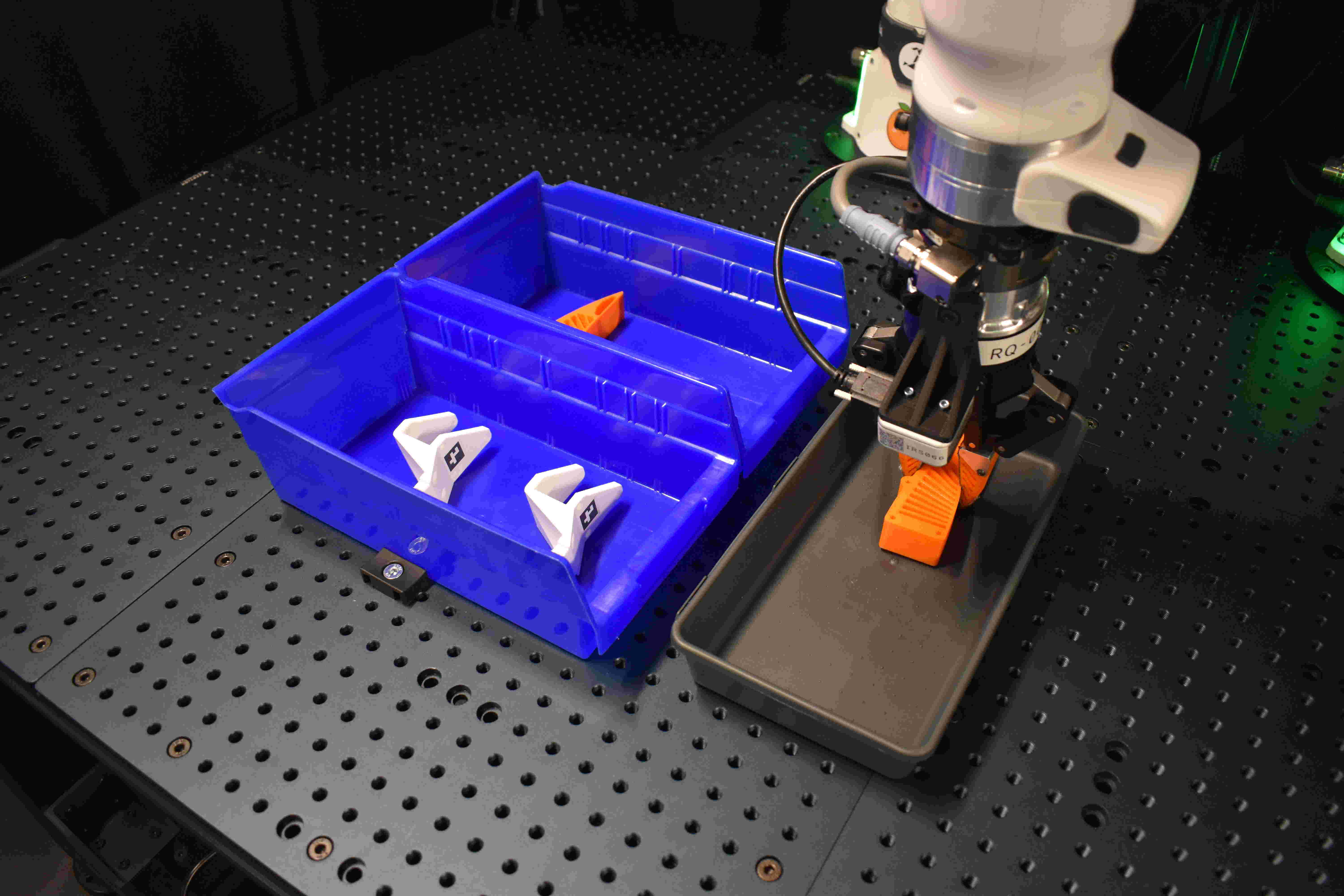}}
    \subfloat[Pick \#2
    ]{\includegraphics[width=0.2\linewidth]{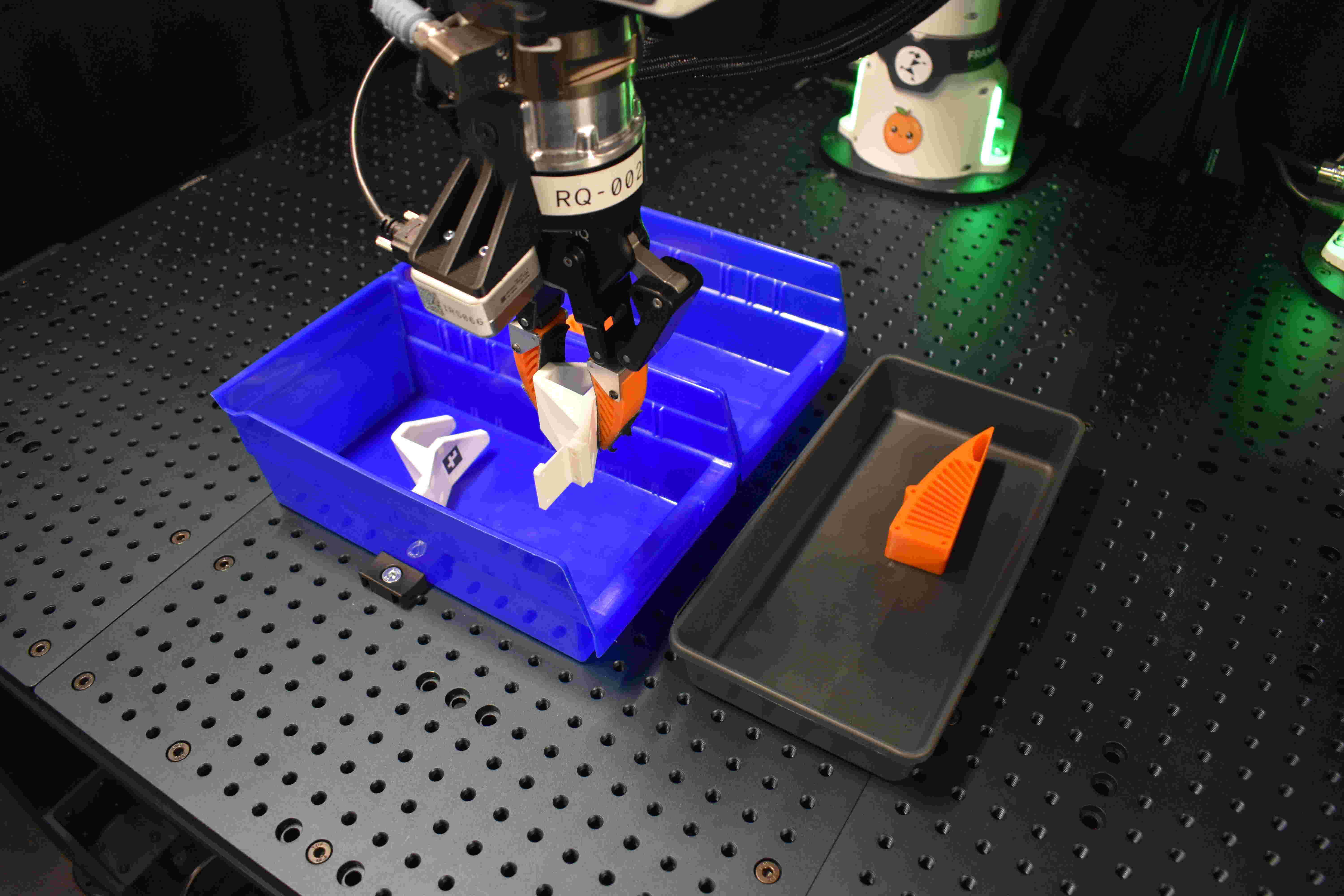}}
    \subfloat[Place \#2 (done)
    ]{\includegraphics[width=0.2\linewidth]{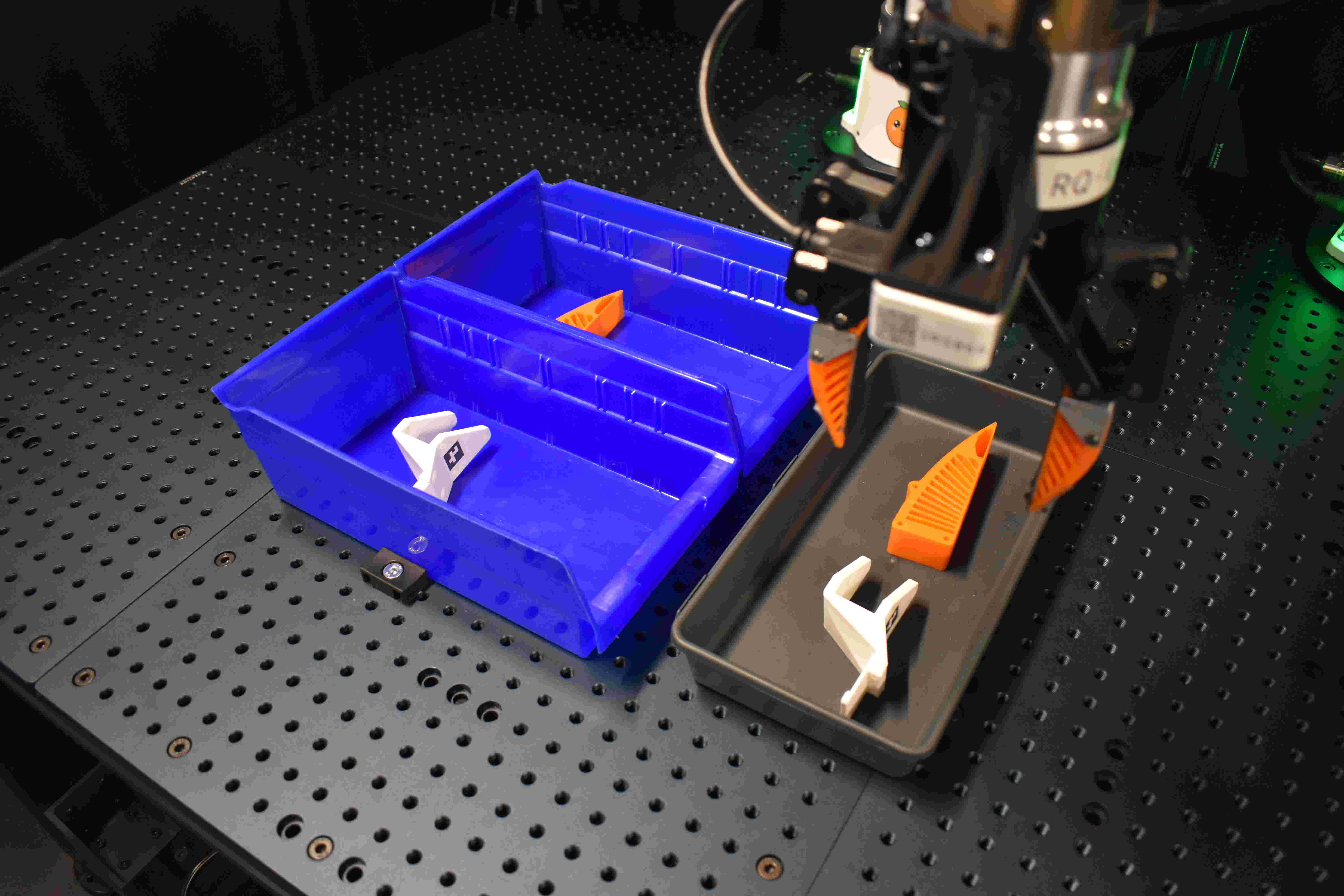}}
    \caption{
    Key-frame images for each task, showing important events in successful trajectories. 
    In Kitting-Original (third row), one part is placed in the center of each bin. 
    In Kitting-Modified (last row), two parts are placed in each bin, and neither are at the center location. 
    Kitting-Modified tests adaptation to task distribution shift, where the goal of picking two pieces into the kitting tray is the same, but the task conditions have changed.
    }
    \label{fig:real_kitting}
\end{figure*}

\textbf{Peg Insertion}.
In this task, the robot inserts a 3D-printed peg into a board. 
The peg tolerance for mating is ~1-2~\unit{\mm} and the insertion length is ~50~\unit{\mm}. 
We evaluate on this task using objects from the Functional Manipulation Assembly~\cite{luo2025fmb} benchmark to provide a comparison similar to recent work that report real-world RL experiments~\cite{luo2024serl, zhou2025efficient}.
However, note that differences in action space such as the size of the safety box, range of allowed end effector position and orientation, and other task parameters mean the results across works are not directly comparable. 
Hence we run baselines on our task configuration to provide a fair comparison. 

\subsubsection*{Starting Configuration}
The board is fixed on the table for all episodes.  
The peg starts grasped by the gripper.
Before the start of every episode, the gripper pose is randomized by sampling a random delta pose action and executing it three times.

\subsubsection*{Success Condition}
The peg is fully inserted into the board.

\textbf{Pipe Assembly}.
In this task, the robot picks up a PVC pipe and inserts into a plumbing fixture. 
The tolerance for mating is ~1-2~\unit{\mm} and the insertion length is ~57~\unit{\mm}.
Besides evaluating insertion with commercially available parts, this task differs from the prior Peg Insertion task in the following ways: (1) the PVC pipe starts on the table and must be grasped before attempting insertion; and (2) the fixture is not aligned with the initial pose of the pipe, so inserting the pipe requires gripper rotation, or else the pipe will jam.

\subsubsection*{Starting Configuration}
The pipe, gripper, and fixture start in the same poses for this task.
Despite having less initial variation in this task compared to Peg Insertion, we observe that this task has lower BC policy success rates as it requires grasping followed by insertion.
The average number of steps required to achieve the task is double that of Peg Insertion. 

\subsubsection*{Success Condition}
The pipe is fully inserted into the fixture.
A black line is drawn on the pipe to help the human operator identify when the pipe is fully inserted.

\begin{table*}[t!]
\centering
\caption{Real-World Evaluation Results}
\label{tab:real_results}
\setlength{\tabcolsep}{2.5pt} 

\begin{tabular}{l cc  cc  cc}
\toprule
& \multicolumn{2}{c}{Peg Insertion}
& \multicolumn{2}{c}{Pipe Assembly}
& \multicolumn{2}{c}{Kitting-Modified} \\
\cmidrule(lr){2-3}
\cmidrule(lr){4-5}
\cmidrule(lr){6-7}

Method
& Success Rate
    & Online Learner Steps
& Success Rate
    & Online Learner Steps
& Success Rate
    & Online Learner Steps\\
\midrule

BC Policy 
    & 0.70 
        & --
    & 0.20 
        & --
    & 0.35 
        & -- \\
IBRL~\cite{hu2023imitation} 
    & \textbf{0.95} 
        & 40k
    & 0.0 
        & 90k
    & 0.0 
        & 165k\\
Q2RL (Ours) 
    & \textbf{1.0} 
        & 60k
    & \textbf{0.75} 
        & 90k
    & \textbf{0.70} 
        & 165k\\

\bottomrule
\end{tabular}
\vspace{0.5em}
\caption*{Success Rate over 20 trials. Results are reported for the best evaluated checkpoint.}
\end{table*}

\textbf{Kitting}.
In this task, the robot takes a robot part from each bin and places it into a kitting tray. 
Performing this task requires two pick and place actions.
This task is the longest of the three tasks, requiring on average nearly double the number of steps as Pipe Assembly.

There are two versions of this task: Kitting-Original and Kitting-Modified. 
In Kitting-Original, only one object part is in each bin at the start of an episode; in Kitting-Modified, each bin starts with two identical parts.  
See Fig.~\ref{fig:real_images} for key-frame images of Kitting-Original and Kitting-Modified. 

The BC policy for this task is trained on demos from the Kitting-Original task. 
When evaluated on Kitting-Original, the BC policy achieves 0.95 success rate, but in the modified setting, the success rate drops to 0.35. 
We evaluate on Kitting-Modified to determine how well each method adapts to task distribution shift.
For \MethodName{}, the Q-estimation phase is provided with 50 episodes from the Kitting-Original setting, and the online replay buffer is seeded with 30 episodes from the Kitting-Modified setting. 
For IBRL, which does not have an initial Q-estimation phase, the online replay buffer is seeded with 30 episodes from the Kitting-Modified setting. 

\subsubsection*{Starting Configuration}
The bins and kitting tray start in the same location for both Kitting-Original and Kitting-Modified.
In Kitting-Original, the sole object part in each bin is placed approximately in the center of the bin with minimal rotation.
In Kitting-Modified, the two parts in each bin are placed so that the space in the bin is divided into thirds, so that neither of the parts in each bin are in the same location as in Kitting-Original. 
The robot gripper starts at the same initial pose for each episode. 

\subsubsection*{Success Condition}
One of each part is placed in the kitting tray. 
In Modified Kitting, where there are two identical parts in each bin, it does not matter which part is picked.

\subsection{Behavior Cloning Policy Details}

The BC policies for real world tasks are all Gaussian Mixture Models with ImageNet pre-trained ResNet-10~\cite{he2016deep} vision encoders.
For training the BC policies, we use hyperparameters provided by robomimic for the Can-Image task.
All BC policies were trained on a single v100 GPU.
Table~\ref{tab:bc_hyperparameters} contains hyperparameters for the BC policy and BC loss weights.

\subsection{Reinforcement Learning Agent Details}

All RL agents for real world tasks use a small convolutional neural network as the vision encoder followed by a Gaussian MLP head. 
The agents were trained and evaluated using stochastic actions, i.e. the actions were sampled from the Gaussian distribution output by the agent. 
We found that stochastic actions performed better for our real-world, contact-rich manipulation tasks.
Table~\ref{tab:rl_hyperparameters} contains hyperparameters for RL agents. 
On-robot RL training was run on a single A6000 GPU.

\subsection{Additional Real World Results}

Tab.~\ref{tab:real_results} contains additional details for the results in Fig.~\ref{fig:real_exps}.
We report the best success rate per task for the evaluated checkpoints.
Online learner steps reported in our tables correspond to a single high-UTD learner update. For example, with UTD = 4, one learner update consists of four critic updates and one actor update.
The learner steps for \MethodName{} contains the Q-estimation steps as well.
Refer to the supplementary video for more qualitative results.

\begin{table}[H]
\centering
\caption{Peg Insertion without Seeded Replay Buffer}
\label{tab:peg_insert_no_data}
\setlength{\tabcolsep}{2.5pt} 

\begin{tabular}{l cc }
\toprule
& \multicolumn{2}{c}{Peg Insertion, No Data}\\
\cmidrule(lr){2-3}

Method
& Success Rate
    & Online Learner Steps \\
\midrule

BC Policy
    & 0.70 
        & -- \\
CalQL~\cite{nakamoto2023cal} Deterministic
    & 0.10 
    & -- \\
CalQL~\cite{nakamoto2023cal} Stochastic
    & 0.20  
    & -- \\
IBRL~\cite{hu2023imitation} 
    & 0.0 
    & 20k \\
Q2RL (Ours)
    & \textbf{1.00} 
    & 20k \\

\bottomrule
\end{tabular}
\vspace{0.3em}
\caption*{All methods either do not do online learning or start with no data in the online replay buffer. Success rate is over 20 trials. Results are reported for the best evaluated checkpoint.}
\end{table}

\subsection{Baseline: Real World CalQL}

We report results on the Peg Insertion task for methods that don't use prior data to seed the online replay buffer in Table~\ref{tab:peg_insert_no_data}. 
In this table, we include results for CalQL~\cite{nakamoto2023cal}, an offline RL baseline that estimates a conservative Q-function from offline data. 
We train CalQL on the dataset of successful human demonstrations that we used to train the BC policy, as that is the data that would be available in our setting.
CalQL was trained with 250k offline training steps. 
We evaluated both stochastic and deterministic variants of CalQL. 
The success rates for both variants was much lower than training a BC policy's 70\%. 
We also observe that stochastic CalQL commanded high-jerk actions that were not in the original demonstration dataset. 
These high-jerk actions resulted in 4 safety violations during evaluation.
These results suggest that offline RL is unable to learn a good Q-function from small datasets of successful demonstrations that are commonly used to train behavior cloning policies.
\MethodName{} was able to improve upon the performance of the original BC policy to achieve 100\% success over 20 trials. 
\MethodName{} leverages the fact that BC policies only require successful demonstrations for training, then uses the trained BC policy to estimate a Q-function $\hat{Q}_{\rm BC}$ (Sec.~\ref{sec:Q_estimation}). 
The estimated $\hat{Q}_{\rm BC}$ is used to train an improved policy using online RL with Q-Gating (Sec.~\ref{sec:Q_gating}).
\putbib
\end{bibunit}


\begin{thebibliography}{43}
\providecommand{\natexlab}[1]{#1}
\providecommand{\url}[1]{\texttt{#1}}
\expandafter\ifx\csname urlstyle\endcsname\relax
  \providecommand{\doi}[1]{doi: #1}\else
  \providecommand{\doi}{doi: \begingroup \urlstyle{rm}\Url}\fi

\bibitem[Agarwal et~al.(2020)Agarwal, Schuurmans, and Norouzi]{agarwal2020optimistic}
Rishabh Agarwal, Dale Schuurmans, and Mohammad Norouzi.
\newblock An optimistic perspective on offline reinforcement learning.
\newblock In \emph{International conference on machine learning}, pages 104--114. PMLR, 2020.

\bibitem[Ball et~al.(2023)Ball, Smith, Kostrikov, and Levine]{ball2023efficient}
Philip~J Ball, Laura Smith, Ilya Kostrikov, and Sergey Levine.
\newblock Efficient online reinforcement learning with offline data.
\newblock In \emph{International Conference on Machine Learning}, pages 1577--1594. PMLR, 2023.

\bibitem[Biza et~al.(2025)Biza, Weng, Sun, Schmeckpeper, Kelestemur, Ma, Platt, van~de Meent, and Wong]{11128466}
Ondrej Biza, Thomas Weng, Lingfeng Sun, Karl Schmeckpeper, Tarik Kelestemur, Yecheng~Jason Ma, Robert Platt, Jan-Willem van~de Meent, and Lawson~L.S. Wong.
\newblock On-robot reinforcement learning with goal-contrastive rewards.
\newblock In \emph{2025 IEEE International Conference on Robotics and Automation (ICRA)}, pages 4797--4805, 2025.
\newblock \doi{10.1109/ICRA55743.2025.11128466}.

\bibitem[Bobu et~al.(2018)Bobu, Bajcsy, Fisac, and Dragan]{bobu2018learning}
Andreea Bobu, Andrea Bajcsy, Jaime~F Fisac, and Anca~D Dragan.
\newblock Learning under misspecified objective spaces.
\newblock In \emph{Conference on Robot Learning}, pages 796--805. PMLR, 2018.

\bibitem[Chen et~al.()Chen, Adebola, and Goldberg]{BerkeleyUR5Website}
Lawrence~Yunliang Chen, Simeon Adebola, and Ken Goldberg.
\newblock Berkeley {UR5} demonstration dataset.
\newblock \url{https://sites.google.com/view/berkeley-ur5/home}.

\bibitem[Chi et~al.(2025)Chi, Xu, Feng, Cousineau, Du, Burchfiel, Tedrake, and Song]{chi2025diffusion}
Cheng Chi, Zhenjia Xu, Siyuan Feng, Eric Cousineau, Yilun Du, Benjamin Burchfiel, Russ Tedrake, and Shuran Song.
\newblock Diffusion policy: Visuomotor policy learning via action diffusion.
\newblock \emph{The International Journal of Robotics Research}, 44\penalty0 (10-11):\penalty0 1684--1704, 2025.

\bibitem[Dodeja et~al.(2025)Dodeja, Schmeckpeper, Vats, Weng, Jia, Konidaris, and Tellex]{dodeja2025accelerating}
Lakshita Dodeja, Karl Schmeckpeper, Shivam Vats, Thomas Weng, Mingxi Jia, George Konidaris, and Stefanie Tellex.
\newblock Accelerating residual reinforcement learning with uncertainty estimation.
\newblock \emph{arXiv preprint arXiv:2506.17564}, 2025.

\bibitem[Fu et~al.(2020)Fu, Kumar, Nachum, Tucker, and Levine]{fu2020d4rl}
Justin Fu, Aviral Kumar, Ofir Nachum, George Tucker, and Sergey Levine.
\newblock D4rl: Datasets for deep data-driven reinforcement learning.
\newblock \emph{arXiv preprint arXiv:2004.07219}, 2020.

\bibitem[Fujimoto et~al.(2019)Fujimoto, Conti, Ghavamzadeh, and Pineau]{fujimoto2019benchmarking}
Scott Fujimoto, Edoardo Conti, Mohammad Ghavamzadeh, and Joelle Pineau.
\newblock Benchmarking batch deep reinforcement learning algorithms.
\newblock \emph{arXiv preprint arXiv:1910.01708}, 2019.

\bibitem[Garg et~al.(2021)Garg, Chakraborty, Cundy, Song, and Ermon]{garg2021iq}
Divyansh Garg, Shuvam Chakraborty, Chris Cundy, Jiaming Song, and Stefano Ermon.
\newblock Iq-learn: Inverse soft-q learning for imitation.
\newblock \emph{Advances in Neural Information Processing Systems}, 34:\penalty0 4028--4039, 2021.

\bibitem[Haarnoja et~al.(2018)Haarnoja, Zhou, Abbeel, and Levine]{haarnoja2018soft}
Tuomas Haarnoja, Aurick Zhou, Pieter Abbeel, and Sergey Levine.
\newblock Soft actor-critic: Off-policy maximum entropy deep reinforcement learning with a stochastic actor.
\newblock In \emph{International conference on machine learning}, pages 1861--1870. Pmlr, 2018.

\bibitem[He et~al.(2016)He, Zhang, Ren, and Sun]{he2016deep}
Kaiming He, Xiangyu Zhang, Shaoqing Ren, and Jian Sun.
\newblock Deep residual learning for image recognition.
\newblock In \emph{Proceedings of the IEEE conference on computer vision and pattern recognition}, pages 770--778, 2016.

\bibitem[Hu et~al.(2024)Hu, Mirchandani, and Sadigh]{hu2023imitation}
Hengyuan Hu, Suvir Mirchandani, and Dorsa Sadigh.
\newblock Imitation bootstrapped reinforcement learning.
\newblock \emph{Robotics: Science and Systems XX, Delft, The Netherlands, July 15-19, 2024}, 2024.
\newblock \doi{10.15607/RSS.2024.XX.056}.

\bibitem[Jiang et~al.(2025)Jiang, Fang, Roy, Lozano-P{\'e}rez, Kaelbling, and Ancha]{jiang2025streaming}
Sunshine Jiang, Xiaolin Fang, Nicholas Roy, Tom{\'a}s Lozano-P{\'e}rez, Leslie~Pack Kaelbling, and Siddharth Ancha.
\newblock Streaming flow policy: Simplifying diffusion $/$ flow-matching policies by treating action trajectories as flow trajectories.
\newblock \emph{arXiv preprint arXiv:2505.21851}, 2025.

\bibitem[Johannink et~al.(2019)Johannink, Bahl, Nair, Luo, Kumar, Loskyll, Ojea, Solowjow, and Levine]{johannink2019residual}
Tobias Johannink, Shikhar Bahl, Ashvin Nair, Jianlan Luo, Avinash Kumar, Matthias Loskyll, Juan~Aparicio Ojea, Eugen Solowjow, and Sergey Levine.
\newblock Residual reinforcement learning for robot control.
\newblock In \emph{2019 international conference on robotics and automation (ICRA)}, pages 6023--6029. IEEE, 2019.

\bibitem[Kelly et~al.(2019)Kelly, Sidrane, Driggs{-}Campbell, and Kochenderfer]{kelly19hgdagger}
Michael Kelly, Chelsea Sidrane, Katherine~Rose Driggs{-}Campbell, and Mykel~J. Kochenderfer.
\newblock Hg-dagger: Interactive imitation learning with human experts.
\newblock In \emph{International Conference on Robotics and Automation, {ICRA} 2019, Montreal, QC, Canada, May 20-24, 2019}, pages 8077--8083. {IEEE}, 2019.

\bibitem[Kolchinsky and Tracey(2017)]{kolchinsky2017estimating}
Artemy Kolchinsky and Brendan~D Tracey.
\newblock Estimating mixture entropy with pairwise distances.
\newblock \emph{Entropy}, 19\penalty0 (7):\penalty0 361, 2017.

\bibitem[Kumar et~al.(2020)Kumar, Zhou, Tucker, and Levine]{kumar2020conservative}
Aviral Kumar, Aurick Zhou, George Tucker, and Sergey Levine.
\newblock Conservative q-learning for offline reinforcement learning.
\newblock \emph{Advances in neural information processing systems}, 33:\penalty0 1179--1191, 2020.

\bibitem[Laidlaw and Dragan(2022)]{laidlaw2022boltzmann}
Cassidy Laidlaw and Anca Dragan.
\newblock The boltzmann policy distribution: Accounting for systematic suboptimality in human models.
\newblock In \emph{ICLR}, 2022.

\bibitem[Lei et~al.(2026)Lei, Li, Yu, Wei, Guo, Jiang, Wang, Liang, and Xu]{lei2026rl100performantroboticmanipulation}
Kun Lei, Huanyu Li, Dongjie Yu, Zhenyu Wei, Lingxiao Guo, Zhennan Jiang, Ziyu Wang, Shiyu Liang, and Huazhe Xu.
\newblock Rl-100: Performant robotic manipulation with real-world reinforcement learning, 2026.
\newblock URL \url{https://arxiv.org/abs/2510.14830}.

\bibitem[Levine et~al.(2020)Levine, Kumar, Tucker, and Fu]{levine2020offline}
Sergey Levine, Aviral Kumar, George Tucker, and Justin Fu.
\newblock Offline reinforcement learning: Tutorial, review, and perspectives on open problems.
\newblock \emph{arXiv preprint arXiv:2005.01643}, 2020.

\bibitem[Lipman et~al.(2022)Lipman, Chen, Ben-Hamu, Nickel, and Le]{lipman2022flow}
Yaron Lipman, Ricky~TQ Chen, Heli Ben-Hamu, Maximilian Nickel, and Matt Le.
\newblock Flow matching for generative modeling.
\newblock \emph{arXiv preprint arXiv:2210.02747}, 2022.

\bibitem[Luce et~al.(1959)]{luce1959individual}
R~Duncan Luce et~al.
\newblock \emph{Individual choice behavior}, volume~4.
\newblock Wiley New York, 1959.

\bibitem[Luo et~al.(2024)Luo, Hu, Xu, Tan, Berg, Sharma, Schaal, Finn, Gupta, and Levine]{luo2024serl}
Jianlan Luo, Zheyuan Hu, Charles Xu, You~Liang Tan, Jacob Berg, Archit Sharma, Stefan Schaal, Chelsea Finn, Abhishek Gupta, and Sergey Levine.
\newblock Serl: A software suite for sample-efficient robotic reinforcement learning.
\newblock In \emph{2024 IEEE International Conference on Robotics and Automation (ICRA)}, pages 16961--16969. IEEE, 2024.

\bibitem[Luo et~al.(2025{\natexlab{a}})Luo, Xu, Liu, Tan, Lin, Wu, Abbeel, and Levine]{luo2025fmb}
Jianlan Luo, Charles Xu, Fangchen Liu, Liam Tan, Zipeng Lin, Jeffrey Wu, Pieter Abbeel, and Sergey Levine.
\newblock Fmb: a functional manipulation benchmark for generalizable robotic learning.
\newblock \emph{The International Journal of Robotics Research}, 44\penalty0 (4):\penalty0 592--606, 2025{\natexlab{a}}.

\bibitem[Luo et~al.(2025{\natexlab{b}})Luo, Xu, Wu, and Levine]{luo2025precise}
Jianlan Luo, Charles Xu, Jeffrey Wu, and Sergey Levine.
\newblock Precise and dexterous robotic manipulation via human-in-the-loop reinforcement learning.
\newblock \emph{Science Robotics}, 10\penalty0 (105):\penalty0 eads5033, 2025{\natexlab{b}}.

\bibitem[Mainprice et~al.(2015)Mainprice, Hayne, and Berenson]{7139282}
Jim Mainprice, Rafi Hayne, and Dmitry Berenson.
\newblock Predicting human reaching motion in collaborative tasks using inverse optimal control and iterative re-planning.
\newblock In \emph{2015 IEEE International Conference on Robotics and Automation (ICRA)}, pages 885--892, 2015.
\newblock \doi{10.1109/ICRA.2015.7139282}.

\bibitem[Mandlekar et~al.(2020)Mandlekar, Xu, Mart{\'\i}n-Mart{\'\i}n, Savarese, and Fei-Fei]{mandlekar2020learning}
Ajay Mandlekar, Danfei Xu, Roberto Mart{\'\i}n-Mart{\'\i}n, Silvio Savarese, and Li~Fei-Fei.
\newblock Learning to generalize across long-horizon tasks from human demonstrations.
\newblock \emph{arXiv preprint arXiv:2003.06085}, 2020.

\bibitem[Mandlekar et~al.(2021)Mandlekar, Xu, Wong, Nasiriany, Wang, Kulkarni, Fei-Fei, Savarese, Zhu, and Mart{\'\i}n-Mart{\'\i}n]{mandlekar2021matters}
Ajay Mandlekar, Danfei Xu, Josiah Wong, Soroush Nasiriany, Chen Wang, Rohun Kulkarni, Li~Fei-Fei, Silvio Savarese, Yuke Zhu, and Roberto Mart{\'\i}n-Mart{\'\i}n.
\newblock What matters in learning from offline human demonstrations for robot manipulation.
\newblock \emph{arXiv preprint arXiv:2108.03298}, 2021.

\bibitem[Nakamoto et~al.(2023)Nakamoto, Zhai, Singh, Sobol~Mark, Ma, Finn, Kumar, and Levine]{nakamoto2023cal}
Mitsuhiko Nakamoto, Simon Zhai, Anikait Singh, Max Sobol~Mark, Yi~Ma, Chelsea Finn, Aviral Kumar, and Sergey Levine.
\newblock Cal-ql: Calibrated offline rl pre-training for efficient online fine-tuning.
\newblock \emph{Advances in Neural Information Processing Systems}, 36:\penalty0 62244--62269, 2023.

\bibitem[Nasiriany et~al.(2022)Nasiriany, Gao, Mandlekar, and Zhu]{nasiriany2022learning}
Soroush Nasiriany, Tian Gao, Ajay Mandlekar, and Yuke Zhu.
\newblock Learning and retrieval from prior data for skill-based imitation learning.
\newblock \emph{arXiv preprint arXiv:2210.11435}, 2022.

\bibitem[Rajeswaran et~al.(2018)Rajeswaran, Kumar, Gupta, Vezzani, Schulman, Todorov, and Levine]{Rajeswaran-RSS-18}
Aravind Rajeswaran, Vikash Kumar, Abhishek Gupta, Giulia Vezzani, John Schulman, Emanuel Todorov, and Sergey Levine.
\newblock {Learning Complex Dexterous Manipulation with Deep Reinforcement Learning and Demonstrations}.
\newblock In \emph{Proceedings of Robotics: Science and Systems (RSS)}, 2018.

\bibitem[Ross and Bagnell(2010)]{ross2010efficient}
St{\'e}phane Ross and Drew Bagnell.
\newblock Efficient reductions for imitation learning.
\newblock In \emph{Proceedings of the thirteenth international conference on artificial intelligence and statistics}, pages 661--668. JMLR Workshop and Conference Proceedings, 2010.

\bibitem[Ross et~al.(2011)Ross, Gordon, and Bagnell]{ross2011reduction}
St{\'e}phane Ross, Geoffrey Gordon, and Drew Bagnell.
\newblock A reduction of imitation learning and structured prediction to no-regret online learning.
\newblock In \emph{Proceedings of the fourteenth international conference on artificial intelligence and statistics}, pages 627--635. JMLR Workshop and Conference Proceedings, 2011.

\bibitem[Sheebaelhamd et~al.(2025)Sheebaelhamd, Tschannen, Muehlebach, and Vernade]{sheebaelhamd2025quantization}
Ziyad Sheebaelhamd, Michael Tschannen, Michael Muehlebach, and Claire Vernade.
\newblock Quantization-free autoregressive action transformer.
\newblock \emph{arXiv preprint arXiv:2503.14259}, 2025.

\bibitem[Silver et~al.(2018)Silver, Allen, Tenenbaum, and Kaelbling]{silver2018residual}
Tom Silver, Kelsey Allen, Josh Tenenbaum, and Leslie Kaelbling.
\newblock Residual policy learning.
\newblock \emph{arXiv preprint arXiv:1812.06298}, 2018.

\bibitem[Tan(2024)]{agentlace2024}
You~Liang Tan.
\newblock Agentlace, framework for distributed agent policy, May 2024.
\newblock URL \url{https://github.com/youliangtan/agentlace}.

\bibitem[Team et~al.(2024)Team, Ghosh, Walke, Pertsch, Black, Mees, Dasari, Hejna, Kreiman, Xu, et~al.]{team2024octo}
Octo~Model Team, Dibya Ghosh, Homer Walke, Karl Pertsch, Kevin Black, Oier Mees, Sudeep Dasari, Joey Hejna, Tobias Kreiman, Charles Xu, et~al.
\newblock Octo: An open-source generalist robot policy.
\newblock \emph{arXiv preprint arXiv:2405.12213}, 2024.

\bibitem[Wu et~al.(2022)Wu, Escontrela, Hafner, Goldberg, and Abbeel]{wu2022daydreamer}
Philipp Wu, Alejandro Escontrela, Danijar Hafner, Ken Goldberg, and Pieter Abbeel.
\newblock Daydreamer: World models for physical robot learning.
\newblock \emph{Conference on Robot Learning}, 2022.

\bibitem[Yarats et~al.(2021)Yarats, Kostrikov, and Fergus]{yarats2021image}
Denis Yarats, Ilya Kostrikov, and Rob Fergus.
\newblock Image augmentation is all you need: Regularizing deep reinforcement learning from pixels.
\newblock In \emph{International conference on learning representations}, 2021.

\bibitem[Yuan et~al.(2024)Yuan, Mu, Tao, Fang, Zhang, and Su]{yuan2024policy}
Xiu Yuan, Tongzhou Mu, Stone Tao, Yunhao Fang, Mengke Zhang, and Hao Su.
\newblock Policy decorator: Model-agnostic online refinement for large policy model.
\newblock \emph{arXiv preprint arXiv:2412.13630}, 2024.

\bibitem[Zhang et~al.(2018)Zhang, McCarthy, Jow, Lee, Chen, Goldberg, and Abbeel]{zhang2018deep}
Tianhao Zhang, Zoe McCarthy, Owen Jow, Dennis Lee, Xi~Chen, Ken Goldberg, and Pieter Abbeel.
\newblock Deep imitation learning for complex manipulation tasks from virtual reality teleoperation.
\newblock In \emph{2018 IEEE international conference on robotics and automation (ICRA)}, pages 5628--5635. Ieee, 2018.

\bibitem[Zhou et~al.(2025)Zhou, Peng, Li, Levine, and Kumar]{zhou2025efficient}
Zhiyuan Zhou, Andy Peng, Qiyang Li, Sergey Levine, and Aviral Kumar.
\newblock Efficient online reinforcement learning fine-tuning need not retain offline data.
\newblock In \emph{The Thirteenth International Conference on Learning Representations}, 2025.
\newblock URL \url{https://openreview.net/forum?id=HN0CYZbAPw}.

\end{thebibliography}
\end{document}